\newtheorem{theorem}{Theorem}
\newtheorem{property}{Prop.}
\newtheorem{lemma}{Lemma}
\newtheorem{definition}{Def.}
\newtheorem{assumption}{Assumption}
\author{Takeshi Teshima,\textsuperscript{\rm{1,2}}
Miao Xu,\textsuperscript{\rm{2}}
Issei Sato,\textsuperscript{\rm{1,2}}
Masashi Sugiyama\textsuperscript{\rm{2,1}}\\
\textsuperscript{\rm{1}}The University of Tokyo\\
\textsuperscript{\rm{2}}RIKEN \\
teshima@ms.k.u-tokyo.ac.jp,
miao.xu@riken.jp,
\{sato, sugi\}@k.u-tokyo.ac.jp}
\newcommand{\hide}[1]{}
\newcommand{\quotes}[1]{``#1''}
\def \pathIllust {.}
\def \pathSynth {.}
\def \pathVisual {.}
\def \Reg {\mathcal{R}}
\def \O {\mathbf{O}}
\def \mI {\mathbf{I}}
\def \X {\mathbf{X}}
\def \Indicator {\mathbbm{1}}
\def \I {\mathbf{I}}
\def \Z {\mathbf{Z}}
\def \W {\mathbf{W}}
\def \H {\mathbf{H}}
\def \Y {\mathbf{Y}}
\def \E {\mathbb{E}}
\def \P {\mathbb{P}}
\def \Proj {\mathcal{P}}
\def \Prob {\mathbb{P}}
\def \mP {\mathbf{P}}
\def \Q {\mathbf{Q}}
\def \M {\mathbf{M}}
\def \hM {\widehat{M}}
\def \L {\mathcal{L}}
\def \LO {\mathcal{L}_\Omega}
\def \A {\mathbf{A}}
\def \B {\mathbf{B}}
\def \U {\mathbf{U}}
\def \V {\mathbf{V}}
\def \mE {\mathbf{E}}
\def \mB {\mathbf{B}}
\def \mSigma{\mathbf{\Sigma}}
\def \bSigma{\bm\Sigma}
\def \p{\bm{p}}
\def \q{\bm{q}}
\def \e{\bm{e}}
\def \f{\bm{f}}
\def \clipped{\mathrm{c}}
\def \test{\mathrm{t}}
\def \valid{\mathrm{v}}
\def \F {\mathbf{F}}
\def \G {\mathbf{G}}
\def \bv {\bm{v}}
\def \Cf {\mbox{Clip}}
\def \svd{\mathrm{svd}}
\def \F {\mathrm{F}}
\def \Fro {\mathrm{F}}
\def \trnrm {\mathrm{tr}}
\def \op {\mathrm{op}}
\def \trace {\mathrm{tr}}
\def \hadamard {\odot}
\def \I {\mathcal{I}}
\def \P {\mathcal{P}}
\def \PA {\mathcal{P}_\A}
\def \PB {\mathcal{P}_\B}
\def \PC {\mathcal{P}^\mathrm{*}}
\def \PCij {P^\mathrm{*}_{ij}}
\def \PO {\mathcal{P}_\Omega}
\def \POk {\mathcal{P}_{\Omega_k}}
\def \POminusC {\mathcal{P}_{\Omega\setminus\mathcal{C}}}
\def \POC {{\mathcal{P}_{\Omega}^\mathrm{*}}}
\def \RB {\mathcal{R}_\C}
\def \RO {\mathcal{R}_\Omega}
\def \ROh {\RO^\half}
\def \ROk {\mathcal{R}_{\Omega_k}}
\def \ROkC {\mathcal{R}_{\Omega_k}^{\mathrm{*}}}
\def \PT {\mathcal{P}_T}
\def \PU {\mathcal{P}_\mathrm{U}}
\def \PV {\mathcal{P}_\mathrm{V}}
\def \PTT {\mathcal{P}_{T^\perp}}
\def \PnC {\mathcal{P}_{\TrueNonClipped}}
\def \A {{\Omega \setminus \mathcal{C}}}
\def \B {\mathcal{C}}
\def \C {\mathcal{C}}
\def \mE {\mathbf{E}}
\def \X {\mathbf{X}}
\def \Y {\mathbf{Y}}
\def \Z {\mathbf{Z}}
\def \W {\mathbf{W}}
\def \Wk {\mathbf{W}_k}
\def \Wkminus {\mathbf{W}_{k-1}}
\def \H {\mathbf{H}}
\def \M {\mathbf{M}}
\def \Mc {\mathbf{M}^\c}
\def \Mcij {{M^\c_{ij}}}
\def \Mij {{M_{ij}}}
\def \hM {\widehat{\M}}
\def \mO {\mathbf{O}}
\def \U {\mathbf{U}}
\def \V {\mathbf{V}}
\def \S {\mathbf{S}}
\def \UV {\U\V^\top}
\def \e {\bm{e}}
\def \f {\bm{f}}
\def \Eij {\e_i \f_j^\top}
\def \Eab {\e_a \f_b^\top}
\def \Dk {\Delta_k}
\def \Dkminus {\Delta_{k-1}}
\def \half {\frac{1}{2}}
\def \oij{\omega_{ij}}
\def \oijk{\omega_{ij}^{(k)}}
\def \bcij {{\bar{c}_{ij}}}
\def \bomega {\bm{\omega}}
\def \c{\mathrm{c}}
\def \Cf{\mathrm{Clip}}
\def \matsp{\mathbb{R}^{n_1 \times n_2}}
\def \indsp{[n_1] \times [n_2]}
\def \range {\mathrm{range}}
\def \sumij {\sum_{(i, j)}}
\def \sumk {\sum_{k=1}^L}
\def \fullij {{(i, j) \in [n_1] \times [n_2]}}
\def \sumO {\sum_{(i, j) \in \Omega}}
\def \maxij {\max_{(i, j)}}
\def \supG {{\sup_{\X \in G}}}
\def \Order {\mathcal{O}}
\def \normbar {\|}
\def \pmin {p_{\mathrm{min}}}
\def \nuC {\nu_{\TrueNonClipped}}
\def \rhoFro {{\rho_\Fro}}
\def \rhoInfty {{\rho_\infty}}
\def \rhoOp {{\rho_\op}}
\def \TrueNonClipped {{\mathcal{B}}}
\def \McOmega {{\Mc_\Omega}}
\def \fone {{\mathrm{f}_1}}
\def \Uij {{\overline{C}_{ij}}}
\def \Lij {{\underline{C}_{ij}}}
\def \UCoherence {{\mu^\mathrm{U}}}
\def \VCoherence {{\mu^\mathrm{V}}}
\def \unnCoherence {{\mu}}
\def \jointCoherence {{\mu_1}}
\def \USp{\mathbb{R}^{n_1 \times r}}
\def \SigmaSp{\mathbb{R}^{r \times r}}
\def \VSp{\mathbb{R}^{r \times n_2}}
\def \Psp{\mathbb{R}^{n_1 \times k}}
\def \Qsp{\mathbb{R}^{n_2 \times k}}
\def \fMC {f^{\mathrm{MC}}}
\def \fCMC {f^{\mathrm{CMC}}}
\def \pMinFro {\pmin^{\mathrm{F}}}
\def \pMinOpOne {\pmin^{\mathrm{op,1}}}
\def \pMinOpTwo {\pmin^{\mathrm{op,2}}}
\def \pMinInfty {\pmin^{\infty}}
\def \pMinMain {\pmin^{\mathrm{main}}}
\def \muG {{\mu_G}}
\def \cAllRho {{c_\rho}}
\def \uvupperbound {\frac{\sqrt{p}}{2 \sqrt{2}}}
\def \opnormupperbound {\frac{1}{2}}
\def \kZeroValue {\left\lceil \log_2 (2\sqrt{2}\sqrt{n_1 n_2 r})\right\rceil}
\def \halfPowerKZeroValue {{\sqrt{\frac{1}{n_1 n_2 r}} \frac{1}{2 \sqrt{2}}}}
\def \uvupperboundTimesSqrtR {\sqrt{\frac{p}{r}} \frac{1}{2 \sqrt{2}}}
\def \constFeasibilityOne {\max\biggl\{\frac{24}{(1/2 - \rhoFro)^2}, \frac{8}{(1/4 - \rhoOp)^2},}
\def \constFeasibilityTwo {\frac{8}{(1/2 - \rhoInfty)^2}, \frac{8}{(1/2 - \nuC)^2}\biggr\}}
\def \mEP {{\mathbf{E}_\mathrm{P}}}
\def \mEQ {{\mathbf{E}_\mathrm{Q}}}
\def \kzeroSubstitution {\left(\alpha + \frac{\log(n_1 n_2)}{2 \log 2}\right)}
\def \alphaValue {\log_2 (2 \sqrt{2r}) + 1}
\def \cSubstitution {\cAllRho}
\def \orderSubstitution {\Order\left(\frac{(n_1 + n_2) \log(n_1 n_2)^2}{n_1 n_2}\right)}
\def \lemMainDelta {2 (n_1n_2)^{1 -\beta}}
\def \lemMainPLowerBound {\frac{8 \beta r \mu_0}{3(1/2 - \nuC)^2} \frac{(n_1 + n_2) \log(n_1n_2)}{n_1 n_2}}
\def \lemMainBetaMin {{1 + (\log 2 / \log(n_1 n_2))}}
\def \lemFroDelta {e^{\frac{1}{4}}(n_1 n_2)^{- \beta}}
\def \lemFroPLowerBound {\frac{8 k_0 \mu_0 \beta r}{(1/2 - \rhoFro)^2} \frac{(n_1 + n_2) \log(n_1 n_2)}{n_1 n_2}}
\def \lemFroQLowerBound {{\frac{8 \mu_0 r}{(1/2 - \rhoFro)^2} \beta \log(n_1 n_2) \frac{n_1 + n_2}{n_1 n_2}}}
\def \lemFroBetaMin {{1 / (4 \log(n_1 n_2))}}
\def \lemOpDelta {(n_1 + n_2)^{1 - \beta}}
\def \lemOpPLowerBoundForBernsteinCondition {{\frac{8 k_0 \beta}{3 (1/4 - \rhoOp)^2} \frac{\log(n_1 + n_2)}{\max(n_1, n_2)}}}
\def \lemOpPLowerBoundForBound {{\frac{8 k_0 \beta r \jointCoherence^2}{3 (1/4 - \rhoOp)^2} \frac{\max(n_1, n_2) \log(n_1 + n_2)}{n_1 n_2}}}
\def \lemOpQLowerBoundForBernsteinCondition {{\frac{8 \beta}{3} \frac{\log(n_1 + n_2)}{\max(n_1, n_2)}}}
\def \lemOpQLowerBoundForBound {{\frac{8 \beta r \jointCoherence^2}{3 (1/4 - \rhoOp)^2} \frac{\max(n_1, n_2) \log(n_1 + n_2)}{n_1 n_2}}}
\def \lemOpBetaMin {{1}}
\def \lemOpBoundCoeff {{\sqrt{\frac{8}{3} \beta \frac{\max(n_1, n_2)\log(n_1 + n_2)}{q}}}}
\def \lemInfDelta {2(n_1 n_2)^{1-\beta}}
\def \lemInfSingleDelta {2(n_1 n_2)^{-\beta}}
\def \lemInfPLowerBound {\frac{8 k_0\mu_0 r\beta}{3(1/2 - \rhoInfty)^2} \frac{(n_1 + n_2)\log(n_1 n_2)}{n_1 n_2}}
\def \lemInfQLowerBound {{\frac{8 \mu_0 r \beta}{3 (1/2 - \rhoInfty)^2}\frac{(n_1 + n_2) \log(n_1 n_2)}{n_1 n_2}}}
\def \lemInfBetaMin {{1 + (\log 2 / \log(n_1 n_2))}}
\def \DTrThmBound {{C_0 \frac{2 \muG^2 \beta_2}{p} \sqrt{\frac{p k(n_1 + n_2)+ k\log(n_1 + n_2)}{n_1 n_2}}}}
\def \DTrThmBoundTwo {{\sqrt{C_0 \frac{2 \muG^2 \beta_2}{p}} \left(\frac{p k(n_1 + n_2)+ k\log(n_1 + n_2)}{n_1 n_2}\right)^{\frac{1}{4}}}}
\def \DTrLemMainOne {{C_1 8^h (1+\sqrt{6})^h \muG^{2h} \beta_2^h (k n_1 n_2)^{h/2} \left({p (n_1 + n_2)+ \log(n_1 + n_2)}\right)^{h/2}}}
\def \DTrLemMainTwo {{C_0\muG^2\beta_2 (k n_1 n_2)^{1/2}\sqrt{p (n_1 + n_2)+ \log(n_1 + n_2)}}}
\date{}
\title{Clipped Matrix Completion: A Remedy for Ceiling Effects}
\begin{document}

\maketitle
\global\csname @topnum\endcsname 0
\global\csname @botnum\endcsname 0
\begin{abstract}
We consider the problem of recovering a low-rank matrix from its clipped observations.
Clipping is conceivable in many scientific areas that obstructs statistical analyses.
On the other hand, \emph{matrix completion} (MC) methods can recover a low-rank matrix from various information deficits by using the principle of low-rank completion.
However, the current theoretical guarantees for low-rank MC do not apply to clipped matrices, as the deficit depends on the underlying values.
Therefore, the feasibility of \emph{clipped matrix completion} (CMC) is not trivial.
In this paper, we first provide a theoretical guarantee for the exact recovery of CMC by using a trace-norm minimization algorithm.
Furthermore, we propose practical CMC algorithms by extending ordinary MC methods. Our extension is to use the squared \emph{hinge} loss in place of the squared loss for reducing the penalty of over-estimation on clipped entries.
We also propose a novel regularization term tailored for CMC. It is a combination of two trace-norm terms, and we theoretically bound the recovery error under the regularization.
We demonstrate the effectiveness of the proposed methods through experiments using both synthetic and benchmark data for recommendation systems.
\end{abstract}
\section{Introduction}
\label{sec:org6fe9af3}
\emph{Ceiling effect} is a measurement limitation that occurs when the highest possible score on a measurement instrument is reached, thereby decreasing the likelihood that the instrument has accurately measured in the intended domain \citep{SalkindEncyclopedia2010}.
In this paper, we investigate methods for restoring a matrix data from ceiling effects.
\subsection{Ceiling effect}
\label{sec:org2e530d1}
Ceiling effect has long been discussed across a wide range of scientific fields such as sociology \citep{DeMarisRegression2004}, educational science \citep{KaplanCeiling1992,Benjaminceiling2005}, biomedical research \citep{AustinType2003,CoxAnalysis1984}, and health science \citep{Austinuse2000,CatherineRevised2004,VoutilainenHow2016,RodriguesImpact2013}, because it is a crucial information deficit known to inhibit effective statistical analyses \citep{AustinType2003}.

The ceiling effect is also conceivable in the context of machine learning, e.g., in recommendation systems with a five-star rating.
After rating an item with a five-star, a user may find another item much better later.
In this case, the true rating for the latter item should be above five, but the recorded value is still a five-star.
As a matter of fact, we can observe right-truncated shapes indicating ceiling effects in the histograms of well-known benchmark data sets for recommendation systems, as shown in Figure~\ref{fig:ceiling-effects-exist}.

Restoring data from ceiling effects can lead to benefits in many fields.
For example, in biological experiments to measure the adenosine triphosphate (ATP) level, it is known that the current measurement method has a technical upper bound \citep{YaginumaDiversity2014}.
In such a case, by measuring multiple cells in multiple environments, we may recover the true ATP levels which can provide us with further findings.
In the case of recommendation systems, we may be able to find latent superiority or inferiority between items with the highest ranking and recommend unobserved entries better.

In this paper, we investigate methods for restoring a matrix data from ceiling effects.
In particular, we consider the recovery of a clipped matrix, i.e., elements of the matrix are clipped at a predefined threshold in advance of observation, because ceiling effects are often modeled as a clipping phenomenon \citep{AustinType2003}.
\begin{figure}[t]
\begin{minipage}[c]{1.0\linewidth}
  \begin{minipage}[c]{1.0\linewidth}
    \begin{minipage}[c]{1.0\linewidth}
      \begin{minipage}[c]{0.48\linewidth}
      \begin{tikzpicture}
        \node (img)  {
          \includegraphics[keepaspectratio, width=0.85\linewidth]{\pathVisual/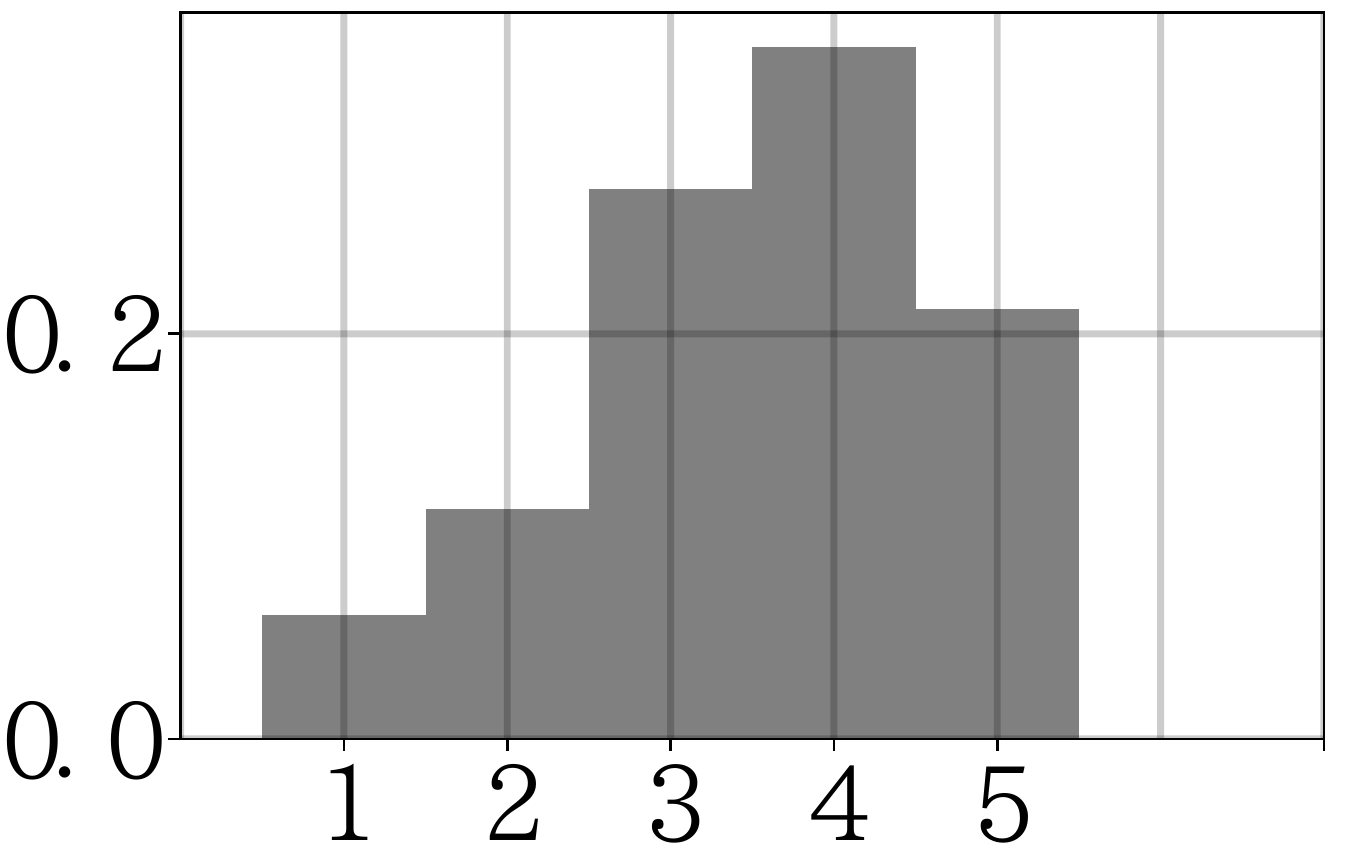}
        };
        \node[above=of img, node distance=0cm, yshift=-1.1cm,font=\color{black}] {Movielens 100K};
        \node[below=of img, node distance=0cm, yshift=1cm,font=\color{black}] {Rating};
        \node[left=of img, node distance=0cm, rotate=90, anchor=center,yshift=-0.8cm,font=\color{black}] {Probability};
      \end{tikzpicture}
      \end{minipage}\hfill
      \begin{minipage}[c]{0.48\linewidth}
      \begin{tikzpicture}
        \node (img)  {
          \includegraphics[keepaspectratio, width=0.85\linewidth]{\pathVisual/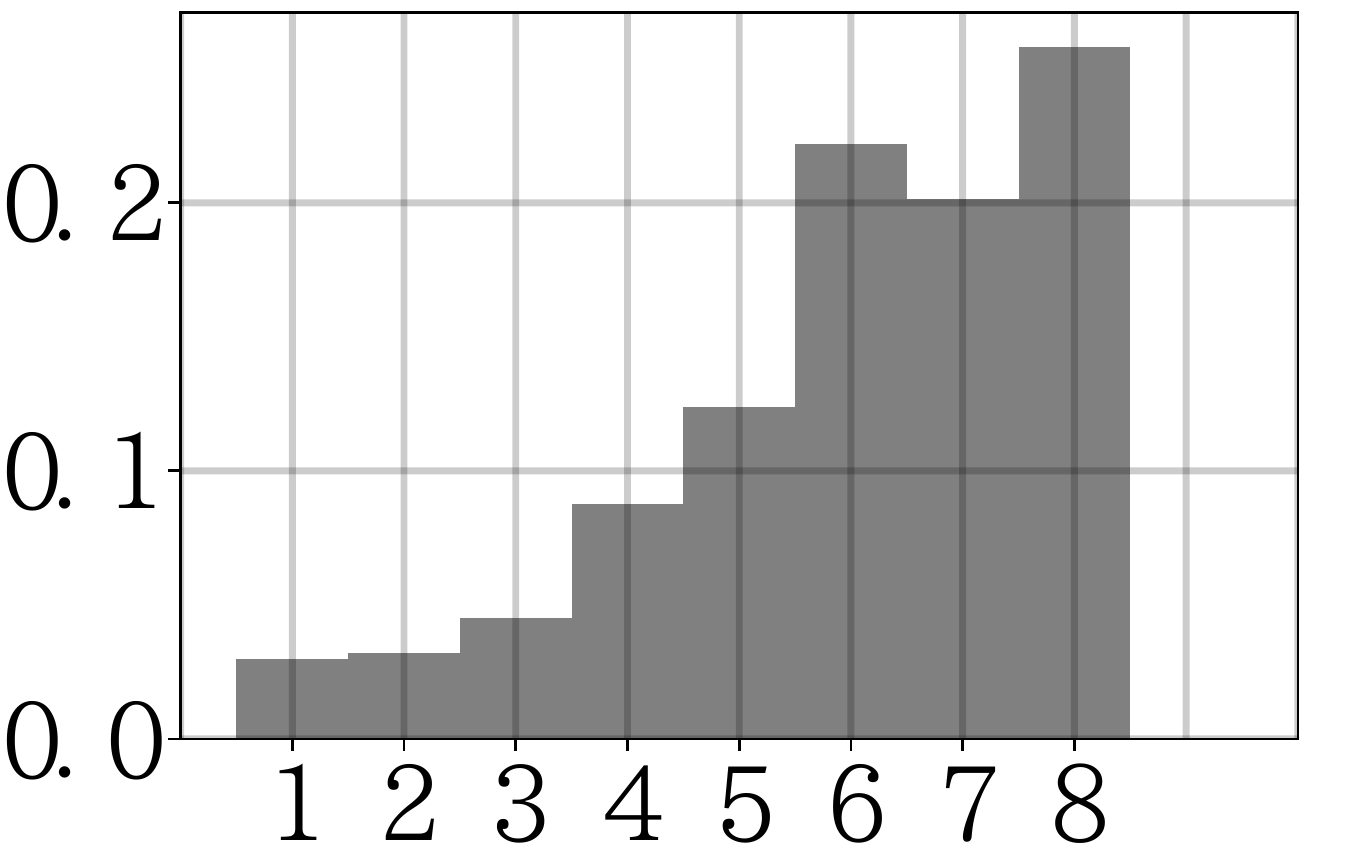}
        };
        \node[above=of img, node distance=0cm, yshift=-1.1cm,font=\color{black}] {FilmTrust};
        \node[below=of img, node distance=0cm, yshift=1cm,font=\color{black}] {Rating};
        \node[left=of img, node distance=0cm, rotate=90, anchor=center,yshift=-0.8cm,font=\color{black}] {Probability};
      \end{tikzpicture}
      \end{minipage}\hfill
    \end{minipage}
  \end{minipage}
  \begin{minipage}[c]{1.0\linewidth}
    \caption{
      Ceiling effects may also exist in standard benchmark data sets of recommendation systems (details of the data are described in Section~\ref{sec:experiments}). Histograms of the rated values are plotted. The right-truncated look of the histogram is typical for a variable under ceiling effects \citep{GreeneEconometric2012}.
    } \label{fig:ceiling-effects-exist}
  \end{minipage}
\end{minipage}
\end{figure}
\begin{figure}[t]
\begin{minipage}[c]{1.0\linewidth}
  \begin{minipage}[c]{0.3\linewidth}
    \includegraphics[keepaspectratio, width=\linewidth]{\pathIllust/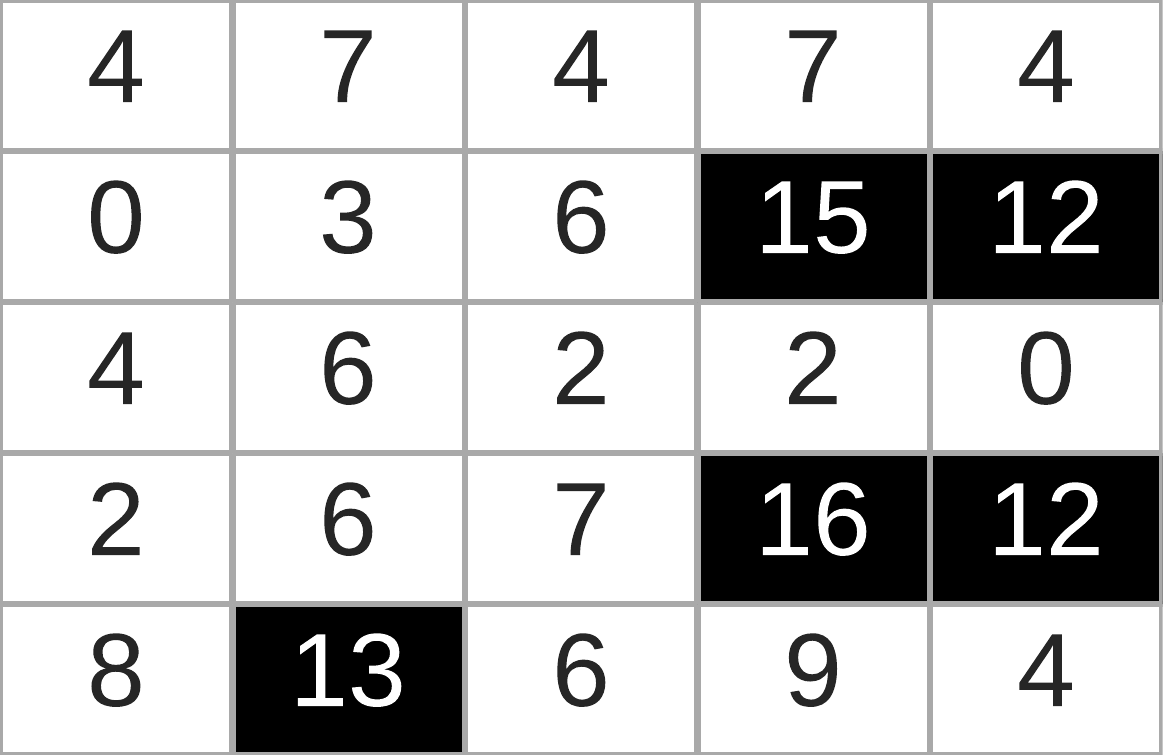}
    \subcaption{True matrix $\M$}\label{fig:res-illust-test}
  \end{minipage}\hfill
  \begin{minipage}[c]{0.3\linewidth}
    \includegraphics[keepaspectratio, width=\linewidth]{\pathIllust/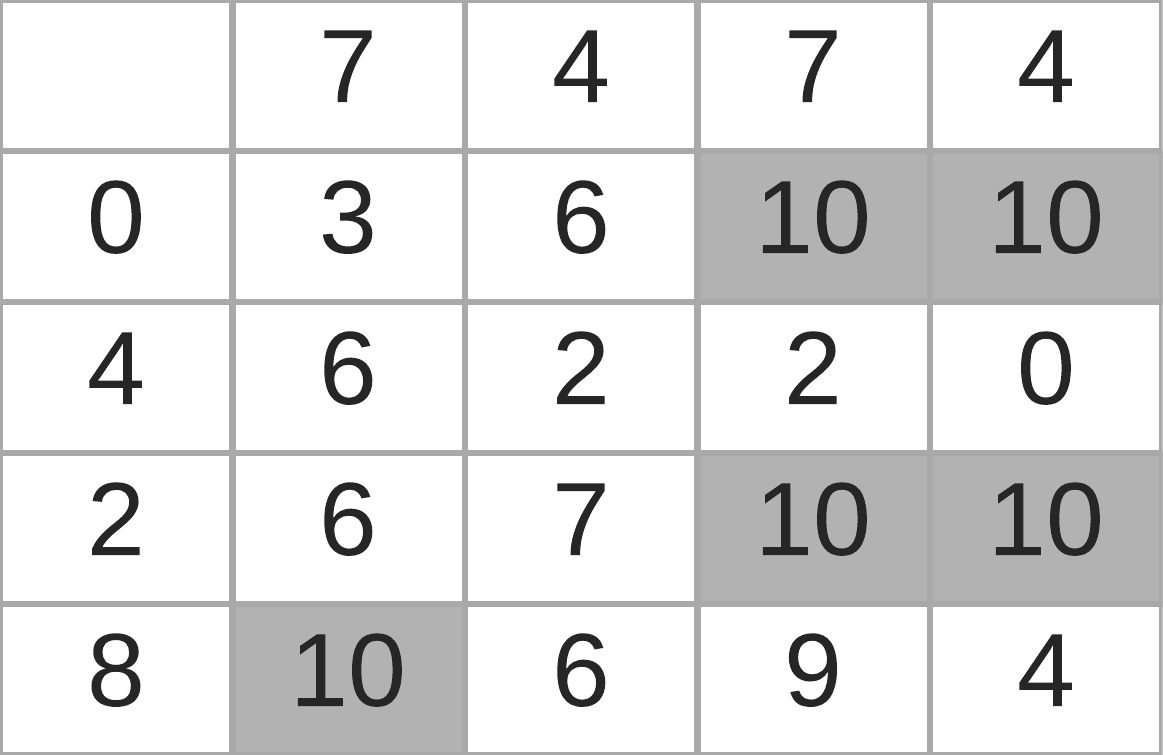}
    \subcaption{Observed $\McOmega$}\label{fig:res-illust-train}
  \end{minipage}\hfill
  \begin{minipage}[c]{0.3\linewidth}
    \includegraphics[keepaspectratio, width=\linewidth]{\pathIllust/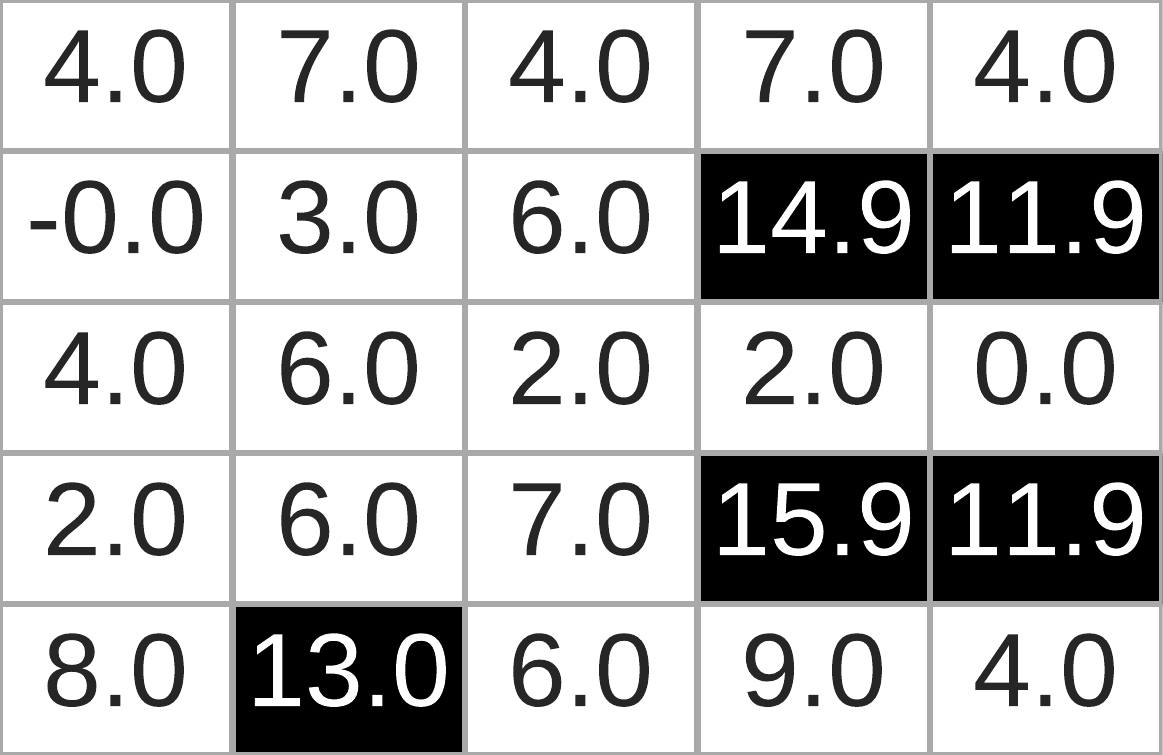}
    \subcaption{Restored $\hM$}\label{fig:res-illust-CMC}
  \end{minipage}
\end{minipage}
\begin{minipage}[c]{1.0\linewidth}
\caption{
  Illustration of the task of CMC.
  The true low-rank matrix $\M$ has a distinct structure of large values.
  However, the observed data $\McOmega$ is clipped at a predefined threshold $C=10$.
  The goal of CMC is to restore $\M$ from the value of $C$ and $\McOmega$.
  The restored matrix $\hM$ is an actual result of applying a proposed method (Fro-CMC).
  }
\label{fig:illustration}
\end{minipage}
\end{figure}
\subsection{Our problem: clipped matrix completion (CMC)}
\label{sec:org02e7593}
We consider the recovery of a low-rank matrix whose observations are clipped at a predefined threshold (Figure~\ref{fig:illustration}).
We call this problem \emph{clipped matrix completion} (CMC).
Let us first introduce its background, low-rank \emph{matrix completion}.

Low-rank \emph{matrix completion} (MC) aims to recover a low-rank matrix from various information deficits, e.g., missing \citep{CandesExact2009,Rechtsimpler2011a,ChenCompleting2015,Kiralyalgebraic2015}, noise \citep{CandesMatrix2010}, or discretization \citep{Davenport1bit2014,LanMatrix2014,BhaskarProbabilistic2016}.
The principle to enable low-rank MC is the dependency among entries of a low-rank matrix; each element can be expressed as the inner product of latent feature vectors of the corresponding row and column.
With the principle of low-rank MC, we may be able to recover the entries of a matrix from a ceiling effect.
\subsubsection{Clipped matrix completion (CMC).}
\label{sec:orgcf9a0da}
The CMC problem is illustrated in Figure~\ref{fig:illustration}.
It is a problem to recover a low-rank matrix from random observations of its entries.

Formally, the goal of CMC in this paper can be stated as follows.
Let \(\M \in \mathbb{R}^{n_1 \times n_2}\) be the ground-truth low-rank matrix where \(n_1, n_2 \in \mathbb{N}\), and \(C \in \mathbb{R}\) be the clipping threshold.
Let \(\Cf(\cdot) := \min\{C, \cdot\}\) be the clipping operator that operates on matrices element-wise.
We observe a random subset of entries of \(\Mc := \Cf(\M)\). The set of observed indices is denoted by \(\Omega\).
The goal of CMC is to accurately recover \(\M\) from \(\McOmega := \{M^\clipped_{ij}\}_{(i,j)\in\Omega}\) and \(C\).
\subsubsection{Limitations of MC.}
\label{sec:orgf489e6d}
There are two limitations regarding the application of existing MC methods to CMC.
\begin{enumerate}
\item The applicability of the principle of low-rank MC to clipped matrices is non-trivial because clipping occurs depending on the underlying values whereas the existing theoretical guarantees of MC methods presume the information deficit (e.g., missing or noise) to be independent of the values \citep{BhojanapalliUniversal2014,ChenCompleting2015,Liunew2017}.
\item Most of the existing MC methods fail to take ceiling effects into account, as they assume that the observed values are equal to or close to the true values \citep{CandesExact2009,KeshavanMatrix2010}, whereas clipped values may have a large gap from the original values.
\end{enumerate}
The goal of this paper is to overcome these limitations and to propose low-rank completion methods suited for CMC.
\subsection{Our contribution and approach}
\label{sec:orgcd4c41f}
From the perspective of MC research, our contribution is three-fold.
\subsubsection{1) We provide a theoretical analysis to establish the validity of the low-rank principle in CMC (Section~\ref{sec:feasibility}).}
\label{sec:org139c76f}
To do so, we provide an exact recovery guarantee: a sufficient condition for a trace-norm minimization algorithm to perfectly recover the ground truth matrix with high probability.
Our analysis is based on the notion of \emph{incoherence} \citep{CandesExact2009,Rechtsimpler2011a,ChenCompleting2015}.
\subsubsection{2) We propose practical algorithms for CMC (Section~\ref{sec:algorithms}) and provide an analysis of the recovery error (Section~\ref{sec:theories}).}
\label{sec:org5d91883}
We propose practical CMC methods which are extensions of the Frobenius norm minimization that is frequently used for MC \citep{Tohaccelerated2010}.
The simple idea of extension is to replace the squared loss function with the squared hinge loss to cancel the penalty of over-estimation on clipped entries.
We also propose a regularizer consisting of two trace-norm terms, which is motivated by a theoretical analysis of a recovery error bound.
\subsubsection{3) We conducted experiments using synthetic and real-world data to demonstrate the validity of the proposed methods (Section~\ref{sec:experiments}).}
\label{sec:org373833f}
Using synthetic data with known ground truth, we confirmed that the proposed CMC methods can actually recover randomly-generated matrices from clipping.
We also investigated the improved robustness of CMC methods to the existence of clipped entries in comparison with ordinary MC methods.
Using real-world data, we conducted two experiments to validate the effectiveness of the proposed CMC methods.
\subsection{Additional notation}
\label{sec:org1707aaa}
The symbols \(\M, \Mc, \McOmega, \Omega, C\), and \(\Cf\) are used throughout the paper. Let \(r\) be the rank of \(\M\).
The set of observed clipped indices is \(\mathcal{C} := \{(i, j) \in \Omega: M^\clipped_{ij} = C\}\).
Given a set of indices \(\mathcal{S}\), we define its projection operator \(\mathcal{P}_\mathcal{S} : \mathbb{R}^{n_1 \times n_2} \to \mathbb{R}^{n_1 \times n_2}\) by \((\mathcal{P}_\mathcal{S}(\X))_{ij} := \Indicator\{(i, j) \in \mathcal{S}\}X_{ij}\), where \(\Indicator\{\cdot\}\) denotes the indicator function giving \(1\) if the condition is true and \(0\) otherwise.
We use \(\|\cdot\|, \|\cdot\|_\trnrm, \|\cdot\|_\op, \|\cdot\|_\Fro\), and \(\|\cdot\|_\infty\) for the Euclidean norm of vectors, the trace-norm, the operator norm, the Frobenius norm, the infinity norm of matrices, respectively.
We also use \((\cdot)^\top\) for the transpose and define \([n] := \{1, 2, \ldots, n\}\) for \(n \in \mathbb{N}\).
For a notation table, please see Table~\ref{tbl:notation} in Appendix.
\section{Feasibility of the CMC problem \label{sec:feasibility}}
\label{sec:org5202a5a}
As noted earlier, it is not trivial if the principle of low-rank MC guarantees the recover of clipped matrices.
In this section, we establish that the principle of low-rank completion is still valid for some matrices by providing a sufficient condition under which an exact recovery by trace-norm minimization is achieved with high probability.

We consider a \emph{trace-norm minimization} for CMC:
\begin{equation}\label{eq:cmc-tr-min}\begin{split}
\widehat \M \in \mathop{\rm arg~min}\limits_{\X \in \matsp} \|\X\|_\trnrm \text{ s.t. } \begin{cases}
                                                                        \POminusC(\X) = \mathcal{P}_{\Omega\setminus \mathcal{C}}(\Mc), \\
                                                                        \mathcal{P}_{\mathcal{C}}(\Mc) \leq \mathcal{P}_{\mathcal{C}}(\X),
                                                                        \end{cases}
\end{split}\end{equation}
where \quotes{s.t.} stands for \quotes{subject to.}
Note that the optimization problem Eq.~\eqref{eq:cmc-tr-min} is convex, and there are algorithms that can solve it numerically \citep{LiuInteriorPoint2010}.
\subsection{Definitions and intuition of the information loss measures}
\label{sec:org70e2bc2}
Here, we define the quantities required for stating the theorem.
The quantities reflect the difficulty of recovering \(\M\), therefore the sufficient condition stated in the theorem will be that these quantities are small enough.
Let us begin with the definition of coherence that captures how much the row and column spaces of a matrix is aligned with the standard basis vectors \citep{CandesExact2009,Rechtsimpler2011a,ChenCompleting2015}.

\begin{definition}[Coherence and joint coherence {\citep{ChenCompleting2015}}]
Let \(\X \in \matsp\) have a skinny singular value decomposition \(\X = \tilde{\U} \tilde{\bSigma} \tilde{\V}^\top\).
We define
\begin{equation*}\begin{split}
\UCoherence(\X) := \max_{i \in [n_1]} \|\tilde{\U}_{i, \cdot}\|^2,\ \ \VCoherence(\X) := \max_{j \in [n_2]} \|\tilde{\V}_{j, \cdot}\|^2,
\end{split}\end{equation*}
where \(\tilde{\U}_{i, \cdot}\) (\(\tilde{\V}_{j, \cdot}\)) is the \(i\)-th (resp. \(j\)-th) row of \(\tilde{\U}\) (resp. \(\tilde{\V}\)).
Now the coherence of \(\M\) is defined by
\begin{equation*}\begin{split}
\mu_0 := \max\left\{\frac{n_1}{r}\UCoherence(\M), \frac{n_2}{r} \VCoherence(\M)\right\}.
\end{split}\end{equation*}
In addition, we define the following joint coherence:
\begin{equation*}\begin{split}
\jointCoherence := \sqrt{\frac{n_1 n_2}{r}} \|\UV\|_\infty.
\end{split}\end{equation*}
\label{def:coherence}
\end{definition}
The feasibility of CMC depends upon the amount of information that clipping can hide.
To characterize the amount of information obtained from observations of \(\M\),
we define a subspace \(T\) that is also used in the existing recovery guarantees for MC \citep{CandesExact2009}.

\begin{definition}[The information subspace of $\M$ \citep{CandesExact2009}]
Let \(\M = \U \mSigma \V^\top\) be a skinny singular value decomposition (\(\U \in \mathbb{R}^{n_1 \times r}, \bSigma \in \mathbb{R}^{r \times r}\) and \(\V \in \mathbb{R}^{n_2 \times r}\)).
We define
\begin{equation*}\begin{split}
T := \mathrm{span}\bigl(&\{\bm{u}_k \bm{y}^\top: k \in [r], \bm{y} \in \mathbb{R}^{n_2}\} \\
&\quad\cup \{\bm{x} \bm{v}_k^\top: k \in [r], \bm{x} \in \mathbb{R}^{n_1}\}\bigr),
\end{split}\end{equation*}
where \(\bm{u}_k, \bm{v}_k\) are the \(k\)-th column of \(\U\) and \(\V\), respectively.
Let \(\PT\) and \(\PTT\) denote the projections onto \(T\) and \(T^\perp\), respectively, where \(\perp\) denotes the orthogonal complement.
\label{def:characteristic-preliminaries}
\end{definition}
Using \(T\), we define the quantities to capture the amount of information loss due to clipping, in terms of different matrix norms representing different types of dependencies.
To express the factor of clipping, we define a transformation \(\PC\) on \(\matsp\) that describes the amount of information left after clipping.
Therefore, if these quantities are small, enough information for recovering \(\M\) may be preserved after clipping.
\begin{definition}[The information loss measured in various norms]
Define
\begin{equation*}\begin{split}
\rhoFro &:= \sup_{\Z \in T \setminus \{\mO\}: \|\Z\|_\F \leq \|\U\V^\top\|_\F} \frac{\|\PT \PC (\Z) - \Z\|_\F}{\|\Z\|_\F}, \\
\rhoInfty &:= \sup_{\Z \in T \setminus \{\mO\}: \|\Z\|_\infty \leq \|\U\V^\top\|_\infty} \frac{\|\PT \PC (\Z) - \Z\|_\infty}{\|\Z\|_\infty}, \\
\rhoOp &:= \sqrt{r} \jointCoherence \left(\sup_{\substack{\Z \in T \setminus \{\mO\}: \\ \|\Z\|_\op \leq \sqrt{n_1 n_2} \|\UV\|_\op}} \frac{\|\PC (\Z) - \Z\|_\op}{\|\Z\|_\op}\right),
\end{split}\end{equation*}
where the operator \(\PC: \matsp \to \matsp\) is defined by
\begin{equation*}\begin{split}
(\PC(\Z))_{ij} = \begin{cases}
           Z_{ij} & \text{ if } M_{ij} < C, \\
           \max\{Z_{ij}, 0\} & \text{ if } M_{ij} = C, \\
           0 & \text{ otherwise}.
           \end{cases}
\end{split}\end{equation*}
\label{def:p-star-lip}
\end{definition}

In addition, we define the following quantity that captures how much information of \(T\) depends on the clipped entries of \(\Mc\).
If this quantity is small, enough information of \(T\) may be left in non-clipped entries.
\begin{definition}[The importance of clipped entries for $T$]
Define
\begin{equation*}\begin{split}
\nuC := \|\PT\PnC\PT - \PT\|_\op,
\end{split}\end{equation*}
where \(\TrueNonClipped := \{(i, j) : M_{ij} < C\}\).
\label{def:nu-C}
\end{definition}

We follow \citet{ChenCompleting2015} to assume the following observation scheme.
As a result, it amounts to assuming that \(\Omega\) is a result of random sampling where each entry is observed with probability \(p\) independently.
\begin{assumption}[Assumption on the observation scheme]
Let \(p \in [0, 1]\).
Let \(k_0 := \kZeroValue\) and \(q := 1 - (1 - p)^{1 / k_0}\).
For each \(k = 1, \ldots, k_0\), let \(\Omega_k \subset \indsp\) be a random set of matrix indices that were sampled according to \(\Prob((i, j) \in \Omega_k) = q\) independently.
Then, \(\Omega\) was generated by \(\Omega = \bigcup_{k=1}^{k_0} \Omega_k\).
\label{assumption:sampling-scheme}
\end{assumption}
The need for Assumption~\ref{assumption:sampling-scheme} is technical \citep{ChenCompleting2015}. Refer to the proof in Appendix~\ref{app:thm1} for details.
\subsection{The theorem}
\label{sec:org4569fc4}
We are now ready to state the theorem.
\begin{theorem}[Exact recovery guarantee for CMC]
Assume \(\rhoFro < \frac{1}{2}, \rhoOp < \frac{1}{4}, \rhoInfty < \frac{1}{2}, \nuC < \frac{1}{2}\),
and Assumption~\ref{assumption:sampling-scheme} for some \(p \in [0, 1]\).
For simplicity of the statement, assume \(n_1, n_2 \geq 2\) and \(p \geq \frac{1}{n_1 n_2}\). If, additionally,
\begin{equation*}\begin{split}
p \geq \min\left\{1, \cAllRho \max(\jointCoherence^2, \mu_0) r f(n_1, n_2)\right\}
\end{split}\end{equation*}
is satisfied,
then the solution of Eq.~\eqref{eq:cmc-tr-min} is unique and equal to \(\M\) with probability at least \(1 - \delta\), where
\small
\begin{eqnarray*}
&\cAllRho &= \constFeasibilityOne \\
&&\qquad\quad\constFeasibilityTwo, \\
&f(n_1, n_2) &= \Order\left(\frac{(n_1 + n_2) (\log(n_1 n_2))^2}{n_1 n_2}\right), \\
&\delta &= \Order\left(\frac{\log(n_1 n_2)}{(n_1 + n_2)^2}\right). \\
\end{eqnarray*}
\normalsize
\label{thm:cmc-feasible}
\end{theorem}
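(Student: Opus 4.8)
The plan is to follow the dual-certificate strategy for exact recovery by trace-norm minimization, adapted to the asymmetric (inequality) constraints induced by clipping. First I would derive a deterministic sufficient condition for $\M$ to be the unique optimum of Eq.~\eqref{eq:cmc-tr-min}. Writing the KKT conditions of the constrained problem, the equality constraints on $\Omega\setminus\C$ contribute multipliers of arbitrary sign, whereas each active inequality $X_{ij}\ge C$ on $\C$ contributes a nonnegative multiplier, and complementary slackness forces the multiplier to vanish on strictly-clipped entries where $M_{ij}>C$. This is exactly the sign pattern encoded by the operator $\PC$ of Definition~\ref{def:p-star-lip}: a feasible dual certificate $\Y$ must be supported on $\Omega$, be nonnegative on the boundary-clipped entries where $M_{ij}=C$, and vanish where $M_{ij}>C$. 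The target certificate conditions are then $\PT\Y\approx\UV$ in Frobenius norm and $\|\PTT\Y\|_\op<1$, together with this correct sign structure.

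Next I would prove uniqueness from such a certificate. For a feasible perturbation $\D$ (so $\POminusC(\D)=\mO$, with $D_{ij}\ge 0$ on active clipped entries), choose $\W\in T^\perp$ with $\|\W\|_\op\le 1$ achieving $\langle\W,\PTT\D\rangle=\|\PTT\D\|_\trnrm$, and bound
\begin{equation*}
\|\M+\D\|_\trnrm \ge \|\M\|_\trnrm + \langle\UV+\W,\D\rangle.
\end{equation*}
Since $\Y$ is supported on $\Omega$ and $\POminusC(\D)=\mO$, only the $\C$-part survives in $\langle\Y,\D\rangle$, and the sign structure of $\Y$ makes that contribution nonnegative; rearranging yields a lower bound of the form $(1-\|\PTT\Y\|_\op)\|\PTT\D\|_\trnrm-\|\UV-\PT\Y\|_\Fro\|\PT\D\|_\Fro$. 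To convert this into strict positivity I would control $\|\PT\D\|_\Fro$ by $\|\PTT\D\|_\Fro$: because $\D$ has no support on $\Omega\setminus\C$, the quantity $\nuC=\|\PT\PnC\PT-\PT\|_\op<\tfrac12$ guarantees that $\PT$ remains injective on the genuinely non-clipped entries $\TrueNonClipped$, and combined with the sampling concentration on $\Omega\setminus\C$ this rules out any nonzero $\D$ hiding in $T$.

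For the construction of $\Y$ I would use the golfing scheme over the decomposition $\Omega=\bigcup_{k=1}^{k_0}\Omega_k$ of Assumption~\ref{assumption:sampling-scheme}, which is precisely why that sampling scheme is assumed. Starting from the residual $\mathbf{W}_0=\UV$, I would iterate an update of the form $\Y_k=\Y_{k-1}+\tfrac1q\ROkC(\Wkminus)$ using rescaled sampling operators that respect the feasible dual cone (i.e.\ enforce the $\PC$ sign pattern through $\RB$), with in-$T$ residual $\Wk=(\PT-\tfrac1q\PT\ROkC)(\Wkminus)$. The three information-loss measures are exactly the contraction factors needed for this modified iteration: $\rhoFro<\tfrac12$ controls the Frobenius decay $\|\Wk\|_\Fro\le c\,\|\Wkminus\|_\Fro$, $\rhoInfty<\tfrac12$ controls the $\|\cdot\|_\infty$ growth so the per-step operator-norm increments stay small, and $\rhoOp<\tfrac14$ (together with $\jointCoherence$) controls $\|\PTT\Y\|_\op\le\sum_k\tfrac1q\|\PTT\ROkC\Wkminus\|_\op$. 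After $k_0=\kZeroValue$ steps the residual $\|\mathbf{W}_{k_0}\|_\Fro$ falls below the tolerance required in the previous paragraph.

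Finally I would discharge the probabilistic content by applying matrix Bernstein / operator-norm concentration to each rescaled sampling step $\tfrac1q\POk$ around its mean on $T$, taking a union bound across the $k_0$ golfing iterations and the operator-norm event, and translating $q=1-(1-p)^{1/k_0}$ back into the stated lower bound on $p$; collecting constants yields $\cAllRho$, the rate $f(n_1,n_2)$, and the failure probability $\delta$. The main obstacle I anticipate is the certificate construction: unlike ordinary matrix completion, $\PC$ is a nonlinear cone operator rather than a linear projection, so the golfing iteration is not automatically contractive. The crux is to show that $\PT\PC$ stays uniformly close to the identity on $T$ \emph{simultaneously} in the Frobenius, infinity, and operator norms --- which is exactly what the hypotheses $\rhoFro<\tfrac12$, $\rhoInfty<\tfrac12$, and $\rhoOp<\tfrac14$ are designed to certify --- while never violating the one-sided feasibility structure forced by clipping.
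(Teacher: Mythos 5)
Your proposal is correct and follows essentially the same route as the paper's own proof: a deterministic main lemma in which the dual certificate's $\PC$-sign structure makes $\langle \H, \Y\rangle \geq 0$ for feasible perturbations while the $\nuC < \tfrac{1}{2}$ condition (plus Bernstein concentration on the non-clipped observed entries) bounds $\|\PT\H\|_\Fro$ by $\|\PTT\H\|_\Fro$, followed by a golfing-scheme construction of the certificate in which $\rhoFro$, $\rhoInfty$, and $\rhoOp$ are exactly the deterministic losses that the per-partition concentration bounds must absorb to yield geometric contraction, finished by a union bound over the $k_0$ partitions. You also correctly pinpointed the paper's central difficulty---keeping the nonlinear operator $\PT\PC$ uniformly close to the identity on $T$ in the Frobenius, infinity, and operator norms simultaneously---so there is no gap to flag.
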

The proof and the precise expressions of \(f\) and \(\delta\) are available in Appendix~\ref{app:thm1}. A more general form of Theorem~\ref{thm:cmc-feasible} allowing for clipping from below is also available in Appendix~\ref{app:floor-effect}.
The information losses (Def.~\ref{def:p-star-lip} and Def.~\ref{def:nu-C}) appear neither in the order of \(p\) nor that of \(\delta\), but they appear as coefficients and deterministic conditions.
The existence of such a deterministic condition is in accordance with the intuition that an all-clipped matrix can never be completed no matter how many entries are observed.

Note that \(p>1/(n_1n_2)\) can be safely assumed when there is at least one observation.
An intuition regarding the conditions on \(\rhoFro, \rhoOp, \rhoInfty\), and \(\nuC\) is that the singular vectors of \(\M\) should not be too aligned with the clipped entries for the recovery to be possible, similarly to the intuition for the incoherence condition in previous theoretical works such as \citet{CandesExact2009}.
\section{Practical algorithms \label{sec:algorithms}}
\label{sec:org2dfc7a1}
In this section, we introduce practical algorithms for CMC.
The trace-norm minimization (Eq.~\eqref{eq:cmc-tr-min})
is known to require impractical running time as the problem size increases from small to moderate or large \citep{Caisingular2010}.

A popular method for matrix completion is to minimize the squared error between the prediction and the observed value under some regularization \citep{Tohaccelerated2010}.
We develop our CMC methods following this approach.

Throughout this section, \(\X \in \mathbb{R}^{n_1 \times n_2}\) generally denotes an optimization variable, which may be further parametrized by \(\X = \mP \Q^\top\) (where \(\mP \in \mathbb{R}^{n_1 \times k}, \Q \in \mathbb{R}^{n_2 \times k}\) for some \(k \leq \min(n_1, n_2)\)).
Regularization terms are denoted by \(\mathcal{R}\), and regularization coefficients by \(\lambda, \lambda_1, \lambda_2 \geq 0\).

\subsubsection{Frobenius norm minimization for MC.}
\label{sec:org58117b7}
In the MC methods based on the Frobenius norm minimization \citep{Tohaccelerated2010}, we define
\begin{equation}\label{eq:mc-loss}\begin{split}
\fMC(\X) := \frac{1}{2}\|\Proj_\Omega(\M^\clipped - \X)\|_\Fro^2,
\end{split}\end{equation}
and obtain the estimator by
\begin{equation}\label{eq:general-alg-fro-orig}\begin{split}
\hM \in \mathop{\rm arg~min}\limits_{\X \in \matsp} \fMC(\X) + \mathcal{R}(\X).
\end{split}\end{equation}
The problem in using this method for CMC is that it is not robust to clipped entries as the loss function is designed under the belief that the true values are close to the observed values.
We extend this method for CMC with a simple idea.

\subsubsection{The general idea of extension.}
\label{sec:org9697a90}
The general idea of extension is \emph{not to penalize} the estimator on clipped entries when the predicted value exceeds the observed value.
Therefore, we modify the loss function to
\small
\begin{equation}\label{eq:loss-cmc}\begin{split}
\fCMC(\X) = \frac{1}{2} \|\Proj_{\Omega \setminus \mathcal{C}} (\M^\clipped - \X)\|_\Fro^2 + \frac{1}{2} \sum_{(i, j) \in \mathcal{C}} (\M^\clipped_{ij} - \X_{ij})_+^2,
\end{split}\end{equation}
\normalsize
where \((\cdot)_+^2 := (\max(0, \cdot))^2\) is the squared hinge loss, which does not penalize over-estimation. Then we obtain the estimator by
\begin{equation}\label{eq:general-alg-fro}\begin{split}
\hM \in \mathop{\rm arg~min}\limits_{\X \in \matsp} \fCMC(\X) + \Reg(\X).
\end{split}\end{equation}

From here, we discuss three designs of regularization terms for CMC. The methods are summarized in Table~\ref{tbl:algorithms}, and further details of the algorithms can be found in Appendix~\ref{app:algorithms}.
\subsubsection{Double trace-norm regularization.}
\label{sec:org7ac3fc3}
We first propose to use \(\Reg(\X) = \lambda_1 \|\X\|_\trnrm + \lambda_2 \|\Cf(\X)\|_\trnrm\).
For this method, we will conduct a theoretical analysis of the recovery error in Section~\ref{sec:theories}.
For optimization, we employ an iterative method based on subgradient descent \citep{AvronEfficient2012a}.
Even though the second term, \(\lambda_2 \|\Cf(\X)\|_\trnrm\), is a composition of a nonlinear mapping and a non-smooth convex function, we can take advantage of its simple structure to approximate it with a convex function of \(\X\) whose subgradient can be calculated for each iteration.
We refer to this algorithm as \emph{DTr-CMC} (Double Trace-norm regularized CMC).

\subsubsection{Trace-norm regularization.}
\label{sec:org21bca61}
With trace-norm regularization \(\Reg(\X) := \lambda \|\X\|_\trnrm\), the optimization problem Eq.~\eqref{eq:general-alg-fro} is a relaxation of the trace-norm minimization (Eq.~\eqref{eq:cmc-tr-min}) by replacing the exact constraints with the quadratic penalties (Eq.~\eqref{eq:mc-loss} for MC and Eq.~\eqref{eq:loss-cmc} for CMC).
For optimization, we employ an accelerated proximal gradient (APG) algorithm proposed by \citet{Tohaccelerated2010}, by taking advantage of the differentiability of the squared hinge loss.
We refer to this algorithm as \emph{Tr-CMC} (Trace-norm-regularized CMC), in contrast to \emph{Tr-MC} (its MC counterpart; \citealp{Tohaccelerated2010}).
\subsubsection{Frobenius norm regularization.}
\label{sec:org2d17138}
This method first parametrizes \(\X\) as \(\mP \Q^\top\) and use \(\mathcal{R}(\mP, \Q) := \lambda_1\|\mP\|_\Fro^2 + \lambda_2\|\Q\|_\Fro^2\) for regularization.
A commonly used method for optimization in the case of MC is the alternating least squares (ALS) method \citep{JainLowrank2013}.
Here, we employ an approximate optimization scheme motivated by ALS in our experiments.
We refer to this algorithm as \emph{Fro-CMC} (Frobenius-norm-regularized CMC), in contrast to \emph{Fro-MC} (its MC counterpart; \citealp{JainLowrank2013}).

\begin{table}
\begin{center}
\captionof{table}{\label{tbl:algorithms}
List of the proposed methods for CMC (Fro: Frobenius norm, Tr: Trace-norm, Sq.hinge: Squared hinge loss, SUGD: SUb-Gradient Descent, APG: Accelerated Proximal Gradient, ALS: Alternating Least Squares, Param.: Parametrization, Reg.: Regularization, Opt.: Optimization).}
\begin{tabular}{lllll}
\hline
Method & Param. & Loss on \(\mathcal{C}\) & Reg. & Opt.\\
\hline
DTr-CMC & \(\X\) & Sq. hinge & Tr + Tr & SUGD\\
Tr-CMC & \(\X\) & Sq. hinge & Tr & APG\\
Fro-CMC & \(\mP\Q^\top\) & Sq. hinge & Fro & ALS\\
\hline
\end{tabular}
\end{center}
\end{table}
\section{Theoretical analysis for DTr-CMC \label{sec:theories}}
\label{sec:org7de0300}
In this section, we provide a theoretical guarantee for DTr-CMC.
Let \(G\) be the hypothesis space defined by
\begin{equation*}\begin{split}
G = \bigl\{\X \in \matsp:& \|\X\|_\trnrm^2 \leq \beta_1 \sqrt{k n_1 n_2},\\
& \|\Cf(\X)\|_\trnrm^2 \leq \beta_2 \sqrt{k n_1 n_2}\bigr\}
\end{split}\end{equation*}
for some \(k \leq \min(n_1, n_2)\) and \(\beta_1, \beta_2 \geq 0\).
Here, we analyze the estimator
\begin{equation}\label{eq:dtr-algorithm}\begin{split}
\widehat \M \in \mathop{\rm arg~min}\limits_{\X \in G} \sum_{(i, j) \in \Omega} (M^\clipped_{ij} - \Cf(X_{ij}))^2.
\end{split}\end{equation}
The minimization objective of Eq.~\eqref{eq:dtr-algorithm} is not convex. However, it is upper-bounded by the convex loss function \(\fCMC\) (Eq.~\eqref{eq:loss-cmc}). The proof is provided in Appendix~\ref{app:algorithms:dtr-cmc}.
Therefore, DTr-CMC can be seen as a convex relaxation of Eq.~\eqref{eq:dtr-algorithm} with constraints turned into regularization terms.
To state our theorem, we define the unnormalized coherence of a matrix.
\begin{definition}[Unnormalized coherence]
Let \(\unnCoherence(\X)\) be unnormalized coherence defined by
\begin{eqnarray*}
\unnCoherence(\X)=\max\{\UCoherence(\X), \VCoherence(\X)\},
\end{eqnarray*}
using \(\UCoherence\) and \(\VCoherence\) from Def.~\ref{def:coherence}.
\end{definition}

Now we are ready to state our theorem.
\begin{theorem}[Theoretical guarantee for DTr-CMC]
Suppose that \(\M \in G\), and that \(\Omega\) is generated by independent observation of entries with probability \(p \in [0, 1]\).
Let \(\muG = \supG \unnCoherence(\Cf(\X))\), and \(\widehat \M\) be a solution to the optimization problem Eq.~\eqref{eq:dtr-algorithm}.
Then there exist universal constants \(C_0\) and \(C_1\), such that with probability at least \(1 - C_1/(n_1 + n_2)\) we have
\begin{equation}\label{eq:thm-dtr-bound}\begin{split}
&\sqrt{\frac{1}{n_1 n_2}\|\widehat \M - \M\|_\Fro^2} \\
&\leq \underbrace{\frac{\|\M - \Mc\|_\Fro}{\sqrt{n_1 n_2}}}_{=B_1: \text{Complexity of data}} + \underbrace{\frac{\|\hM - \Cf(\hM)\|_\Fro}{\sqrt{n_1 n_2}}}_{=B_2: \text{Complexity of hypothesis}} \\
&\qquad + \underbrace{\frac{\|\mathrm{Clip}(\widehat \M) - \mathrm{Clip}(\M)\|_\Fro}{\sqrt{n_1 n_2}}}_{=B_3: \text{Estimation error}},
\end{split}\end{equation}
and
\begin{equation*}\begin{split}
B_1 &\leq (\sqrt{\beta_1} + \sqrt{\beta_2}) k^{\frac{1}{4}} (n_1 n_2)^{-\frac{1}{4}}, \\
B_2 &\leq (\sqrt{\beta_1} + \sqrt{\beta_2}) k^{\frac{1}{4}} (n_1 n_2)^{-\frac{1}{4}}, \\
B_3 &\leq \DTrThmBoundTwo.
\end{split}\end{equation*}
\label{thm:dtr-guarantee}
\end{theorem}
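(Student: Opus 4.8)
The plan is to control $\|\hM - \M\|_\Fro$ through the triangle-inequality decomposition that the statement already anticipates. Writing $\hM - \M = (\hM - \Cf(\hM)) + (\Cf(\hM) - \Cf(\M)) + (\Cf(\M) - \M)$ and using $\Cf(\M) - \M = -(\M - \Mc)$, I would obtain $\|\hM - \M\|_\Fro \leq \|\hM - \Cf(\hM)\|_\Fro + \|\Cf(\hM) - \Cf(\M)\|_\Fro + \|\M - \Mc\|_\Fro$, which after dividing by $\sqrt{n_1 n_2}$ is exactly $B_2 + B_3 + B_1$. The remaining work is to bound the three terms.

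The terms $B_1$ and $B_2$ are deterministic and follow from membership in $G$ together with the elementary inequality $\|\cdot\|_\Fro \leq \|\cdot\|_\trnrm$. Since $\M \in G$ by assumption and $\hM \in G$ because it is a minimizer over $G$, both matrices satisfy $\|\cdot\|_\trnrm \leq \sqrt{\beta_1}(k n_1 n_2)^{1/4}$ and $\|\Cf(\cdot)\|_\trnrm \leq \sqrt{\beta_2}(k n_1 n_2)^{1/4}$. Hence $\|\M - \Mc\|_\Fro \leq \|\M\|_\trnrm + \|\Cf(\M)\|_\trnrm \leq (\sqrt{\beta_1}+\sqrt{\beta_2})(k n_1 n_2)^{1/4}$, and dividing by $\sqrt{n_1 n_2}$ gives the claimed bound on $B_1$; the identical argument applied to $\hM$ gives $B_2$.

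The estimation term $B_3$ carries the statistical content. The key starting observation is that the empirical objective vanishes at the optimum: since $\M \in G$ attains $\sum_{(i,j)\in\Omega}(M^\clipped_{ij}-\Cf(M_{ij}))^2 = 0$, the minimizer $\hM$ must also attain $0$, so $\Cf(\hM)$ and $\Cf(\M)$ agree on every observed entry. Thus $B_3^2$ is purely out-of-sample, and I would bound it by a uniform-deviation argument: writing $Z_{ij} = \Indicator\{(i,j)\in\Omega\}$ with $\E Z_{ij} = p$, the vanishing empirical sum gives
\begin{equation*}\begin{split}
p\,\|\Cf(\hM)-\Cf(\M)\|_\Fro^2 \leq \sup_{\X \in G}\Bigl[p\,\|\Cf(\X)-\Cf(\M)\|_\Fro^2 - \sum_{(i,j)\in\Omega}\bigl(\Cf(X_{ij})-\Cf(M_{ij})\bigr)^2\Bigr] =: S,
\end{split}\end{equation*}
whence $B_3^2 = \|\Cf(\hM)-\Cf(\M)\|_\Fro^2/(n_1 n_2) \leq S/(p n_1 n_2)$, reducing everything to a high-probability bound on the supremum $S$ of a centered empirical process.

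Bounding $S$ is the step I expect to be the main obstacle, and my plan is a symmetrization--contraction--concentration chain. First I would symmetrize, replacing the centered process by a Rademacher-weighted one with multipliers $\epsilon_{ij} Z_{ij}$. Next I would invoke the Ledoux--Talagrand contraction principle, using that $t \mapsto t^2$ is Lipschitz on the effective range $|\Cf(X_{ij}) - \Cf(M_{ij})| \leq 2\muG\sqrt{\beta_2}(k n_1 n_2)^{1/4}$; here the coherence bound $\|\Y\|_\infty \leq \muG \|\Y\|_\op \leq \muG\|\Y\|_\trnrm$, valid for every $\Cf(\X)$ with $\X \in G$, is what renders the squared loss effectively bounded. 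I would then linearize through trace/operator-norm duality, $\langle \mathbf{E}, \Cf(\X)-\Cf(\M)\rangle \leq \|\mathbf{E}\|_\op \|\Cf(\X)-\Cf(\M)\|_\trnrm$ with $\mathbf{E}_{ij} = \epsilon_{ij} Z_{ij}$, and control the operator norm $\|\mathbf{E}\|_\op$ of the random sign-mask by a matrix concentration inequality, which supplies the factor $\sqrt{p(n_1+n_2)+\log(n_1+n_2)}$. Chaining these estimates targets a bound $S \lesssim \muG^2 \beta_2 (k n_1 n_2)^{1/2}\sqrt{p(n_1+n_2)+\log(n_1+n_2)}$; substituting into $B_3^2 \leq S/(p n_1 n_2)$ and taking square roots reproduces the stated form of $B_3$. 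The delicate points are tracking the exact power of the coherence $\muG$ through the $\ell_\infty$/Lipschitz and duality steps, and upgrading the expectation-level estimate to a tail bound with failure probability $\Order(1/(n_1+n_2))$ — for the latter I would use the moment method, bounding $\E[S^h]$ for a growing exponent $h$ (the role of the $h$-dependent auxiliary bounds) and applying Markov's inequality.
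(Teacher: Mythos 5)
Your overall architecture coincides with the paper's: the same triangle-inequality decomposition of $\hM - \M$ into the three terms $B_1, B_2, B_3$, the same trace-norm argument for $B_1$ and $B_2$, and for $B_3$ a uniform-deviation bound over $G$ proved by symmetrization, an operator-norm moment bound for the random sign mask $\mB \hadamard \bomega$ (the paper's Lemma~\ref{lem:lem2}), and Markov's inequality on the $h$-th moment with $h = \log(n_1+n_2)$. Your observation that the empirical loss vanishes exactly at $\hM$ (because $\M \in G$ already achieves zero) is correct, and it even streamlines the paper's argument, which only uses $\LO(\hM) \le \LO(\M)$ and pays a harmless factor of $2$ through a two-sided supremum.

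The genuine divergence --- and the gap --- is in how the square in the loss is handled. You linearize via the Ledoux--Talagrand contraction principle, paying the Lipschitz constant of $t \mapsto t^2$ on the effective range, which is of order $\muG \sqrt{\beta_2}(k n_1 n_2)^{1/4}$, and then apply trace/operator-norm duality to the \emph{linearized} process, which contributes another $\sqrt{\beta_2}(k n_1 n_2)^{1/4}$ but no further coherence factor. Chasing constants, this chain yields a deviation bound proportional to $\muG\, \beta_2 \sqrt{k n_1 n_2}\,\sqrt{p(n_1+n_2)+\log(n_1+n_2)}$, i.e., with $\muG$ to the \emph{first} power, whereas the theorem claims $\muG^2$. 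Since $\unnCoherence(\cdot)$ is a maximal squared row norm of a matrix with orthonormal columns, one always has $\muG \le 1$, hence $\muG^2 \le \muG$: your route proves a strictly weaker bound and cannot ``reproduce the stated form of $B_3$'' as you assert; the power of $\muG$ is exactly the delicate point you flagged but did not resolve. The paper gets the extra factor by \emph{not} contracting: it keeps the square as a Hadamard product, bounds $\langle \mB \hadamard \bomega, (\Cf(\X)-\Cf(\M)) \hadamard (\Cf(\X)-\Cf(\M)) \rangle$ by $\|\mB \hadamard \bomega\|_\op \, \|(\Cf(\X)-\Cf(\M)) \hadamard (\Cf(\X)-\Cf(\M))\|_\trnrm$, and controls the trace norm of the Hadamard square via $\|\X \hadamard \Y\|_\trnrm \le \unnCoherence(\X)^2 \|\X\|_\trnrm \|\Y\|_\trnrm$ (the paper's Lemma~\ref{lem:lem3}, a consequence of Horn's theorem), which is precisely where $\muG^2$ enters. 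Replacing your contraction step by this Hadamard-product argument closes the gap; everything else in your proposal then matches the paper's proof.
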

We provide the proof in Appendix~\ref{app:thm2}.
The right-hand side of Eq.~\eqref{eq:thm-dtr-bound} converges to zero as \(n_1, n_2 \to \infty\) with \(p, k, \beta_1\), and \(\beta_2\) fixed.
From this theorem, it is expected that if \(\|\M\|_\trnrm\) and \(\|\M^\c\|_\trnrm\) are believed to be small, DTr-CMC can accurately recover \(\M\).
\section{Related work \label{sec:related-works}}
\label{sec:org4265e24}
In this section, we describe related work from the literature on matrix completion and that on ceiling effects. Table~\ref{tbl:relative-position} briefly summarizes the related work on matrix completion.
\subsection{Matrix completion methods.}
\label{sec:org560f7e5}
\subsubsection{Theory.}
\label{sec:org6d2459c}
Our feasibility analysis in Section~\ref{sec:feasibility} followed the approach of \citet{Rechtsimpler2011a} while some details of the proof were based on \citet{ChenCompleting2015}.
There is further research to weaken the assumption of the uniformly random observation \citep{BhojanapalliUniversal2014}.
It may be relatively easy to incorporate such devices into our theoretical analysis.

Our theoretical analysis for DTr-CMC in Section~\ref{sec:theories} is inspired by the theory for 1-bit matrix completion \citep{Davenport1bit2014}.
The difference is that our theory effectively exploits the additional low-rank structure in the clipped matrix in addition to the original matrix.
\subsubsection{Problem setting.}
\label{sec:org91ee0a3}
Our problem setting of clipping can be related to quantized matrix completion (Q-MC; \citealp{LanMatrix2014}; \citealp{BhaskarProbabilistic2016}).
\citet{LanMatrix2014} and \citet{BhaskarProbabilistic2016} formulated a probabilistic model which assigns discrete values according to a distribution conditioned on the underlying values of a matrix.
\citet{BhaskarProbabilistic2016} provided an error bound for restoring the underlying values, assuming that the quantization model is fully known.
The model of Q-MC can provide a different formulation for ceiling effects from ours by assuming the existence of latent random variables.
However, Q-MC methods require the data to be fully discrete \citep{LanMatrix2014,BhaskarProbabilistic2016}.
Therefore, neither their methods nor theories can be applied to real-valued observations.
On the other hand, our methods and theories allow observations to be real-valued.
The ceiling effect is worth studying independently of quantization, since the data analyzed under ceiling effects are not necessarily discrete.
\subsubsection{Methodology.}
\label{sec:orgdb4a391}
The use of the Frobenius norm for MC has been studied for MC from noisy data \citep{CandesMatrix2010,Tohaccelerated2010}.
Our algorithms are based on this line of research, while extending it for CMC.

Methodologically, \citet{MarecekMatrix2017} is closely related to our Fro-CMC.
They considered completion of missing entries under \quotes{interval uncertainty} which yields interval constraints indicating the ranges in which the true values should reside.
They employed the squared hinge loss for enforcing the interval constraints in their formulation, hence coinciding with our formulation of Fro-CMC.
There are a few key differences between their work and ours.
First, our motivations are quite different as we are analyzing a different problem from theirs. They considered completion of missing entries with robustness to uncertainty, whereas we considered recovery of clipped entries.
Secondly, they did not provide any theoretical analysis of the problem. We provided an analysis by specifically investigating the problem of clipping.
Lastly, as a minor difference, we employed an ALS-like algorithm whereas they used a coordinate descent method \citep{MarecekMatrix2017,MarecekLowRank2018}, as we found the ALS-like method to work well for moderately sized matrices.
\subsection{Related work on ceiling effects}
\label{sec:orgb929813}
From the perspective of dealing with ceiling effects, the present paper adds a potentially effective method to the analysis of data affected by a ceiling effect.
Ceiling effect is also referred to as \emph{censoring} \citep{GreeneEconometric2012} or \emph{limited response variables} \citep{DeMarisRegression2004}.
In this paper, we use \quotes{ceiling effect} to represent these phenomena.
In \emph{econometrics}, Tobit models are used to deal with ceiling effects \citep{GreeneEconometric2012}. In Tobit models, a \emph{censored likelihood} is modeled and maximized with respect to the parameters of interest.
Although this method is justified by the theory of M-estimation \citep{SchnedlerLikelihood2005,GreeneEconometric2012}, its use for matrix completion is not justified.
In addition, Tobit models require strong distributional assumptions, which is problematic especially if the distribution cannot be safely assumed.
\begin{table}[t]
\caption{\label{tbl:relative-position}
Our target problem is the restoration of a low-rank matrix from clipping at a predefined threshold. No existing work has considered this type of information deficit.}
\centering
\begin{tabular}{ll}
\hline
Type of deficit & Related work\\
\hline
Missing & \cite{CandesExact2009} etc.\\
Noise & \cite{CandesMatrix2010} etc.\\
Quantization & \cite{BhaskarProbabilistic2016} etc.\\
Clipping & This paper\\
\hline
\end{tabular}
\end{table}
\section{Experimental results \label{sec:experiments}}
\label{sec:org465ff76}
In this section, we show the results of experiments to compare the proposed CMC methods to the MC methods.
\subsection{Experiment with synthetic data}
\label{sec:orgba47ea4}
We conducted an experiment to recover randomly generated data from clipping.
The primary purpose of the experiment was to confirm that the principle of low-rank completion is still effective for the recovery of a clipped matrix, as indicated by Theorem~\ref{thm:cmc-feasible}.
Additionally, with the same experiment, we investigated how sensitive the MC methods are to the clipped entries by looking at the growth of the recovery error in relation to increased rates of clipping.
\subsubsection{Data generation process.}
\label{sec:org5ea2087}
We randomly generated non-negative integer matrices of size \(500 \times 800\) that are exactly rank-\(30\) with the fixed magnitude parameter \(L = 15\) (see Appendix~\ref{app:data-generation}).
The generated elements of matrix \(\M\) were randomly split into three parts with ratio \((0.8, 0.1, 0.1)\). Then the first part was clipped at the threshold \(C\) (varied over \(\{5, 6, 7, 8, 9, 11, 13\}\)) to generate the training matrix \(\McOmega\) (therefore, \(p=0.8\)). The remaining two parts (without thresholding) were treated as the validation (\(\M^\valid\)) and testing (\(\M^\test\)) matrices, respectively.
\subsubsection{Evaluation metrics.}
\label{sec:org343112c}
We used the relative root mean square error (rel-RMSE) as the evaluation metric, and we considered a result as a good recovery if the error is of order \(10^{-2}\) \citep{Tohaccelerated2010}.
We separately reported the rel-RMSE on two sets of indices: all the indices of \(\M\), and the test entries whose true values are below the clipping threshold.
For hyperparameter tuning, we used the rel-RMSE after clipping on validation indices: \(\frac{\|\Cf(\hM) - \Cf(\M^\valid)\|_\Fro}{\|\Cf(\M^\valid)\|_\Fro}\).
We reported the mean of five independent runs.
The clipping rate was calculated by the ratio of entries of \(\M\) above \(C\).
\subsubsection{Compared methods.}
\label{sec:orge46b329}
We evaluated the proposed methods (DTr-CMC, Tr-CMC, and Fro-CMC) and their MC counterparts (Tr-MC and Fro-MC).
We also applied MC methods after ignoring all clipped training entries (Tr-MCi and Fro-MCi, with \quotes{i} standing for \quotes{ignore}).
While this treatment wastes some data, it may improve the robustness of MC methods to the existence of clipped entries.
\subsubsection{Result 1: The validity of low-rank completion.}
\label{sec:org62d576e}
In Figure~(\ref{fig:exp:synth}\subref{fig:experiment:synthetic:1}), we show the rel-RMSE for different clipping rates.
The proposed methods successfully recover the true matrices with very low error of order \(10^{-2}\) even when half of the observed training entries are clipped. One of them (Fro-CMC) is able to successfully recover the matrix after the clipping rate was above \(0.6\).
This may be explained in part by the fact that the synthetic data were exactly low rank, and that the correct rank was in the search space of the bilinear model of the Frobenius norm based methods.
\subsubsection{Result 2: The robustness to the existence of clipped training entries.}
\label{sec:org0cc8482}
In Figure~(\ref{fig:exp:synth}\subref{fig:experiment:synthetic:2}), the recovery error of MC methods on non-clipped entries increased with the rate of clipping.
This indicates the disturbance effect of the clipped entries for ordinary MC methods.
The MC methods with the clipped entries ignored (Tr-MCi and Fro-MCi) were also prone to increasing test error on non-clipped entries for high clipping rates, most likely due to wasting too much information.
On the other hand, the proposed methods show improved profiles of growth, indicating improved robustness.
\begin{figure}[t]
  \begin{minipage}[c]{1.0\linewidth}
  \begin{minipage}[c]{1.0\linewidth}
    \begin{minipage}[c]{0.95\linewidth}
      \begin{minipage}[c]{0.5\linewidth}\hfill
          \begin{tikzpicture}
            \node (img)  {
              \includegraphics[keepaspectratio, width=0.9\linewidth,pagebox=cropbox]{\pathSynth/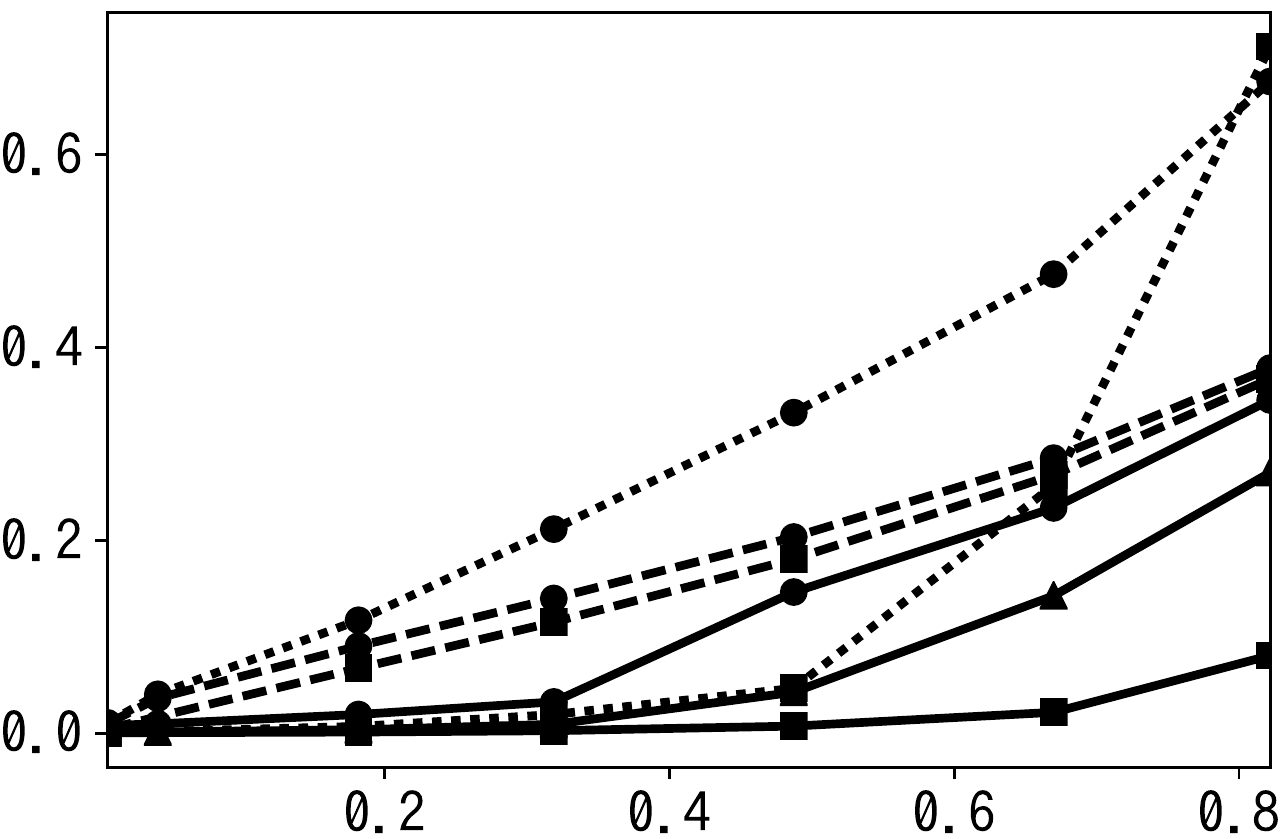}
            };
            \node[below=of img, node distance=0cm, yshift=1cm,font=\color{black}] {Clipping rate};
            \node[left=of img, node distance=0cm, rotate=90, anchor=center,yshift=-0.9cm,font=\color{black}] {Rel-RMSE};
          \end{tikzpicture}
          \subcaption{On all entries}\hfill
          \label{fig:experiment:synthetic:1}
      \end{minipage}\hfill
      \begin{minipage}[c]{0.5\linewidth}\hfill
          \begin{tikzpicture}
            \node (img)  {
              \includegraphics[keepaspectratio, width=0.9\linewidth,pagebox=cropbox]{\pathSynth/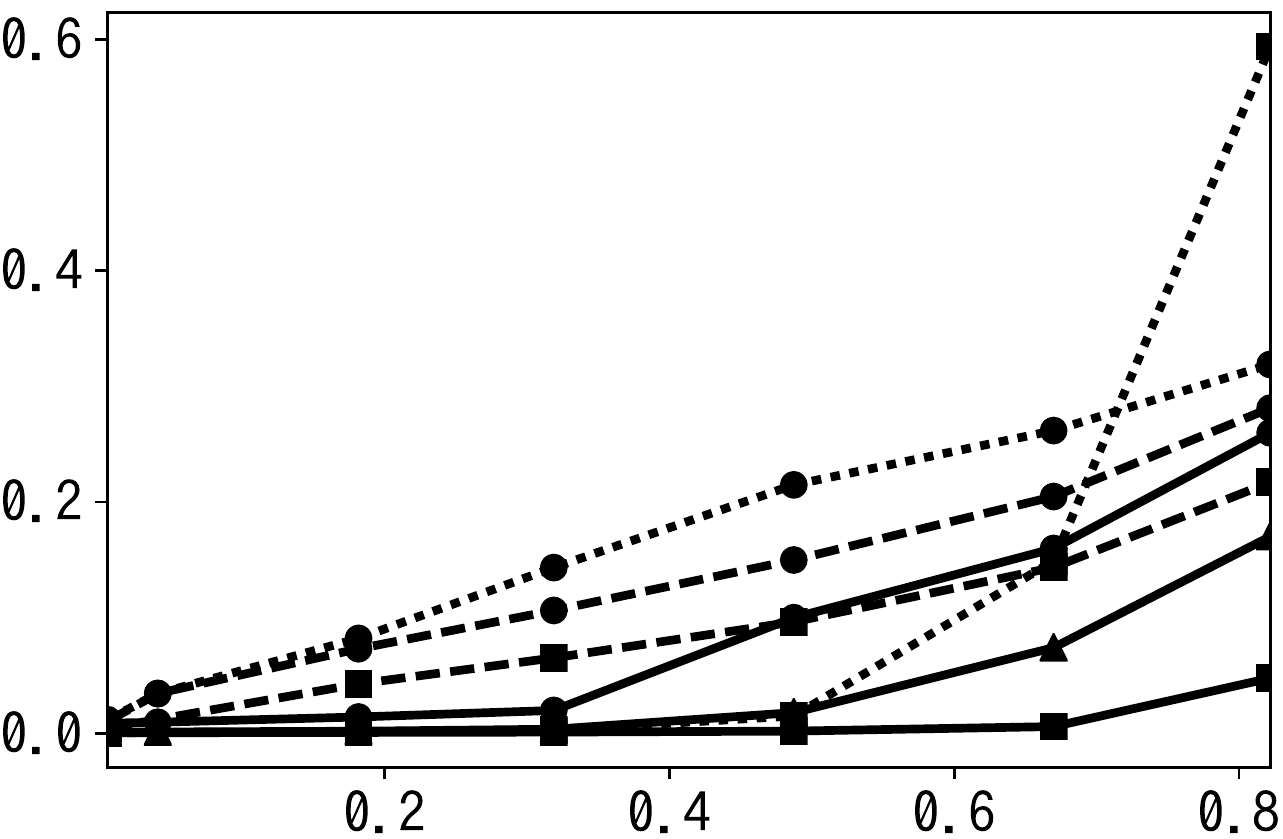}
            };
            \node[below=of img, node distance=0cm, yshift=1cm,font=\color{black}] {Clipping rate};
            \node[left=of img, node distance=0cm, rotate=90, anchor=center,yshift=-0.9cm,font=\color{black}] {Rel-RMSE};
          \end{tikzpicture}
          \subcaption{On non-clipped test entries}\hfill
          \label{fig:experiment:synthetic:2}
   \end{minipage}
    \end{minipage}
  \end{minipage}
  \begin{minipage}[c]{1.0\linewidth}
    \begin{minipage}[c]{0.5\linewidth}
      \caption{Relative RMSE for varied $C$ (Dotted: previous MC methods, Solid: proposed CMC methods).}
      \label{fig:exp:synth}
    \end{minipage}\hfill
    \begin{minipage}[c]{0.5\linewidth}\hfill
      \includegraphics[keepaspectratio, width=0.95\linewidth,pagebox=cropbox]{\pathSynth/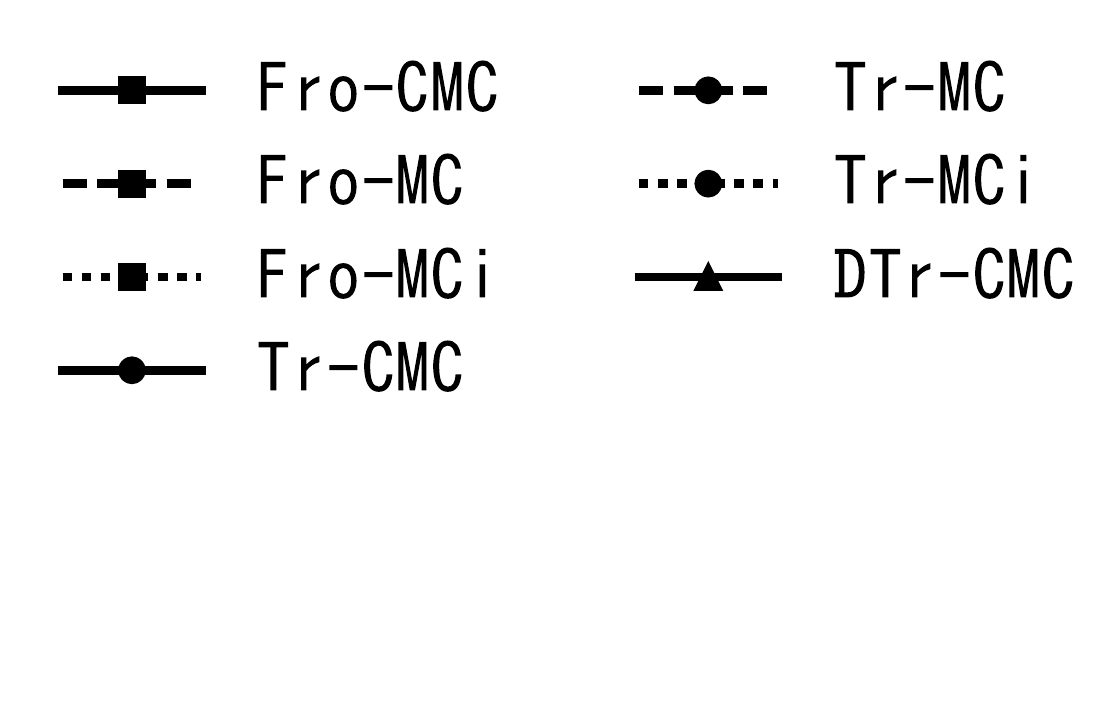}
    \hfill\end{minipage}
  \end{minipage}
\end{minipage}
\end{figure}
\subsection{Experiments with real-world data}
\label{sec:org486f3dd}
We conducted two experiments using real-world data.
The difficulty of evaluating CMC with real-world data is that there are no known true values unaffected by the ceiling effect.
Therefore, instead of evaluating the accuracy of recovery, we evaluated the performance of distinguishing entries with the ceiling effect and those without.
We considered two binary classification tasks in which we predict whether held-out test entries are of high values.
The tasks would be reasonable because the purpose of a recommendation system is usually to predict which entries have \emph{high} scores.
\subsubsection{Preparation of data sets.}
\label{sec:orgddb938a}
We used the following benchmark data sets of recommendation systems.
\begin{itemize}
\item FilmTrust \citep{GuoNovel2013}\footnote{\url{https://www.librec.net/datasets.html}} consists of ratings obtained from 1,508 users to 2,071 movies on a scale from \(0.5\) to \(4.0\) with a stride of \(0.5\) (approximately 99.0\% missing). For ease of comparison, we doubled the ratings so that they are integers from \(1\) to \(8\).
\item Movielens (100K)\footnote{\url{http://grouplens.org/datasets/movielens/100k/}} consists of ratings obtained from 943 users to 1,682 movies on an integer scale from \(1\) to \(5\) (approximately 94.8\% missing).
\end{itemize}
\subsubsection{Task 1: Using artificially clipped training data.}
\label{sec:org8e67db9}
We artificially clipped the training data at threshold \(C\) and predicted whether the test entries were originally above \(C\).
We used \(C = 7\) for FilmTrust and \(C = 4\) for Movielens.
For testing, we made positive prediction for entries above \(C+0.5\) and negative prediction otherwise.
\subsubsection{Task 2: Using raw data.}
\label{sec:org38de865}
We used the raw training data and predicted whether the test entries are equal to the maximum value of the rating scale (i.e., the underlying values are at least the maximum value).
For CMC methods, we set \(C\) to the maximum value, i.e., \(C = 8\) for FilmTrust and \(C = 5\) for Movielens.
For testing, we made positive prediction for entries above \(C-0.5\) and negative prediction otherwise.
\subsubsection{Protocols and evaluation metrics.}
\label{sec:org71e6be4}
In both experiments, we first split the observed entries into three groups with ratio \((0.8, 0.1, 0.1)\), which were used as training, validation, and test entries.
Then for the first task, we artificially clipped the training data at \(C\).
If a user or an item had no training entries, we removed them from all matrices.

We measured the performance by the \(\fone\) score.
Hyperparameters were selected by the \(\fone\) score on the validation entries.
We reported the mean and the standard error after five independent runs.
\subsubsection{Compared methods.}
\label{sec:org4fb2dca}
We compared the proposed CMC methods with the corresponding MC methods.
The baseline (indicated as \quotes{baseline}) is to make prediction positively for all entries, for which the recall is \(1\) and the precision is the ratio of the positive data. This is the best baseline in terms of the \(\fone\) score without looking at data.
\subsubsection{Results.}
\label{sec:org83ae4e5}
The results are compiled in Table~\ref{tbl:experiment:recom1and2}.
In Task 1, by comparing the results between CMC methods and their corresponding MC methods, we conclude that the CMC methods have improved the ability to recover clipped values in real-world data as well.
In Task 2, the CMC methods show better performance for predicting entries of the maximum value of rating than their MC counterparts.

Interestingly, we obtain the performance improvement by only changing the loss function to be robust to ceiling effects and without changing the model complexity (such as introducing an ordinal regression model).
\begin{table}[t]
\centering
\captionof{table}{
\label{tbl:experiment:recom1and2}
Results of the two tasks measured in $\fone$. Bold-face indicates the highest score.}
\begin{tabular}[t]{|*1{p{14mm}}|*1{p{15mm}|}|*2{p{16mm}|}}
  \hline
  Data & Methods & Task 1 \(\fone\) & Task 2 \(\fone\)\\
  \hline
  Film & DTr-CMC    &  \textbf{0.47 (0.01)}   &\textbf{0.46 (0.01)} \\
  \cline{2-4}Trust& Fro-CMC    &  0.35 (0.01)            &  0.40 (0.01)  \\
  \cline{2-4}& Fro-MC     &  0.27 (0.01)            &  0.35 (0.01) \\
  \cline{2-4}& Tr-CMC     &  0.36 (0.00)            &  0.39 (0.00) \\
  \cline{2-4}& Tr-MC      &  0.22 (0.00)            &  0.35 (0.01) \\
  \cline{2-4}& (baseline) &  0.41 (0.00)            &  0.41 (0.00) \\
  \hline
  Movielens & DTr-CMC  &  0.39 (0.00)          &  0.38 (0.00) \\
  \cline{2-4}(100K)& Fro-CMC  &  \textbf{0.41 (0.00)} &  \textbf{0.41 (0.01)} \\
  \cline{2-4}& Fro-MC         &  0.21 (0.01)          &  0.38 (0.01) \\
  \cline{2-4}& Tr-CMC         &  0.40 (0.00)          &  0.40 (0.00) \\
  \cline{2-4}& Tr-MC          &  0.12 (0.00)          &  0.38 (0.00) \\
  \cline{2-4}& (baseline)     &  0.35 (0.00)          &  0.35 (0.00) \\
  \hline
\end{tabular}
\end{table}
The computation time of the proposed methods are reported in Appendix~\ref{app:run-time}.
\section{Conclusion}
\label{sec:org39392bb}
In this paper, we showed the first result of exact recovery guarantee for the novel problem of clipped matrix completion.
We proposed practical algorithms as well as a theoretically-motivated regularization term.
We showed that the clipped matrix completion methods obtained by modifying ordinary matrix completion methods are more robust to clipped data, through numerical experiments.
A future work is to specialize our theoretical analysis to discrete data to analyze the ability of quantized matrix completion methods for recovering discrete data from ceiling effects.
\section*{Acknowledgments}
\label{sec:org4c76e6a}
We would like to thank Ikko Yamane, Han Bao, and Liyuan Xu, for helpful discussions.
MS was supported by the International Research Center for Neurointelligence (WPI-IRCN) at The University of Tokyo Institutes for Advanced Study.
\bibliographystyle{aaai}
\fontsize{9.0pt}{10.0pt} \selectfont \bibliography{2019AAAI.bib} \normalsize
\clearpage

\onecolumn

\global\csname @topnum\endcsname 0
\global\csname @botnum\endcsname 0

\begin{appendices}
Here, we use the same notation as in the main article.
Throughout, we use \(\langle X, Y \rangle = \trace(X^\top Y)\) as the inner product of two matrices \(X, Y \in \matsp\), where \(\trace\) is the trace of a matrix.
For iterative algorithms, a superscript of \((t)\) is used to indicate the quantities at iteration \(t\).
\section{Details of the algorithms \label{app:algorithms}}
\label{sec:orgb8727fc}
Here we describe instances of CMC (clipped matrix completion) algorithms in detail. A more general description can be found in Section~\ref{sec:algorithms} of the main article.
\subsection{Details of DTr-CMC \label{app:algorithms:dtr-cmc}}
\label{sec:orgb372cd5}
The optimization method for DTr-CMC based on subgradient descent \citep{AvronEfficient2012a} is described in Algorithm~\ref{alg:algorithm-DTr-CMC}.
In the algorithm, we let \(\svd\) denote a skinny singular value decomposition subroutine, and \(\mE \in \mathbb{R}^{n_1 \times n_2}\) the all-one matrix.
\begin{algorithm}
\caption{A subgradient descent algorithm for DTr-CMC}
\label{alg:algorithm-DTr-CMC}
\begin{algorithmic}
\REQUIRE $T \in \mathbb{N}$, $\epsilon > 0$, $\lambda_1$, $\lambda_2$, $\{\eta_t\}_{t=1}^T$, and $C \in \mathbb{R}$
\STATE \textbf{Initialize:} $\X^{(0)} = (C+1) \mE$
\FOR{$t = 1, \ldots, T$}
\STATE $\H^{(t)} \gets \nabla \fCMC(\X^{(t-1)})$ // Take the gradient of $f$
\STATE $\U_1^{(t)}, \bSigma_1^{(t)}, \V_1^{(t)\top} \gets \svd(\W(\X^{(t-1)}) \hadamard (\X^{(t-1)} - C) + C)$ // Take a subgradient of $\|\W(\X^{(t-1)}) \hadamard (\X - C) + C\|_\trnrm$
\STATE $\G^{(t)} \gets \W(\X^{(t-1)}) \hadamard (\U_1^{(t)} \V_1^{(t)\top})$ // Same as above
\STATE $\tilde{\X}^{(t)} \gets \X^{(t-1)} - \eta_t (\H^{(t)} + \lambda_2 \G^{(t)})$ // Update in the direction of the subgradient
\STATE $\U_2^{(t)}, \bSigma_2^{(t)}, \V_2^{(t)\top} \gets \svd(\tilde{\X^{(t)}})$
\STATE $\bSigma \gets \bSigma_2^{(t)} - \eta_t \lambda_1 \I$
\STATE $Z_{ij} \gets \Sigma_{ij} > \epsilon$
\STATE $\X^{(t)} \gets \U_2^{(t)} (\bSigma \hadamard \Z) \V_2^{(t)\top}$
\ENDFOR
\RETURN $\mathop{\rm arg~min}\limits_{\X^{(t)}: t = 0, \ldots, T} \fCMC(\X^{(t)}) + \lambda_1 \|\X^{(t)}\|_\trnrm + \lambda_2 \|\W(\X^{(t)}) \hadamard (\X^{(t)} - C) + C\|_\trnrm$
\end{algorithmic}
\end{algorithm}
\subsubsection{Derivation of the algorithm.}
\label{sec:orgd2fbf5c}
Let \(\hadamard\) denote the Hadamard product. The second regularization term of DTr-CMC can be rewritten as
\begin{equation*}\begin{split}
&\lambda_2 \|\Cf(\X)\|_\trnrm = \lambda_2 \|\X \hadamard \W(\X) + C \cdot (1 - \W(\X))\|_\trnrm = \lambda_2 \|\W(\X) \hadamard (\X - C) + C\|_\trnrm
\end{split}\end{equation*}
where \(\W(\X)\) is defined by \(W_{ij}(\X) = \Indicator\{X_{ij} < C\}\).
This is a composition of a non-smooth convex function \(\|\cdot\|_\trnrm\) and a nonlinear operator \(\Cf\), hence it is not trivial to find a method to minimize this function.
Here, in order to minimize the objective function, we employ an iterative scheme to approximate this function with one that has a known subgradient.
For each iteration \((t)\), we find a subgradient of the following heuristic objective at \(\X = \X^{(t-1)}\):
\begin{equation*}\begin{split}
\fCMC(\X) + \lambda_1 \|\X\|_\trnrm + \lambda_2 \|\W(\X^{(t-1)}) \hadamard (\X - C) + C\|_\trnrm.
\end{split}\end{equation*}
and update the parameter in the descending direction. This function is a combination of the trace-norm and a linear transformation.
A subgradient can be calculated by first performing a singular value decomposition \(\U^{(t)} \bSigma^{(t)} \V^{(t)} = \W(\X^{(t-1)}) \hadamard (\X^{(t-1)} - C) + C\), and then calculating \(\W(\X^{(t-1)}) \hadamard (\U^{(t)} \V^{(t)})\).
\subsubsection{Relation to the theoretical analysis in Section~\ref{sec:theories}.}
\label{sec:org38e4c8b}
Here, we show that the loss function \(\fCMC\) (Eq.~\eqref{eq:loss-cmc}) is a convex upper bound of the loss function in Eq.~\eqref{eq:dtr-algorithm}.
\begin{proof}
By a simple calculation,
\begin{equation*}\begin{split}
&\sum_{(i, j) \in \Omega \setminus \mathcal{C}} (M^\clipped_{ij} - X_{ij})^2 + \sum_{(i, j) \in \mathcal{C}} (M^\clipped_{ij} - X_{ij})_+^2 - \sum_{(i, j) \in \Omega} (M^\clipped_{ij} - \mathrm{Clip}(X_{ij}))^2\\
&= \sum_{(i, j) \in \Omega \setminus \mathcal{C}, X_{ij} \geq C} (M^\clipped_{ij} - X_{ij})^2 - (M^\clipped_{ij} - C)^2\\
&= \sum_{(i, j) \in \Omega \setminus \mathcal{C}, X_{ij} \geq C} (C - X_{ij}) (2 M^\clipped_{ij} - C - X_{ij}) \\
&\geq 0.
\end{split}\end{equation*}
Therefore, \(\fCMC\) is an upper bound of the objective function of Eq.~\eqref{eq:dtr-algorithm}.
\end{proof}
\subsubsection{Initialization.}
\label{sec:org1f80a6c}
While we expect the regularization term \(\|\X\|_\trnrm\) to encourage the recovery of the values above the threshold,
the task is difficult as it requires extrapolating the values to outside the range of any observed entries.
To compensate for this difficulty, we initialize the parameter matrix with values strictly above the threshold. This allows the algorithm to start from a matrix whose values are above the threshold and simplify the hypothesis.
Therefore, in the experiment, we initialized all elements of \(\X^{(0)}\) with \(C + 1\) (here, we used \(1\) reflecting the spacing between choices on the rating scale of the benchmark data of recommendation systems. This value can be arbitrarily configured).
\subsubsection{Experiments.}
\label{sec:org1914367}
In both experiments using synthetic data and those using real-world data, we used \(T \in \{1000, 2000\}, \eta_0 \in \{0.5, 1.0, 1.5\}, \eta_t = \eta_0 \cdot 0.99^{t-1}, \epsilon = 10^{-8}\).
The regularization coefficients \(\lambda_1\) and \(\lambda_2\) were grid-searched from \(\{0, 0.2, 0.4, 0.6, 0.8, 1.0\}\).
\subsection{Details of Tr-CMC}
\label{sec:org2b3fe78}
For trace-norm regularized clipped matrix completion, we used the accelerated proximal gradient singular value thresholding algorithm (APG) introduced in \citep{Tohaccelerated2010}. APG is an iterative algorithm in which the gradient of the loss function is used.
Thanks to the differentiability of the squared hinge loss function, we are able to use APG to minimize the CMC objective (Eq.~\eqref{eq:loss-cmc}) with \(\mathcal{R}(\X) = \lambda \|X\|_\trnrm\).
We obtained an implementation of APG for matrix completion from \url{http://www.math.nus.edu.sg/\~mattohkc/NNLS.html}, and modified the code for Tr-CMC.
\subsubsection{Experiments.}
\label{sec:org8e3d584}
In the experiments using real-world data, we used \(T \in \{100, 500\}, \eta = 0.8, L_f = \|\PO\|_\op^2\), and \(\X^{(0)} = \O\).
For the regularization coefficient, the default values proposed by \citet{Tohaccelerated2010} was used,
i.e., using a continuation method to minimize Eq.~\eqref{eq:general-alg-fro} with \(\lambda = \max\{0.7^{t-1}, 10^{-4}\} \|\PO(\M^\c)\|_\op\) for iteration \(t\), to eventually minimize Eq.~\eqref{eq:general-alg-fro} with \(\lambda := 10^{-4}\|\PO(\M^\c)\|_\op\).
\subsection{Details of Fro-CMC}
\label{sec:orga49f619}
This method first parametrizes \(\X\) as \(\mP\Q^\top\), where \(\mP \in \mathbb{R}^{n_1 \times k}, \Q \in \mathbb{R}^{n_2 \times k}\), and minimizes Eq.~\eqref{eq:general-alg-fro} with \(\mathcal{R}(\mP, \Q) = \frac{\lambda}{2}(\|\mP\|_\Fro^2 + \|\Q\|_\Fro^2)\).
Here we use \(\mP^\top = (\p_1 \cdots \p_{n_1}), \Q^\top = (\q_1 \cdots \q_{n_2})\) to denote the (transposed) row vectors of \(\mP\) and \(\Q\).
\subsubsection{Original algorithm.}
\label{sec:orgb727d61}
The minimization objective is not jointly convex in \((\mP, \Q)\). Nevertheless, it is separately convex when one of \(\mP\) and \(\Q\) is fixed.
The idea of alternating least squares (ALS) is to fix \(\mP\) when minimizing Eq.~\eqref{eq:general-alg-fro-orig} with respect to \(\Q\), and vice versa.
In its original form, each update is analytic and takes the form
\begin{equation*}\begin{split}
\q_j &\gets \left(\sum_{i: (i, j) \in \Omega} \p_i \p_i^\top + \lambda \mI\right)^{-1} \left(\sum_{i: (i, j) \in \Omega} M^\clipped_{ij} \p_i\right), \quad j \in [n_2],\\
\p_i &\gets \left(\sum_{j: (i, j) \in \Omega} \q_j \q_j^\top + \lambda \mI\right)^{-1} \left(\sum_{j: (i, j) \in \Omega} M^\clipped_{ij} \q_j\right), \quad i \in [n_1].
\end{split}\end{equation*}
\subsubsection{Proposed algorithm.}
\label{sec:org7e7ad1e}
The squared hinge loss \((x)_+^2\) is differentiable, and its derivative is \((2 x) \Indicator\{x > 0\}\).
Thanks to its differentiability, the loss function Eq.~\eqref{eq:loss-cmc} is also differentiable.
However, in the case of CMC, a closed-form minimizer is not obtained as in the derivation of the original ALS, due to the existence of indicator function in its derivative.
As an alternative, we derive a method to alternately update the parameters by minimizing an approximate objective at each iteration.
Denoting \(z^{(s, t)}_{ij} := \Indicator\{M^\clipped_{ij} > \p_i^{(s)\top} \q_j^{(t)}\}\), we use the following heuristic update rules to approximately obtain the minimizers.
\begin{subequations}\begin{align}
\q_j^{(t)} &\gets \left(\sum_{i: (i, j) \in \Omega \setminus \mathcal{C}} \p_i^{(t-1)} \p_i^{{(t-1)}\top} + \sum_{i: (i, j) \in \mathcal{C}} \p_i^{(t-1)} \p_i^{(t-1)\top} z^{(t-1, t-1)}_{ij} + \lambda \mI\right)^{-1} \notag\\
&\qquad \left(\sum_{i: (i, j) \in \Omega \setminus \mathcal{C}} M^\clipped_{ij} \p_i^{(t-1)} + \sum_{i: (i, j) \in \mathcal{C}} M^\clipped_{ij} z^{(t-1, t-1)}_{ij} \p_i^{(t-1)}\right), \label{eq:als-update-q}\\
\p_i^{(t)} &\gets \left(\sum_{j: (i, j) \in \Omega \setminus \mathcal{C}} \q_j^{(t-1)} \q_j^{(t-1)\top} + \sum_{j: (i, j) \in \mathcal{C}} \q_j^{(t-1)} \q_j^{(t-1)\top} z^{(t-1, t)}_{ij} + \lambda \mI\right)^{-1} \notag\\
&\qquad \left(\sum_{j: (i, j) \in \Omega \setminus \mathcal{C}} M^\clipped_{ij} \q_j^{(t-1)} + \sum_{j: (i, j) \in \mathcal{C}} M^\clipped_{ij} z^{(t-1, t)}_{ij} \q_j^{(t-1)} \right), \label{eq:als-update-p}
\end{align}\end{subequations}
where \(\mI \in \mathbb{R}^{r \times r}\) is the identity matrix.
We iterate between Eq.~\eqref{eq:als-update-q} and Eq.~\eqref{eq:als-update-p} as in Algorithm~\ref{alg:approximate-als}.
For the same reason as DTr-CMC, we let the algorithm start from a matrix whose values are all \(C+1\).
In the algorithm, we let \(\mEP \in \Psp\) and \(\mEQ \in \Qsp\) be the all-one matrices.
\begin{algorithm}
\caption{An approximate alternating least squares algorithm for Fro-CMC.}
\label{alg:approximate-als}
\begin{algorithmic}
\REQUIRE $k \in \mathbb{N}$, $T \in \mathbb{N}$, $\lambda > 0$, and $C \in \mathbb{R}$.
\STATE \textbf{Initialize:} $\mP^{(0)} = \frac{C+1}{\sqrt{k}} \mEP$ and $\Q^{(0)} = \frac{C+1}{\sqrt{k}} \mEQ$
\FOR{$t = 1, \ldots, T$}
\STATE Obtain $\q_j^{(t)}$ according to Eq.~\eqref{eq:als-update-q}
\STATE Obtain $\p_i^{(t)}$ according to Eq.~\eqref{eq:als-update-p}
\ENDFOR
\RETURN $\hM = \mP^{(t)}\Q^{(t)\top}$
\end{algorithmic}
\end{algorithm}
\subsubsection{Experiments.}
\label{sec:orgc176721}
In the experiments using synthetic data, hyperparameters were grid-searched from \(\lambda \in \{0.01, 0.1, 0.5, 1.0\}, k \in \{5l: l \in [8]\}\), and \(T = 200\).
In the experiments using real-world data, hyperparameters were grid-searched from \(\lambda \in \{10^{-1}, 10^{-2}, 10^{-3}\}, k \in \{20 + 4l: l \in [5]\}\), and \(T \in \{500, 1500\}\).
\section{The generation process of the synthetic data \label{app:data-generation}}
\label{sec:org33091f2}
In order to obtain rank-\(r\) matrices with different rates of clipping, synthetic data were generated by the following procedure.
\begin{enumerate}
\item For a fixed \(L \in \mathbb{N}\), we first generated a matrix \(\tilde \M\) whose entries were independent samples from a uniform distribution over \([L]\).
\item We used a non-negative matrix factorization algorithm \citep{LeeAlgorithms2001a} to approximate \(\tilde \M\) with a matrix \(\M\) of rank at most \(r\).
\item We repeated the generation of \(\tilde \M\) until the rank of \(\M\) was exactly \(r\). Note that with this procedure, \(\|\M\|_\infty\) may become larger than \(L\).
\item We randomly split \(\M\) into \(\tilde{\M}^\clipped_\Omega, \M^\valid, \M^\test\) with ratio \((0.8, 0.1, 0.1)\), which were used for training, validation and testing, respectively.
\item We clipped \(\tilde{\M}^\clipped_\Omega\) at the clipping threshold \(C\) to generate \(\McOmega\).
\end{enumerate}

Note that we have also conducted experiments using continuous-valued synthetic data and confirmed the results are similar to discrete-valued cases. The experiments are designed to complement the theoretical findings, and they can be validated regardless of whether the data are discrete.

The visual demonstration of CMC in Figure~\ref{fig:illustration} was generated by the process above with \(r = 2, L = 15, p=1\), and \(C = 10\).
Figure~(\ref{fig:illustration}\subref{fig:res-illust-CMC}) is a result of applying Fro-CMC to the generated matrix.
\begin{table}
\begin{center}
\captionof{table}{\label{tbl:notation}
List of symbols used in the main text.}
\begin{tabular}{|l|l|l|}
\hline
Symbol & Meaning\\
\hline
\(\M\) & Ground truth matrix (rank \(r\))\\
\(\Mc\) & Clipped ground truth matrix\\
\(\McOmega\) & Observed matrix\\
\(\hM\) & Estimated matrix\\
\(\Omega\) & Observed entries\\
\(\C\) & Observed clipped entries\\
\hline
\(\PC\) & The characteristic operator of clipping\\
\(T\) & The information subspace\\
\(\U\mSigma\V^\top\) & Skinny singular value decomposition of \(\M\)\\
\(\UCoherence, \VCoherence\) & The row-wise (column-wise) coherence parameters\\
\hline
\(\mu_0\) & Coherence of \(\M\)\\
\(\jointCoherence\) & Joint coherence of \(\M\)\\
\(\nuC\) & The importance of \(\TrueNonClipped\) for \(T\)\\
\(\rhoFro\) & The information loss due to clipping w.r.t. the Frobenius norm\\
\(\rhoInfty\) & The information loss due to clipping w.r.t. the infinity norm\\
\(\rhoOp\) & The information loss due to clipping w.r.t. the operator norm\\
\(k_0\) & The number of partitions that generated \(\Omega\) (introduced for the theoretical analysis)\\
\(\muG\) & \(\supG \unnCoherence(\X)\)\\
\hline
\(\fMC\) & The loss function for CMC using only the squared loss\\
\(\fCMC\) & The loss function for CMC using both the squared loss and the squared hinge loss\\
\hline
\(\mathbb{R}\) & The set of real numbers\\
\(\mathbb{N}\) & The set of natural numbers\\
\(\Order\) & Landau's asymptotic notation for \(n_1, n_2 \to \infty\)\\
\([n]\) & \(\{1, 2, \ldots, n\}\) where \(n \in \mathbb{N}\)\\
\(X_{ij}\) & Element \((i, j)\) of the matrix \(\X\)\\
\(\Proj_{\mathcal{S}}\) & Projection to \(\mathcal{S}\) (\(\mathcal{S} \subset [n_1] \times [n_2]\));\\
 & the linear operator to set matrix elements outside \(\mathcal{S}\) to zero\\
\(\top\) & The transpose\\
\(\normbar\cdot\normbar\) & The Euclidean norm of vectors\\
\(\normbar\cdot\normbar_\trnrm\) & The trace-norm\\
\(\normbar\cdot\normbar_\op\) & The operator norm\\
\(\normbar\cdot\normbar_\Fro\) & The Frobenius norm\\
\(\normbar\cdot\normbar_\infty\) & The entry-wise infinity norm\\
\(\mathrm{span}(S)\) & The subspace of \(\matsp\) spanned by \(S \subset \matsp\)\\
\(\range\) & The range of a mapping\\
\hline
\end{tabular}
\end{center}
\end{table}
\section{Computation time of the proposed methods \label{app:run-time}}
\label{sec:org2fdbb2d}
Here, we report the computation time of the proposed methods in the experiments in Section~\ref{sec:experiments}.
\subsubsection{Common setup.}
\label{sec:org29d9128}
All the experiments were run on a workstation with Intel(R) Xeon(R) CPU E5-2640 v3 @ 2.60GHz.
The reported running times are the longest wall-clock time for each method among all hyperparameter candidates.
Note that the implementations varied (DTr-CMC and Tr-CMC were in MATLAB and Fro-CMC was in Python) and the levels of code optimization may vary.
\subsubsection{Results.}
\label{sec:orgcc39eb5}
In the experiment using synthetic data sets, the proposed methods ran in 97 (DTr-CMC), 85 (Fro-CMC), and 10 seconds (Tr-CMC).
In the experiment using benchmark data sets, our proposed methods ran on FilmTrust in 706 (DTr-CMC), 606 (Fro-CMC), and 15 seconds (Tr-CMC),
whereas on Movielens 100K, they ran in 324 (DTr-CMC), 609 (Fro-CMC), and 11 seconds (Tr-CMC).
These figures show that our proposed methods are usable for moderately sized matrices.
\subsubsection{Running time on Movielens (20M) dataset.}
\label{sec:org6df0c86}
Here, we also report the running time of our proposed methods on a larger dataset: Movielens (20M)\footnote{\url{http://grouplens.org/datasets/movielens/20m/}}. Movielens (20M) consists of ratings from 138,000 users to 27,000 movies on a scale from \(0.5\) to \(5.0\) with a stride of \(0.5\) (approximately 99.6\% missing).
The running time on Movielens (20M) for our proposed methods were: 11 minutes 51 seconds per iteration (DTr-CMC; only the top 20 singular values were calculated for SVD), 40 minutes 27 seconds (Tr-CMC), and 8 minutes 46 seconds per epoch (Fro-CMC), for the hyperparameter settings that required the longest running times.
\subsubsection{Scaling up the proposed methods.}
\label{sec:orgd9e82f9}
In order to scale up the proposed methods to very large matrices, one can employ existing tricks in combination with our proposed methods, e.g., stochastically approximating subgradients \citep{AvronEfficient2012a}, calculating only the first few singular values, or using stochastic/coordinate gradient descent \citep{MarecekMatrix2017} instead of the ALS-like algorithm.
\section{Proof of Theorem~\ref{thm:cmc-feasible} \label{app:thm1}}
\label{sec:orgca20eb9}
We define \(\oij := \Indicator\{(i, j) \in \Omega\}\) and \(\oijk := \Indicator\{(i, j) \in \Omega_k\}\).
We also define linear operators \(\RO, \RO^\half, \RB\), and \(\ROk\) by \(\RO := \frac{1}{p}\PO, \RO^\half := \frac{1}{\sqrt{p}} \PO, \RB := \frac{1}{p} \PB\), and \(\ROk := \frac{1}{q} \POk\).
Note \(\PA, \PB, \PO, \RO, \RO^\half\) are all self-adjoint.
We denote the identity map by \(\I: \matsp \to \matsp\).
The summations \(\sumij\) indicate the summation over \((i, j) \in [n_1] \times [n_2]\).
The maximum \(\max_{(i, j)}\) indicate the maximum over \((i, j) \in [n_1] \times [n_2]\).
The standard basis of \(\mathbb{R}^{n_1}\) is denoted by \(\{\e_i\}_{i=1}^{n_1}\), and that of \(\mathbb{R}^{n_2}\) by \(\{\f_j\}_{j=1}^{n_2}\).
Even though \(\PC\) is nonlinear, we omit parentheses around its arguments when the order of application is clear from the context (operators are applied from right to left).
For continuous linear operators operating on \(\matsp\), \(\|\cdot\|_\op\) is the operator norm induced by the Frobenius norm.

Theorem~\ref{thm:cmc-feasible} is a simplified statement of the following theorem.
Its proof is based on guarantees of exact recovery for missing entries \citep{CandesExact2009,Rechtsimpler2011a,ChenCompleting2015}, and it is extended to deal with the nonlinearity arising from \(\PC\).
\begin{theorem}[]
Assume \(\rhoFro < \frac{1}{2}, \rhoOp < \frac{1}{4}, \rhoInfty < \frac{1}{2}\), and \(\nuC < \frac{1}{2}\),
and assume the independent and uniform sampling scheme as in Assumption~\ref{assumption:sampling-scheme}.
If for some \(\beta > \max\{\lemOpBetaMin, \lemFroBetaMin, \lemInfBetaMin\}\),
\begin{equation}\label{eq:thm:exact-recovery-guarantee:p}\begin{split}
p \geq \min\left\{1, \max\left\{\frac{1}{n_1 n_2}, \pMinFro, \pMinOpOne,\pMinOpTwo,\pMinInfty,\pMinMain\right\}\right\}
\end{split}\end{equation}
where
\begin{equation*}\begin{split}
\pMinFro &= \lemFroPLowerBound, \\
\pMinOpOne &= \lemOpPLowerBoundForBernsteinCondition, \\
\pMinOpTwo &= \lemOpPLowerBoundForBound, \\
\pMinInfty &= \lemInfPLowerBound, \\
\pMinMain &= \lemMainPLowerBound,
\end{split}\end{equation*}
is satisfied, then the minimizer of Eq.~\eqref{eq:cmc-tr-min} is unique and equal to \(\M\)
with probability at least \(1 - k_0 (\lemFroDelta + \lemInfDelta + \lemOpDelta) - \lemMainDelta\).
\label{thm:exact-recovery-guarantee}
\end{theorem}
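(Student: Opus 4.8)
The plan is to follow the dual-certificate strategy of \citet{CandesExact2009,Rechtsimpler2011a} in the refined form of \citet{ChenCompleting2015}, modified to accommodate the inequality constraints on $\C$ and the nonlinearity of the clipping operator $\PC$. First I would reduce exact recovery to the existence of a suitable dual certificate. Writing $\H := \X - \M$ for a feasible perturbation, the constraints of Eq.~\eqref{eq:cmc-tr-min} force $\H = 0$ on $\Omega \setminus \C$ and $H_{ij} \geq 0$ on the active clipped entries (those with $M_{ij} = C$), while $\H$ is free elsewhere. Using the subgradient inequality $\|\M + \H\|_\trnrm \geq \|\M\|_\trnrm + \langle \UV, \H\rangle + \|\PTT \H\|_\trnrm$, I would show that the strict improvement $\langle \UV, \H\rangle + \|\PTT\H\|_\trnrm > 0$ holds for every feasible $\H \neq 0$ provided there is a matrix $Y$, supported on $\Omega$ and respecting the dual-feasible cone encoded by $\PC$ (nonpositive on active clipped entries, zero on strictly clipped observed entries), with $\PT Y \approx \UV$ and $\|\PTT Y\|_\op < 1$; such a $Y$ gives $\langle Y, \H\rangle \leq 0$, which combined with the subgradient inequality yields the claim on the component $\PTT\H \neq 0$.

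The second step is an injectivity estimate on $T$ using only the reliable (equality-constrained) information. Because the equality constraint lives on $\Omega \setminus \C = \Omega \cap \TrueNonClipped$, I would prove a sampling lemma of the form $\|\tfrac{1}{p}\PT \POminusC \PT - \PT\|_\op < 1$ by combining the deterministic bound $\nuC = \|\PT \PnC \PT - \PT\|_\op < \tfrac{1}{2}$ with a matrix-Bernstein concentration for the random restriction to $\Omega$. This is the source of the condition $p \geq \pMinMain$ and of the failure term $\lemMainDelta$; it rules out nonzero feasible $\H \in T$ and thereby closes the degenerate case $\PTT \H = 0$ left open by the certificate argument.

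The third and central step is to construct $Y$ by a golfing scheme adapted to clipping. I would build $Y$ over the $k_0 = \kZeroValue$ independent batches $\Omega_k$ of Assumption~\ref{assumption:sampling-scheme}, each increment applying the clipping-adjusted batch operator $\ROkC$ so that $Y$ stays in the dual-feasible cone, and track three quantities of the residual $\Dk := \UV - \PT Y_k$: its Frobenius norm must contract geometrically (controlled by $\rhoFro < \tfrac{1}{2}$, giving $p \geq \pMinFro$), its infinity norm must stay bounded (controlled by $\rhoInfty < \tfrac{1}{2}$, giving $p \geq \pMinInfty$), and the accumulated operator norm $\|\PTT Y\|_\op$ must remain below $1$ (controlled by $\rhoOp < \tfrac{1}{4}$, giving $p \geq \pMinOpOne, \pMinOpTwo$). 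Each round contributes failure probabilities $\lemFroDelta$, $\lemInfDelta$, and $\lemOpDelta$; a union bound over the $k_0$ rounds together with the main lemma yields the stated probability $1 - k_0(\lemFroDelta + \lemInfDelta + \lemOpDelta) - \lemMainDelta$.

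I expect the main obstacle to be the nonlinearity of $\PC$ in the golfing step. In ordinary matrix completion every operator is linear and supported on $\Omega$, so the Bernstein bounds apply verbatim; here the $\max\{\cdot, 0\}$ in $\PC$ on the active clipped entries breaks linearity, so the usual contraction $\tfrac{1}{p}\PT \PO \PT \approx \PT$ must be replaced by a distortion estimate in which $\rhoFro, \rhoInfty, \rhoOp$ measure how far $\PT \PC$ and $\PC$ deviate from the identity on $T$. The delicate point is to simultaneously (i) keep each golfing increment inside the cone so that $\langle Y, \H\rangle \leq 0$ for all feasible $\H$, and (ii) preserve the geometric Frobenius contraction and the bound $\|\PTT Y\|_\op < 1$ despite this clipping-induced distortion; the strict thresholds $\rhoFro < \tfrac{1}{2}$, $\rhoInfty < \tfrac{1}{2}$, $\rhoOp < \tfrac{1}{4}$, and $\nuC < \tfrac{1}{2}$ are precisely what is needed to absorb the distortion while retaining the standard margins of the Candès--Recht--Chen argument.
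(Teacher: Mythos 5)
Your plan follows the same route as the paper's proof: reduce exact recovery to a dual certificate plus a restricted-injectivity estimate (the paper's Lemma~\ref{lem:feasibility:main} and Lemma~\ref{lem:main-lemma-conc}), construct the certificate by a golfing scheme whose increments incorporate the clipping operator (Definition~\ref{def:generalized-golfing-scheme}), control the residual in the Frobenius, infinity, and operator norms via $\rhoFro, \rhoInfty, \rhoOp$ (Lemmas~\ref{lem:fro-conc}, \ref{lem:inf-conc}, \ref{lem:op-conc}), and finish with a union bound whose accounting is exactly the stated probability. There is, however, a genuine error in your Step 1: the sign of your dual-feasible cone is backwards. A feasible perturbation satisfies $H_{ij} \geq 0$ on observed entries with $M_{ij} = C$, so in the decomposition
\begin{equation*}
\begin{split}
\langle \UV + \W, \H \rangle
&= \langle \UV - \PT\Y, \PT\H \rangle + \langle \W - \PTT\Y, \PTT\H \rangle + \langle \Y, \H \rangle \\
&\geq -\|\UV - \PT\Y\|_\F\,\|\PT\H\|_\F + \bigl(1 - \|\PTT\Y\|_\op\bigr)\|\PTT\H\|_\trnrm + \langle \Y, \H \rangle
\end{split}
\end{equation*}
the last term can be discarded only when $\langle \Y, \H \rangle \geq 0$, which forces $Y_{ij} \geq 0$ on those entries (and $Y_{ij} = 0$ on observed entries with $M_{ij} > C$). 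With your convention --- nonpositive on the active clipped entries, hence $\langle \Y, \H \rangle \leq 0$ --- discarding that term produces an \emph{upper} bound on $\langle \UV + \W, \H \rangle$ rather than a lower bound, and no conclusion $\|\M + \H\|_\trnrm > \|\M\|_\trnrm$ follows. The flip is also internally inconsistent with your own Step 3: the golfing increments $\ROkC \PT \Dkminus = \ROk \PC(\PT \Dkminus)$ are, by the definition of $\PC$, equal to $\max\{\cdot, 0\}$ on entries with $M_{ij} = C$ and $0$ on entries with $M_{ij} > C$, so the certificate your construction produces lies in the nonnegative cone $\range \POC$ that the paper uses, not the nonpositive one you posit.

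A secondary imprecision: you use the injectivity lemma only to ``rule out nonzero feasible $\H \in T$'' and close the degenerate case $\PTT\H = 0$. Because the certificate satisfies $\PT\Y \approx \UV$ only approximately, this is not enough: the error term $\|\UV - \PT\Y\|_\F \|\PT\H\|_\F$ must be absorbed by $(1 - \|\PTT\Y\|_\op)\|\PTT\H\|_\trnrm$, which requires the quantitative comparison the paper extracts from the same concentration result, namely $\|\PT\H\|_\F \leq \sqrt{2/p}\,\|\PTT\H\|_\F$ for \emph{every} feasible $\H$ (derived from $0 = \|\ROh\PnC\H\|_\F$ together with $\|\PT\RO\PnC\PT - \PT\PnC\PT\|_\op \leq \tfrac{1}{2} - \nuC$ and $\nuC < \tfrac{1}{2}$), matched against the certificate accuracy $\|\UV - \PT\Y\|_\F \leq \tfrac{\sqrt{p}}{2\sqrt{2}}$. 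With the cone's sign corrected and the injectivity estimate used quantitatively, your outline coincides with the paper's argument.
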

\subsection{Proof of Theorem~\ref{thm:cmc-feasible}}
\label{sec:org49d6d75}
We first show how to obtain the simplified Theorem~\ref{thm:cmc-feasible} from Theorem~\ref{thm:exact-recovery-guarantee}.
\begin{proof}
We impute \(\beta = 3\) here. This is justified because, under \(n_1, n_2 \geq 2\), we have \(\lemFroBetaMin \leq 1\) and \(\lemInfBetaMin \leq 2\).

Next we simplify the condition on \(p\).
By denoting \(\alpha = \alphaValue\), we obtain \(k_0 := \kZeroValue \leq \kzeroSubstitution\).
We obtain the condition on \(p\) in Theorem~\ref{thm:cmc-feasible} by the following calculations:
\begin{equation*}\begin{split}
\pMinFro &\leq \mu_0 r \frac{8 \beta}{(1/2 - \rhoFro)^2} \kzeroSubstitution \frac{(n_1 + n_2) \log(n_1 n_2)}{n_1 n_2} \leq \mu_0 r \cSubstitution \orderSubstitution, \\
\pMinOpOne &\leq \frac{8 \beta}{3 (1/4 - \rhoOp)^2} \kzeroSubstitution \frac{\log(n_1 + n_2)}{\max(n_1, n_2)} \leq \cSubstitution \orderSubstitution, \\
\pMinOpTwo &\leq \jointCoherence^2 r \frac{8 \beta}{3(1/4 - \rhoOp)^2} \kzeroSubstitution \frac{\max(n_1, n_2)}{\log(n_1 + n_2)}{n_1 n_2} \leq \jointCoherence^2 r \cSubstitution \orderSubstitution, \\
\pMinInfty &\leq \mu_0 r \frac{8 \beta}{3(1/2 - \rhoInfty)^2} \kzeroSubstitution \frac{(n_1 + n_2) \log(n_1 n_2)}{n_1 n_2} \leq \mu_0 r \cSubstitution \orderSubstitution, \\
\pMinMain &\leq \mu_0 r \frac{8 \beta}{3(1/2 - \nuC)^2} \frac{(n_1 + n_2) \log(n_1 n_2)}{n_1 n_2} \leq \mu_0 r \cSubstitution \orderSubstitution,
\end{split}\end{equation*}
where we used
\begin{equation*}\begin{split}
\frac{\log(n_1 + n_2)}{\max(n_1, n_2)} = \frac{\min(n_1, n_2) \log(n_1 + n_2)}{n_1 n_2} \leq \frac{(n_1 + n_2)\log(n_1 + n_2)}{n_1 n_2} \leq \frac{(n_1 + n_2) \log(n_1 n_2)}{n_1 n_2},
\end{split}\end{equation*}
which follows from \(n_1, n_2 \geq 2\).

Let \(\delta = k_0 (\lemFroDelta + \lemInfDelta + \lemOpDelta) - \lemMainDelta\).
Now we simplify the upper bound on \(\delta\).
\begin{equation*}\begin{split}
\delta &\leq \kzeroSubstitution (e^{1/4} (n_1 n_2)^{-\beta} + 2 (n_1 n_2)^{1 - \beta} + (n_1 + n_2)^{1 - \beta}) - 2(n_1 n_2)^{1-\beta} \\
&= \frac{\alpha + \frac{1}{2 \log 2} \log(n_1 n_2)}{n_1 + n_2} \left(e^{1/4} \frac{n_1 + n_2}{(n_1 n_2)^\beta} + 2 \frac{n_1 + n_2}{(n_1 n_2)^{\beta -1}} + (n_1 + n_2)^{2 - \beta}\right) - 2 (n_1 n_2)^{1-\beta} \\
&= \frac{\alpha + \frac{1}{2 \log 2} \log(n_1 n_2)}{n_1 + n_2} \left(e^{1/4} \left(\frac{n_1 + n_2}{n_1 n_2}\right)^\beta \frac{1}{n_1 + n_2} + 2 \left(\frac{n_1 + n_2}{n_1 n_2}\right)^{\beta - 1} + 1\right)(n_1 + n_2)^{2 -\beta} - 2 (n_1 n_2)^{1-\beta} \\
&= \Order\left(\frac{\log(n_1 n_2)}{n_1 + n_2}\right) (n_1 + n_2)^{2 - \beta}.
\end{split}\end{equation*}
Substituting \(\beta = 3\), we obtain the simplified statement with regard to \(\delta\) in Theorem~\ref{thm:cmc-feasible}.
\end{proof}
\subsection{Preliminary}
\label{sec:org781c129}
Before moving on to the proof, let us note the following property of coherence to be used in the proof.
\begin{property}[]
\begin{equation*}\begin{split}
\|\PT(\Eij)\|_\Fro^2 \leq \frac{n_1 + n_2}{n_1 n_2} \mu_0 r
\end{split}\end{equation*}
\end{property}
\begin{proof}
\begin{equation*}\begin{split}
\|\PT(\Eij)\|_\F^2 &= \|\PU(\Eij)\|_\F^2 + \|\PV(\Eij)\|_\F^2 - \|\PU(\Eij)\|_\F^2 \|\PV(\Eij)\|_\F^2 \\
&\leq \|\PU(\Eij)\|_\F^2 + \|\PV(\Eij)\|_\F^2 \\
&\leq \frac{n_1 + n_2}{n_1 n_2} \mu_0 r.
\end{split}\end{equation*}
\end{proof}

Note that since \((1 - p)^{1/k_0} \leq 1 - (1 / k_0) p\), it follows that \(q \geq (1 / k_0) p\).
We will repeatedly use this relation in proving concentration properties.
\subsection{Main lemma}
\label{sec:org8dd2ea9}
The key element in the main lemma of our proof (Lemma~\ref{lem:feasibility:main}) is to find a matrix in \(\range \POC\) that is approximately a subgradient of \(\|\cdot\|_\trnrm\) at \(\M\).
Such a matrix is called a \emph{dual certificate}.
Its definition is extended to deal with inequality constraints compared to the definitions in previous works \citep{CandesExact2009,Rechtsimpler2011a,ChenCompleting2015}.
\begin{definition}[Dual certificate]
We say that \(\Y \in \matsp\) is a dual certificate if it satisfies
\begin{equation*}\begin{split}
\Y \in \range &\POC, \\
\|\UV - \PT \Y\|_\F &\leq \uvupperbound, \\
\|\PTT \Y\|_\op &< \opnormupperbound. \\
\end{split}\end{equation*}
\label{def:dual-cert}
\end{definition}
By definition of \(\PC\), we have
\begin{equation*}\begin{split}
\langle \PO(\Mc - \M), \Y \rangle \geq 0.
\end{split}\end{equation*}
Given a dual certificate \(\Y\), we can have the following result.
\begin{lemma}[Main lemma]
Assume that a dual certificate \(\Y\) exists and that \(\|\PT \PO\PnC \PT - \PT\PnC\PT\|_\op \leq \frac{1}{2} - \nuC\) holds.
Then, the minimizer of Eq.~\eqref{eq:cmc-tr-min} is unique and is equal to \(\M\).
\label{lem:feasibility:main}
\end{lemma}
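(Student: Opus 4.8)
The plan is to run the by-now-standard dual-certificate argument for exact recovery by trace-norm minimization \citep{CandesExact2009,Rechtsimpler2011a,ChenCompleting2015}, adapted to the one-sided (inequality) constraints on $\C$. Writing $\D := \X - \M$ for an arbitrary feasible $\X$, I would show $\|\M + \D\|_\trnrm > \|\M\|_\trnrm$ whenever $\D \neq \mO$, which gives optimality and uniqueness at once. Feasibility translates into two usable facts about $\D$: the equality constraints give $\POminusC(\D) = \mO$, and the clipping constraints give $X_{ij} \ge C$ for all $(i,j) \in \C$.

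First I would produce a subgradient lower bound: choosing $\W_0 \in T^\perp$ with $\|\W_0\|_\op \le 1$ and $\langle \W_0, \PTT\D\rangle = \|\PTT\D\|_\trnrm$ makes $\UV + \W_0$ a subgradient of $\|\cdot\|_\trnrm$ at $\M$, so that $\|\M+\D\|_\trnrm \ge \|\M\|_\trnrm + \langle \UV, \D\rangle + \|\PTT\D\|_\trnrm$. I would then insert the dual certificate through the identity $\langle \UV, \D\rangle = \langle \UV - \PT\Y, \PT\D\rangle + \langle \Y, \D\rangle - \langle \PTT\Y, \PTT\D\rangle$ and bound the three pieces using the defining properties of $\Y$: Cauchy--Schwarz together with $\|\UV - \PT\Y\|_\Fro \le \uvupperbound$ for the first, $\|\PTT\Y\|_\op < \tfrac12$ for the third, and $\langle \Y, \D\rangle \ge 0$ for the middle one.

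The middle inequality is exactly where the one-sided constraints must be exploited, and I expect it to be the conceptually delicate step. Since $\Y \in \range \POC$ is supported on $\Omega$ and non-negative on $\C$, while $D_{ij} = 0$ on $\Omega \setminus \C$, I would write $\langle \Y, \D\rangle - \langle \Y, \PO(\Mc - \M)\rangle = \sum_{(i,j)\in\C} \Y_{ij}(X_{ij} - C) \ge 0$ (each summand non-negative because $\Y_{ij} \ge 0$ and $X_{ij} \ge C$), and then invoke the already-noted relation $\langle \PO(\Mc - \M), \Y\rangle \ge 0$ to conclude $\langle \Y, \D\rangle \ge 0$. This replaces the clean $\langle \Y, \D\rangle = 0$ available in ordinary (equality-constrained) matrix completion.

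Finally I would establish injectivity on $T$: combining the hypothesis $\|\PT\RO\PnC\PT - \PT\PnC\PT\|_\op \le \tfrac12 - \nuC$ with $\nuC = \|\PT\PnC\PT - \PT\|_\op$ via the triangle inequality gives $\|\PT\RO\PnC\PT - \PT\|_\op \le \tfrac12$, hence $\langle \RO\PnC\Z, \Z\rangle \ge \tfrac12\|\Z\|_\Fro^2$ for $\Z \in T$. Taking $\Z = \PT\D$, using $\RO = \tfrac1p\PO$, and exploiting $\POminusC(\D) = \PO\PnC(\D) = \mO$ (note $\Omega\setminus\C = \Omega \cap \TrueNonClipped$) to replace $\PO\PnC\PT\D$ by $-\PO\PnC\PTT\D$ yields $\|\PT\D\|_\Fro \le \sqrt{2/p}\,\|\PTT\D\|_\Fro$. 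Assembling the pieces collapses the lower bound to $\|\M+\D\|_\trnrm \ge \|\M\|_\trnrm + (\tfrac12 - \|\PTT\Y\|_\op)\|\PTT\D\|_\trnrm$; since $\|\PTT\Y\|_\op < \tfrac12$ and since $\PTT\D = \mO$ would force $\PT\D = \mO$ and hence $\D = \mO$, the increment is strictly positive for every $\D \neq \mO$, proving uniqueness. Besides the sign bookkeeping in the $\langle \Y, \D\rangle \ge 0$ step, the part requiring the most care is this last injectivity estimate, where the two operator-norm hypotheses must jointly certify that the observed non-clipped entries alone control the $T$-component of any feasible perturbation.
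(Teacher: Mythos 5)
Your proposal is correct and follows essentially the same route as the paper's own proof: the identical subgradient-plus-dual-certificate decomposition, the same element-wise sign argument yielding $\langle \Y, \D\rangle \ge 0$ from the one-sided constraints, and the same injectivity estimate $\|\PT\D\|_\F \le \sqrt{2/p}\,\|\PTT\D\|_\F$ obtained by combining the two operator-norm hypotheses (via the triangle inequality through $\nuC$) with $\POminusC(\D) = \mO$. Your explicit handling of the degenerate case $\PTT\D = \mO$ (which forces $\D = \mO$ and secures strictness/uniqueness) is a small refinement of a step the paper leaves implicit, not a different approach.
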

\begin{proof}
Note that \(\M\) is in the feasibility set of Eq.~\eqref{eq:cmc-tr-min}.
Let \(\hM \in \matsp\) be another matrix (different from \(\M\)) in the feasibility set and denote \(\H := \hM - \M\).
Since the trace-norm is dual to the operator norm \citep[Proposition 2.1]{RechtGuaranteed2010b}, there exists \(\W \in T^\perp\) which satisfies \(\|\W\|_\op = 1\) and \(\langle \W, \PTT \H \rangle = \|\PTT \H\|_\trnrm\). It is also known that by using this \(\W\), \(\UV + \W\) is a subgradient of \(\|\cdot\|_\trnrm\) at \(\M\) \citep{CandesExact2009}. Therefore, we can calculate
\begin{equation}\label{eq:main:1}\begin{split}
\|\hM\|_\trnrm &= \|\M + \H\|_\trnrm \\
&\geq \|\M\|_\trnrm + \langle \H, \UV + \W \rangle \\
&= \|\M\|_\trnrm + \langle \H, \UV - \PT \Y \rangle + \langle \H, \W - \PTT \Y \rangle + \langle \H, \Y \rangle \\
&\geq \|\M\|_\trnrm + \langle \PT \H, \UV - \PT \Y \rangle + \langle \PTT \H, \W - \PTT \Y \rangle + \langle \H, \Y \rangle,
\end{split}\end{equation}
where we used the self-adjointness of the projection operators, as well as \(\UV \in T\).

From here, we will bound each term in the rightmost equation of Eq.~\eqref{eq:main:1}.
\paragraph{(Lower-bounding $\langle \H, \Y \rangle$ with $0$).}
We have \(\langle \H, \Y \rangle \geq \langle \Mc - \M, \Y \rangle \geq 0\), since
\begin{equation*}\begin{split}
\langle \H, \Y \rangle - \langle \Mc - \M, \Y \rangle = \langle \hM - \Mc, \Y \rangle = \langle \PO(\hM - \Mc), \Y \rangle \geq 0
\end{split}\end{equation*}
can be seen by considering the signs element-wise.

\paragraph{(Lower-bounding $\langle \PTT \H, \PTT (\W - \Y) \rangle$ with $\|\PTT \H\|_\F$).}
We have
\begin{equation*}\begin{split}
\langle \PTT \H, \PTT (\W - \Y) \rangle &= \|\PTT \H\|_\trnrm - \langle \PTT \H, \PTT \Y \rangle \\
&\geq (1 - \|\PTT \Y\|_\op) \|\PTT \H\|_\trnrm \\
&\geq (1 - \|\PTT \Y\|_\op) \|\PTT \H\|_\F.
\end{split}\end{equation*}

\paragraph{(Lower-bounding $\langle \PT \H, \UV - \PT \Y \rangle$ with $\|\PTT \H\|_\F$).}
Now note
\begin{equation*}\begin{split}
\langle \PT \H, \UV - \PT \Y \rangle \geq - \|\PT \H\|_\F \|\UV - \PT \Y\|_\F, \\
\end{split}\end{equation*}
We go on to upper-bound \(\|\PT \H\|_\F\) by \(\|\PTT \H\|_\F\).

Note \(0 = \|\ROh\PnC\H\|_\Fro \geq \|\ROh\PnC\PT \H\|_\Fro - \|\ROh\PnC\PTT\H\|_\Fro\).
Therefore, \(\|\ROh\PnC\PT \H\|_\Fro \geq \|\ROh\PnC\PTT\H\|_\Fro\).
Now
\begin{equation*}\begin{split}
\|\ROh\PnC\PT\H\|_\Fro^2 &= \langle \RO\PnC\PT\H, \PnC\PT\H \rangle \\
&= \langle \RO\PnC\PT\H, \PT\H \rangle \\
&= \|\PT\H\|_\Fro^2 + \langle \PT(\RO \PnC\PT - \PT) \PT\H, \PT\H \rangle \\
&\geq \|\PT\H\|_\Fro^2 - \|\PT\RO\PnC\PT - \PT\|_\op \|\PT\H\|_\Fro^2 \\
&\geq \|\PT\H\|_\Fro^2 - (\|\PT\RO\PnC\PT - \PT\PnC\PT\|_\op + \|\PT\PnC\PT - \PT\|_\op) \|\PT\H\|_\Fro^2 \\
&\geq \left(1 - \left(\frac{1}{2} - \nuC\right) - \nuC\right) \|\PT\H\|_\Fro^2 \\
&= \frac{1}{2} \|\PT\H\|_\Fro^2.
\end{split}\end{equation*}
On the other hand,
\begin{equation*}\begin{split}
\|\ROh \PnC\PTT\H\|_\Fro \leq \frac{1}{\sqrt{p}} \|\PTT\H\|_\Fro.
\end{split}\end{equation*}
Therefore, we have
\begin{equation*}\begin{split}
- \|\PT \H\|_\Fro \geq - \sqrt{\frac{2}{p}} \|\PTT\H\|_\Fro.
\end{split}\end{equation*}

\paragraph{(Finishing the proof).}
Now we are ready to continue the calculation of Eq.~\eqref{eq:main:1} as
\begin{equation*}\begin{split}
\|\hM\|_\trnrm &\geq \|\M\|_\trnrm - \|\UV - \PT \Y\|_\F \|\PT \H\|_\F + (1 - \|\PTT \Y\|_\op) \|\PTT \H\|_\F + 0\\
&\geq \|\M\|_\trnrm - \|\UV - \PT \Y\|_\F \sqrt{\frac{2}{p}} \|\PTT \H\|_\F + (1 - \|\PTT \Y\|_\op) \|\PTT \H\|_\F \\
&\geq \|\M\|_\trnrm + \left(1 - \|\PTT \Y\|_\op - \|\UV - \PT \Y\|_\F \sqrt{\frac{2}{p}} \right) \|\PTT \H\|_\F \\
&> \|\M\|_\trnrm + \left(1 - \frac{1}{2} - \frac{1}{2}\right) \|\PTT \H\|_\F \\
&= \|\M\|_\trnrm.
\end{split}\end{equation*}
Therefore, \(\M\) is the unique minimizer of Eq.~\eqref{eq:cmc-tr-min}.
\end{proof}
From here, we will prove that the conditions of Lemma~\ref{lem:feasibility:main} holds with high probability.
The proof relies on the following three concentration inequalities.
\subsubsection{Concentration inequalities}
\label{sec:orgea5bf65}
\begin{theorem}[Matrix Bernstein inequality {\citep{TroppUserfriendly2012}}]
Let \(\{\Z_k\}_{k=1}^L\) be independent random matrices with dimensions \(d_1 \times d_2\).
If \(\E(\Z_k) = \O\) and \(\|\Z_k\|_\op \leq R\) (a.s.),
then define \(\sigma^2 := \max\left\{\left\|\sumk \E(\Z_k^\top \Z_k)\right\|_\op, \left\|\sumk \E(\Z_k \Z_k^\top)\right\|_\op\right\}\).
Then for all \(t \in \left[0, \frac{\sigma^2}{R}\right]\),
\begin{equation*}\begin{split}
\P\left\{\left\|\sumk \Z_k\right\|_\op \geq t\right\} \leq (d_1 + d_2) \exp \left(\frac{- \frac{3}{8}t^2}{\sigma^2}\right)
\end{split}\end{equation*}
holds.
Therefore, if
\begin{equation}\label{eq:mat-bernstein-cond}\begin{split}
\sqrt{\frac{8}{3}\left(\log \frac{d_1 + d_2}{\delta}\right) \sigma^2} \leq \frac{\sigma^2}{R},
\end{split}\end{equation}
then with probability at least \(1 - \delta\),
\begin{equation*}\begin{split}
\left\|\sumk \Z_k\right\|_\op \leq \sqrt{\frac{8}{3}\left(\log \frac{d_1 + d_2}{\delta}\right) \sigma^2}
\end{split}\end{equation*}
holds.
\label{thm:mat-bernstein}
\end{theorem}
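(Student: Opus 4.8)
The plan is to obtain this specialized statement directly from the general matrix Bernstein inequality of \citet{TroppUserfriendly2012}, which under exactly the present hypotheses (\(\E(\Z_k) = \O\), \(\|\Z_k\|_\op \leq R\) a.s., and the same \(\sigma^2\)) asserts that for \emph{every} \(t \geq 0\),
\begin{equation*}
\P\left\{\left\|\sumk \Z_k\right\|_\op \geq t\right\} \leq (d_1 + d_2) \exp\left(\frac{-t^2/2}{\sigma^2 + Rt/3}\right).
\end{equation*}
I would treat this as a black box. Everything else is a matter of simplifying the exponent on a restricted range and then inverting the resulting tail bound to read off a high-probability upper estimate.

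First I would restrict to \(t \in [0, \sigma^2/R]\). On this interval \(Rt \leq \sigma^2\), hence \(Rt/3 \leq \sigma^2/3\) and the denominator obeys \(\sigma^2 + Rt/3 \leq \tfrac{4}{3}\sigma^2\). Since the numerator \(-t^2/2\) is negative and both denominators are positive, shrinking the denominator can only decrease the ratio, so
\begin{equation*}
\frac{-t^2/2}{\sigma^2 + Rt/3} \leq \frac{-t^2/2}{\frac{4}{3}\sigma^2} = \frac{-\frac{3}{8}t^2}{\sigma^2}.
\end{equation*}
Because \(\exp\) is monotone increasing, substituting this into the general bound yields the stated tail inequality for all \(t \in [0, \sigma^2/R]\).

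Second, for the high-probability consequence I would choose \(t^\star := \sqrt{\tfrac{8}{3}\left(\log\tfrac{d_1+d_2}{\delta}\right)\sigma^2}\), which is calibrated precisely so that \((d_1+d_2)\exp\!\left(-\tfrac{3}{8}(t^\star)^2/\sigma^2\right) = \delta\). The Bernstein condition \eqref{eq:mat-bernstein-cond} is exactly the inequality \(t^\star \leq \sigma^2/R\), so it is what places \(t^\star\) inside the admissible interval on which the simplified tail bound is valid. Evaluating that bound at \(t = t^\star\) then gives \(\P\{\|\sumk \Z_k\|_\op \geq t^\star\} \leq \delta\), i.e. with probability at least \(1-\delta\) we have \(\|\sumk \Z_k\|_\op \leq t^\star\), which is the claim.

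There is no genuine obstacle beyond bookkeeping: the real concentration content is imported wholesale from \citet{TroppUserfriendly2012}, whose proof rests on the matrix Laplace-transform method and Lieb's concavity theorem. The only point demanding care is to keep the clean exponent \(-\tfrac{3}{8}t^2/\sigma^2\) in force \emph{only} on \([0, \sigma^2/R]\), and to recognize that \eqref{eq:mat-bernstein-cond} is exactly the hypothesis that makes the inversion at \(t^\star\) legitimate. In other words, the simplified exponent is purchased at the price of bounding the target deviation, and the consistency check linking that restriction to the condition \eqref{eq:mat-bernstein-cond} is the sole nontrivial bit of the argument.
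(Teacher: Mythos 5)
Your proposal is correct and takes essentially the same route as the paper: the paper states this result without proof as a citation to \citet{TroppUserfriendly2012}, and your derivation---bounding the denominator by $\sigma^2 + Rt/3 \leq \tfrac{4}{3}\sigma^2$ on $[0, \sigma^2/R]$, invoking monotonicity of $\exp$, and then inverting the tail at the calibrated $t^\star$ with condition \eqref{eq:mat-bernstein-cond} guaranteeing $t^\star \leq \sigma^2/R$---is precisely the standard simplification of Tropp's bound that the cited form presupposes. All steps check out, including the calibration $(d_1+d_2)\exp\bigl(-\tfrac{3}{8}(t^\star)^2/\sigma^2\bigr) = \delta$.
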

The following theorem is essentially contained in Chapter~6 of \cite{LedouxProbability1991}.
A different version of the following theorem can be found in \citep{GrossRecovering2011}.
\citet{KohlerSubsampled2017} have shown this variant.
\begin{theorem}[Vector Bernstein inequality {\citep{GrossRecovering2011}}]
Let \(\{\bv_k\}_{k=1}^L\) be independent random vectors in \(\mathbb{R}^d\).
Suppose that \(\mathbb{E} \bv_k = \bm{o}\) and \(\|\bv_k\| \leq R\) (a.s.) and put \(\sumk \mathbb{E} \|\bv_k\|^2 \leq \sigma^2\).
Then for all \(t \in \left[0, \frac{\sigma^2}{R}\right]\),
\begin{equation*}\begin{split}
\mathbb{P}\left(\left\|\sumk \bv_k\right\| \geq t\right) \leq \exp\left(- \frac{(t / \sigma -1)^2}{4}\right) \leq \exp \left(- \frac{t^2}{8 \sigma^2} + \frac{1}{4}\right)
\end{split}\end{equation*}
holds.
Therefore, given
\begin{equation}\label{eq:vec-bernstein-cond}\begin{split}
\sigma \sqrt{2 + 8 \log \frac{1}{\delta}} \leq \frac{\sigma^2}{R}
\end{split}\end{equation}
with probability at least \(1 - \delta\),
\begin{equation*}\begin{split}
\left\|\sumk \bv_k\right\| \leq \sigma \sqrt{2 + 8 \log \frac{1}{\delta}}
\end{split}\end{equation*}
holds.
\label{thm:vec-bernstein}
\end{theorem}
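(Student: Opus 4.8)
The plan is to turn this vector statement into a scalar one by separately bounding the size of the mean $\mathbb{E}\|\sumk \bv_k\|$ and the upward fluctuation of $\|\sumk \bv_k\|$ about that mean. Write $Z := \|\sumk \bv_k\|$. First I would bound $\mathbb{E}[Z]$ by $\sigma$: Jensen's inequality gives $\mathbb{E}[Z] \le (\mathbb{E}[Z^2])^{1/2}$, and since the $\bv_k$ are independent with $\mathbb{E}\bv_k = \bm{o}$, all cross terms in $\mathbb{E}\|\sumk \bv_k\|^2 = \sum_{k,l}\mathbb{E}\langle \bv_k, \bv_l\rangle$ vanish, leaving $\mathbb{E}[Z^2] = \sumk \mathbb{E}\|\bv_k\|^2 \le \sigma^2$. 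Hence $\mathbb{E}[Z]\le\sigma$, which is exactly the source of the shift $t/\sigma - 1$ appearing in the exponent.

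Next I would control the upper tail of $Z$ about its mean in a \emph{variance-sensitive} way, since a crude bounded-difference (McDiarmid) bound would only yield a variance proxy of order $L R^2$ and lose the dependence on $\sigma$. The device is the dual representation $Z = \sup_{\|\bx\|\le 1}\langle \bx, \sumk \bv_k\rangle = \sup_{\|\bx\|\le 1}\sumk \langle \bx, \bv_k\rangle$, which exhibits $Z$ as the supremum of an empirical process indexed by the unit ball. Each coordinate function $\bv \mapsto \langle \bx, \bv\rangle$ is centered (because $\mathbb{E}\bv_k = \bm{o}$), bounded in absolute value by $\|\bv_k\|\le R$, and has weak variance $\sup_{\|\bx\|\le 1}\sumk \mathbb{E}\langle \bx, \bv_k\rangle^2 = \|\sumk \mathbb{E}\bv_k \bv_k^\top\|_\op \le \sumk \mathbb{E}\|\bv_k\|^2 \le \sigma^2$. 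Applying a Talagrand--Bousquet concentration inequality for suprema of empirical processes yields a Bernstein-type tail $\mathbb{P}(Z \ge \mathbb{E}[Z] + u) \le \exp(-u^2/(c_1\sigma^2 + c_2 R u))$. A self-contained alternative is to apply Freedman's martingale inequality to the Doob martingale $M_k := \mathbb{E}[Z \mid \bv_1,\ldots,\bv_k]$, whose increments satisfy $|M_k - M_{k-1}| \le 2R$ and whose predictable quadratic variation is controlled by $\sumk \mathbb{E}\|\bv_k\|^2 \le \sigma^2$.

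Finally I would combine the two pieces. For $t \ge \sigma \ge \mathbb{E}[Z]$ the deviation satisfies $t - \mathbb{E}[Z] \ge t - \sigma \ge 0$, so the fluctuation bound gives $\mathbb{P}(Z \ge t) \le \exp(-(t-\sigma)^2/(c_1\sigma^2 + c_2 R(t-\sigma)))$; the stated range $t \in [0, \sigma^2/R]$ is exactly what forces $R(t-\sigma) \le \sigma^2$, so the linear term in the denominator is absorbed and the tail collapses to the sub-Gaussian form $\exp(-(t/\sigma - 1)^2/4)$. The final inequality $\exp(-(t/\sigma-1)^2/4) \le \exp(-t^2/(8\sigma^2) + 1/4)$ is then pure algebra, equivalent after setting $x := t/\sigma$ to $(x-2)^2 \ge 0$; this rightmost bound moreover stays valid (indeed trivially, being at least $1$) for the small values $t < \sigma$ where the intermediate expression is not itself a probability bound. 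The main obstacle is the variance-sensitive concentration of $Z$ about $\mathbb{E}[Z]$ with the right constant: pushing the effective variance proxy down to $2\sigma^2$ (rather than the pessimistic $LR^2$) requires either the sharp empirical-process inequality or, in the martingale route, a careful verification that the conditional variance of each increment $M_k - M_{k-1}$ is bounded by $\mathbb{E}\|\bv_k\|^2$ via the $1$-Lipschitz dependence of $Z$ on $\bv_k$.
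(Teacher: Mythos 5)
You should first be aware that the paper does not prove this theorem at all: it is imported from the literature (Chapter 6 of Ledoux--Talagrand, \citealp{GrossRecovering2011}, and \citealp{KohlerSubsampled2017}) and stated without proof, so the only meaningful comparison is with the standard literature argument --- which your outline essentially reproduces. Your decomposition (Jensen gives $\mathbb{E}\,\|\sum_k \bm{v}_k\| \le \sigma$; then a variance-sensitive deviation bound for $Z=\|\sum_k \bm{v}_k\|$ above its mean; then algebra on the window $\sigma \le t \le \sigma^2/R$) is exactly how this inequality is derived in those references. Your observation about small $t$ is also a genuine and necessary point of care: at $t=0$ the middle expression equals $e^{-1/4}<1$ while $\mathbb{P}(Z\ge 0)=1$, so the first inequality can only be proved (and should only be read) for $t\ge\sigma$, with the rightmost bound holding trivially below that.

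The one quantitative gap is in your hedge between the two concentration devices. The Talagrand--Bousquet branch will not deliver the constant $1/4$: Bousquet's variance proxy is not the weak variance alone but $\sigma_f^2 + 2R\,\mathbb{E}[Z]$, and since the window $[\sigma,\sigma^2/R]$ is nonempty only when $R\le\sigma$, the best available denominator is of order $2\sigma^2 + 4R\sigma + \tfrac{2}{3}Ru \le \tfrac{20}{3}\sigma^2$, i.e.\ an exponent constant $3/20 < 1/4$. The Freedman branch does work, but only with the sharp conditional-variance bound you mention in passing, so make it explicit: conditionally on $\bm{v}_1,\dots,\bm{v}_{k-1}$, the Doob increment is $d_k=\varphi(\bm{v}_k)-\mathbb{E}\,\varphi(\bm{v}_k)$ for a $1$-Lipschitz function $\varphi$, whence $\mathbb{E}[d_k^2\mid \mathcal{F}_{k-1}]=\mathrm{Var}(\varphi(\bm{v}_k)) \le \mathbb{E}\bigl[(\varphi(\bm{v}_k)-\varphi(\bm{o}))^2\bigr] \le \mathbb{E}\,\|\bm{v}_k\|^2$, using that the variance is the minimal quadratic deviation about a constant. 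With $|d_k|\le 2R$ and predictable quadratic variation at most $\sigma^2$, Freedman gives $\mathbb{P}(Z\ge \mathbb{E}[Z]+u) \le \exp\bigl(-u^2/(2\sigma^2+\tfrac{4}{3}Ru)\bigr)$, and on the window $Ru\le\sigma^2$ the denominator is at most $\tfrac{10}{3}\sigma^2 \le 4\sigma^2$, which yields the stated $\exp\bigl(-(t/\sigma-1)^2/4\bigr)$ with room to spare. So keep the martingale branch, drop or demote the Bousquet branch, and the proof is complete; the final $\delta$-form then follows by plugging $t=\sigma\sqrt{2+8\log(1/\delta)}$ into the rightmost bound, the hypothesis Eq.~\eqref{eq:vec-bernstein-cond} being precisely what places this $t$ inside $[0,\sigma^2/R]$.
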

\begin{theorem}[Bernstein's inequality for scalars {\citep[Corollary 2.11]{BoucheronConcentration2013}}]
Let \(X_1, \ldots, X_n\) be independent real-valued random variables that satisfy \(|X_i| \leq R\) (a.s.), \(\E[X_i] = 0\), and \(\sum_{i=1}^n \E[X_i^2] \leq \sigma^2\).
Then for all \(t \in \left[0, \frac{\sigma^2}{R}\right]\),
\begin{equation*}\begin{split}
\mathbb{P}\left\{\left|\sum_{i=1}^n X_i\right| \geq t\right\} \leq 2\exp\left(- \frac{\frac{3}{8}t^2}{\sigma^2}\right).
\end{split}\end{equation*}
holds.
Therefore, given
\begin{equation}\label{eq:scalar-bernstein-cond}\begin{split}
\sqrt{\frac{8}{3} \sigma^2 \log \frac{2}{\delta}} \leq \frac{\sigma^2}{R},
\end{split}\end{equation}
with probability at least \(1 - \delta\),
\begin{equation*}\begin{split}
\left|\sum_{i=1}^n X_i\right| \leq \sqrt{\frac{8}{3} \sigma^2 \log \frac{2}{\delta}}.
\end{split}\end{equation*}
holds.
\label{thm:scalar-bernstein}
\end{theorem}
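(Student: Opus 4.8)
The plan is to derive this range-restricted form directly from the standard Bernstein inequality quoted as Corollary~2.11 of \citet{BoucheronConcentration2013}. In its one-sided form, under the stated hypotheses ($|X_i| \leq R$, $\E[X_i] = 0$, and $\sum_{i=1}^n \E[X_i^2] \leq \sigma^2$, the last quantity being an upper bound on the variance of the sum by independence), that result gives
\[
\mathbb{P}\left\{\sum_{i=1}^n X_i \geq t\right\} \leq \exp\left(-\frac{t^2/2}{\sigma^2 + Rt/3}\right)
\]
for every $t \geq 0$. The entire content of the claim is then an algebraic simplification of this exponent that becomes available once we restrict attention to $t \in [0, \sigma^2/R]$.

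First I would exploit the range restriction $t \leq \sigma^2/R$, equivalently $Rt \leq \sigma^2$, to control the denominator: $\sigma^2 + Rt/3 \leq \sigma^2 + \sigma^2/3 = \frac{4}{3}\sigma^2$. Since enlarging the denominator only shrinks the fraction, substituting this bound and reversing the inequality gives
\[
\frac{t^2/2}{\sigma^2 + Rt/3} \geq \frac{t^2/2}{\frac{4}{3}\sigma^2} = \frac{\frac{3}{8}t^2}{\sigma^2},
\]
so that $\mathbb{P}\{\sum_i X_i \geq t\} \leq \exp(-\frac{3}{8}t^2/\sigma^2)$ on this range. Applying the same argument to $-X_1, \ldots, -X_n$, which share the same almost-sure bound $R$, zero mean, and variance-sum bound, controls the lower tail, and a union bound over the two one-sided events produces the factor of $2$ in the two-sided statement.

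For the \quotes{therefore} part I would simply invert the tail bound: setting $2\exp(-\frac{3}{8}t^2/\sigma^2)$ equal to $\delta$ and solving yields $t = \sqrt{\frac{8}{3}\sigma^2 \log\frac{2}{\delta}}$. The only point requiring care is that this $t$ must lie in the admissible interval $[0, \sigma^2/R]$ on which the tail bound was established, and this is exactly the hypothesis Eq.~\eqref{eq:scalar-bernstein-cond}, which reads $\sqrt{\frac{8}{3}\sigma^2\log\frac{2}{\delta}} \leq \sigma^2/R$. Hence the bound is valid at this $t$, giving $|\sum_i X_i| \leq \sqrt{\frac{8}{3}\sigma^2 \log\frac{2}{\delta}}$ with probability at least $1-\delta$.

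There is no genuine obstacle here: the statement is a packaged, range-restricted corollary of a textbook inequality, and the work is entirely the elementary estimate $\sigma^2 + Rt/3 \leq \frac{4}{3}\sigma^2$ on $[0,\sigma^2/R]$ together with the inversion of the exponential. The one subtlety is bookkeeping---verifying that the admissibility requirement $t \leq \sigma^2/R$ coincides precisely with Eq.~\eqref{eq:scalar-bernstein-cond}, so that the high-probability conclusion is rigorous rather than merely formal.
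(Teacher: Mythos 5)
Your proposal is correct and matches what the paper relies on: the paper states this result as a cited variant of Corollary 2.11 of \citet{BoucheronConcentration2013} without reproducing the derivation, and your argument supplies exactly the routine steps connecting the two forms---the estimate $\sigma^2 + Rt/3 \leq \tfrac{4}{3}\sigma^2$ on the restricted range $t \in [0, \sigma^2/R]$, the symmetry/union-bound step for the two-sided tail, and the inversion of the exponential with the observation that condition Eq.~\eqref{eq:scalar-bernstein-cond} is precisely the admissibility requirement $t \leq \sigma^2/R$. No gaps; nothing further is needed.
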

\subsection{Condition for \(\|\PT \PO\PnC \PT - \PT\PnC\PT\|_\op\) in Lemma~\ref{lem:feasibility:main} to hold with high probability}
\label{sec:org2463d98}
\begin{lemma}[]
If for some \(\beta > \lemMainBetaMin\),
\begin{equation*}\begin{split}
p \geq \min\left\{1, \pMinMain\right\}
\end{split}\end{equation*}
is satisfied, then
\begin{equation}\label{eq:proof-exact-recovery-3}\begin{split}
\|\PT\RO\PnC\PT - \PT\PnC\PT\|_\op \leq \frac{1}{2} - \nuC
\end{split}\end{equation}
holds with probability at least \(1 - \lemMainDelta\).
\label{lem:main-lemma-conc}
\end{lemma}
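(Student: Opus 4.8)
The plan is to recognize the left-hand side of \eqref{eq:proof-exact-recovery-3} as the deviation of an empirical average of independent rank-one operators from its mean, and to control it with the matrix Bernstein inequality (Theorem~\ref{thm:mat-bernstein}). The key preliminary observation is that $\PO$ and $\PnC$ are both coordinate-wise projections and therefore commute, and that since each entry of $\Omega$ is observed independently with probability $p$ we have $\E[\RO] = \I$ on $\matsp$. Because $\PT$ and $\PnC$ are deterministic, this gives $\E[\PT\RO\PnC\PT] = \PT\PnC\PT$, so the operator in \eqref{eq:proof-exact-recovery-3} is genuinely centered and the task reduces to bounding the operator norm of a sum of independent zero-mean terms.

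Concretely, writing $\oij = \Indicator\{(i,j)\in\Omega\}$ and using self-adjointness of $\PT$, I would expand
\begin{equation*}\begin{split}
\PT\RO\PnC\PT - \PT\PnC\PT = \sum_{(i,j)\in\TrueNonClipped}\Z_{ij}, \quad \Z_{ij} := \left(\frac{\oij}{p} - 1\right)\langle\PT\Eij,\cdot\rangle\,\PT\Eij,
\end{split}\end{equation*}
a sum of independent, self-adjoint, mean-zero operators on $\matsp$ (so $d_1 = d_2 = n_1 n_2$). Each $\langle\PT\Eij,\cdot\rangle\PT\Eij$ is rank one with operator norm $\|\PT\Eij\|_\Fro^2$, so the preceding coherence Property ($\|\PT\Eij\|_\Fro^2\le\frac{(n_1+n_2)\mu_0 r}{n_1 n_2}$) yields the uniform almost-sure bound $\|\Z_{ij}\|_\op\le R := \frac{1}{p}\frac{(n_1+n_2)\mu_0 r}{n_1 n_2}$. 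For the variance proxy I would use $\E[(\oij/p - 1)^2] = (1-p)/p$ and $(\langle\PT\Eij,\cdot\rangle\PT\Eij)^2 = \|\PT\Eij\|_\Fro^2\langle\PT\Eij,\cdot\rangle\PT\Eij$, so that $\sum_{(i,j)\in\TrueNonClipped}\E[\Z_{ij}^2]\preceq\frac{1-p}{p}\frac{(n_1+n_2)\mu_0 r}{n_1 n_2}\,\PT\PnC\PT$; taking operator norms and using $\|\PT\PnC\PT\|_\op\le 1$ (since $\O\preceq\PT\PnC\PT\preceq\PT\preceq\I$) gives $\sigma^2\le\frac{1}{p}\frac{(n_1+n_2)\mu_0 r}{n_1 n_2}$.

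With these two bounds I would invoke Theorem~\ref{thm:mat-bernstein} with $d_1 = d_2 = n_1 n_2$ and choose $\delta = \lemMainDelta$, so that $\log\frac{d_1+d_2}{\delta} = \beta\log(n_1 n_2)$. The theorem then yields $\|\PT\RO\PnC\PT - \PT\PnC\PT\|_\op\le\sqrt{\tfrac{8}{3}\beta\log(n_1 n_2)\,\sigma^2}$ with probability at least $1-\delta$, and substituting the bound on $\sigma^2$ shows that the requirement that this be at most $\tfrac12 - \nuC$ rearranges to exactly the hypothesis $p\ge\pMinMain$. The side constraint $\beta > \lemMainBetaMin$ is precisely what forces $\delta = \lemMainDelta < 1$, so that the conclusion is non-vacuous.

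The step requiring the most care is the admissibility (regime) condition \eqref{eq:mat-bernstein-cond}, i.e. checking that the concluded bound does not exceed $\sigma^2/R$. Since $R$ and $\sigma^2$ are of the same order $\tfrac1p\frac{(n_1+n_2)\mu_0 r}{n_1 n_2}$, the ratio $\sigma^2/R$ should be $\Theta(1)$ rather than vanishing; but making this rigorous is exactly where the insertion of $\PnC$ (restricting the sum to non-clipped indices, so the mean is $\PT\PnC\PT$ rather than $\PT$) matters. I expect the main obstacle to be lower-bounding $\sigma^2/R$, which amounts to controlling the spread of the weights $\|\PT\Eij\|_\Fro^2$ over $\TrueNonClipped$ and rests on the bound $\|\PT\PnC\PT\|_\op\ge 1-\nuC$ implied by $\nuC<\tfrac12$. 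Once that is settled, the remainder is the bookkeeping above, and I would verify that every step survives the replacement of the full index set by $\TrueNonClipped$.
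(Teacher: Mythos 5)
Your main line of argument is, nearly step for step, the paper's own proof: the same decomposition of $\PT\RO\PnC\PT - \PT\PnC\PT$ into independent, mean-zero, self-adjoint rank-one operators $(\oij/p-1)\langle\PT(\Eij),\cdot\rangle\,\PT(\Eij)$ indexed by $\TrueNonClipped$, the same almost-sure bound $R=\frac{1}{p}\frac{(n_1+n_2)\mu_0 r}{n_1 n_2}$ from the coherence property, the same variance computation using $\bigl(\langle\PT(\Eij),\cdot\rangle\PT(\Eij)\bigr)^2=\|\PT(\Eij)\|_\Fro^2\,\langle\PT(\Eij),\cdot\rangle\PT(\Eij)$ together with $\|\PT\PnC\PT\|_\op\le 1$, the same invocation of Theorem~\ref{thm:mat-bernstein} with $d_1=d_2=n_1 n_2$ and $\delta=\lemMainDelta$ (so that $\log\frac{d_1+d_2}{\delta}=\beta\log(n_1 n_2)$), and the same rearrangement of $\sqrt{\frac{8}{3}\beta\log(n_1 n_2)\,\sigma^2}\le\frac{1}{2}-\nuC$ into the hypothesis $p\ge\pMinMain$. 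Up to the trivial separate treatment of $p=1$, this is the paper's argument.

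The genuine problem is your last paragraph, which misreads how the regime condition \eqref{eq:mat-bernstein-cond} is verified, and the repair you sketch would fail. No lower bound on the true ratio $\sigma^2/R$ is needed: in Bernstein-type inequalities both $R$ and $\sigma^2$ may be replaced by any valid upper bounds, because the underlying tail bound $(d_1+d_2)\exp\bigl(\frac{-t^2/2}{\sigma^2+Rt/3}\bigr)$ is monotone in both parameters, and the constraint $t\le\sigma^2/R$ is then imposed on the substituted quantities. The paper does exactly this, choosing $R$ and $\sigma^2$ to be the \emph{same} upper bound $\frac{(n_1+n_2)\mu_0 r}{p n_1 n_2}$, so that $\sigma^2/R=1$ identically; the admissibility check then collapses to $\sqrt{\frac{8}{3}\beta\log(n_1 n_2)\,\sigma^2}\le\frac{1}{2}-\nuC\le 1$, where the first inequality is precisely the assumed lower bound on $p$ and the second is immediate from $\nuC\ge 0$. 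Thus $\nuC<\frac{1}{2}$ enters only to make $\frac{1}{2}-\nuC$ a positive target; no spectral lower bound on $\PT\PnC\PT$ is used anywhere. Your proposed route, by contrast, cannot be completed: lower-bounding the exact variance proxy via $\|\PT\PnC\PT\|_\op\ge 1-\nuC$ forces you through $\min_{(i,j)\in\TrueNonClipped}\|\PT(\Eij)\|_\Fro^2$, for which no nontrivial lower bound exists (coherence controls these weights only from above, and they can even vanish, e.g.\ when a row of $\U$ and a row of $\V$ are zero). Indeed your premise that the exact $\sigma^2/R$ is $\Theta(1)$ is false: the exact second moment carries a factor $\frac{1-p}{p}$ while the almost-sure bound does not, so the exact ratio tends to $0$ as $p\to 1$. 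This confirms that matched upper bounds, not a lower bound on the true variance, are the intended and correct way to satisfy \eqref{eq:mat-bernstein-cond}.
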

\begin{proof}
If \(p = 1\), then Eq.~\eqref{eq:proof-exact-recovery-3} holds.
From here, we assume \(1 \geq p \geq \pMinMain\).

Any matrix \(\Z\) can be decomposed into a sum of elements \(\Z = \sum_{(i, j)} \langle \Z, \Eij \rangle \Eij\) and therefore, \(\PT(\Z) = \sum_{(i, j)} \langle \PT(\Z), \Eij \rangle \Eij = \sum_{(i, j)} \langle \Z, \PT (\Eij) \rangle \Eij\).
Thus, by letting \(\bcij := \Indicator\{(i, j) \in \TrueNonClipped\}\),
\begin{equation}\begin{split}
(\PT\RO\PnC\PT - \PT\PnC\PT)(\Z) &= (\PT \RO\PnC - \PT\PnC) \left(\sum_{(i, j)} \langle \PT \Z, \Eij \rangle \Eij\right) \\
&= (\PT \RO - \PT) \left(\sum_{(i, j)} \langle \PT \Z, \Eij \rangle \bcij\Eij\right) \\
&= \sum_{(i, j)} \left(\frac{\omega_{ij}}{p} - 1\right) \bcij \langle \Z, \PT (\Eij) \rangle \PT (\Eij) \\
&=: \sum_{(i, j)} S_{ij}(\Z).
\end{split}\end{equation}

Now it is easy to verify that \(\E\left[S_{ij}\right] = \mO\).
We also have
\begin{equation*}\begin{split}
\|S_{ij}(\Z)\|_\F &\leq \frac{1}{p} \|\PT (\Eij)\|_\F^2 \|\Z\|_\F \leq \frac{1}{p} \frac{n_1 + n_2}{n_1 n_2}\mu_0 r \|\Z\|_\F \\
\end{split}\end{equation*}
Therefore, \(\|S_{ij}\|_\op \leq \frac{n_1 + n_2}{qn_1n_2} \mu_0 r\).
For any matrix \(\Z \in \matsp\),
\begin{equation}\begin{split}
\left\|\sumij \E[S_{ij}^2(\Z)]\right\|_\F &= \left\|\sumij \E\left[\left(\frac{\oij}{p} - 1\right) \bcij \left\langle \left(\frac{\oij}{p} - 1\right) \bcij \langle \Z, \PT(\Eij) \rangle \PT(\Eij), \PT(\Eij) \right\rangle \PT(\Eij)\right]\right\|_\F \\
&=\left\|\sumij \E\left[\left(\frac{\oij}{p} - 1\right)^2 \bcij \langle \Z, \PT(\Eij) \rangle \|\PT (\Eij)\|_\F^2 \PT(\Eij)\right]\right\|_\F \\
&=\left\|\sumij \E\left[\left(\frac{\oij}{p} - 1\right)^2\right] \bcij \|\PT (\Eij)\|_\F^2 \langle \PT \Z, \Eij \rangle \PT(\Eij)\right\|_\F \\
&=\left\|\sumij \frac{1 - p}{p} \|\PT (\Eij)\|_\F^2 \langle \PT \Z, \Eij \rangle \PT(\bcij \Eij)\right\|_\F \\
&\leq \left(\max_{ij} \frac{1 - p}{p} \|\PT (\Eij)\|_\F^2\right) \left\|\sumij \langle \PT(\Z), \Eij \rangle \PT(\bcij \Eij)\right\|_\F \\
&\leq \left(\frac{n_1 + n_2}{p n_1 n_2}\mu_0 r\right)\left\|\PT \PnC \PT(\Z)\right\|_\F \\
&\leq \left(\frac{n_1 + n_2}{p n_1 n_2}\mu_0 r\right)\|\Z\|_\F,
\end{split}\end{equation}
where we used \(S^\top_{ij} S_{ij} = S_{ij} S_{ij}^\top = S_{ij}^2\) (one can see this by checking \(\langle \e_k \f_l^\top, S_{ij}(\e_m \f_n^\top) \rangle = \langle S_{ij}(\e_k \f_l^\top), \e_m \f_n^\top \rangle\)).
Therefore, \(\|\sumij \E S_{ij}^2\|_\op \leq \frac{n_1 + n_2}{p n_1 n_2} \mu_0 r\).

Let \(R := \frac{n_1 + n_2}{p n_1 n_2} \mu_0 r, \sigma^2 := \frac{n_1 + n_2}{p n_1 n_2} \mu_0 r\), and \(\delta = \lemMainDelta\).
Under the condition that
\begin{equation*}\begin{split}
p \geq \pMinMain = \lemMainPLowerBound,
\end{split}\end{equation*}
the condition~\eqref{eq:mat-bernstein-cond} of Theorem~\ref{thm:mat-bernstein} is satisfied, because
\begin{equation}\begin{split}
\sqrt{\frac{8}{3} \sigma^2 \log\left(\frac{2n_1n_2}{\delta}\right)} = \sqrt{\frac{8}{3} \beta \mu_0 r \frac{n_1 + n_2}{p n_1 n_2} \log(n_1 n_2)} \leq \frac{1}{2} - \nuC \leq 1 = \frac{\sigma^2}{R}.
\end{split}\end{equation}
Therefore, applying Theorem~\ref{thm:mat-bernstein} with \((d_1, d_2) = (n_1 n_2, n_1 n_2)\), we obtain
\begin{equation*}\begin{split}
\|\PT\RO\PT - \PT\|_\op \leq \sqrt{\frac{8}{3}\beta \mu_0 r \frac{n_1 + n_2}{p n_1 n_2} \log(n_1 n_2)} \leq \frac{1}{2} - \nuC
\end{split}\end{equation*}
holds with probability at least \(1 - \lemMainDelta\).
\end{proof}
\subsection{A dual certificate exists with high probability}
\label{sec:org6bdccc2}
Here we show how to construct a dual certificate \(\Y \in \matsp\).
It is based on the golfing scheme, which is a proof technique that has been used in constructing dual certificates in conventional settings of MC \citep{GrossRecovering2011,CandesRobust2011,ChenCompleting2015}.
However, for the problem of CMC, the golfing scheme needs to include the characteristic operator in its definition.
\begin{definition}[Generalized golfing scheme]
We recursively define \(\{\W_k\}_{k=0}^{k_0}\) by
\begin{equation*}\begin{split}
\begin{cases}
\W_0 :&= \mO \\
\Dk :&= \UV - \Wk \\
\Wk :&= \Wkminus + \ROkC \PT \Dkminus = \UV - (\I - \ROkC \PT) \Dkminus \\
\end{cases}
\end{split}\end{equation*}
where \(\ROkC(\cdot) := \ROk(\PC(\cdot))\),
and define \(\Y := \W_{k_0}\).
\label{def:generalized-golfing-scheme}
\end{definition}
Due to the non-linearity of \(\PC\), the fact that \(\Y\) is a dual certificate cannot be established in the same way as in existing proofs.
We first establish a lemma to claim the existence of a dual certificate under deterministic conditions,
and then provide concentration properties to prove that the conditions hold with high probability under certain conditions on \(\PC\).
\begin{lemma}[Existence of a dual certificate]
If for some \(\beta > \max\{\lemOpBetaMin, \lemFroBetaMin, \lemInfBetaMin\}\),
\begin{equation}\label{eq:lem:exist-dual-cert-cond}\begin{split}
p \geq \min\left\{1, \max\left\{\frac{1}{n_1 n_2}, \pMinFro, \pMinOpOne, \pMinOpTwo, \pMinInfty \right\} \right\}
\end{split}\end{equation}
is satisfied, then
the matrix \(\Y \in \matsp\) defined by Def.~\ref{def:generalized-golfing-scheme} is a dual certificate (Def.~\ref{def:dual-cert}) with probability at least \(1 - k_0 (\lemFroDelta + \lemInfDelta + \lemOpDelta)\).
\label{lem:existence-dual-cert}
\end{lemma}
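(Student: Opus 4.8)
The plan is to verify the three requirements of Def.~\ref{def:dual-cert} for the golfing iterate $\Y := \W_{k_0}$, exploiting the recursion of Def.~\ref{def:generalized-golfing-scheme}, which gives $\Dk = \Dkminus - \ROkC\PT\Dkminus$ and hence $\PT\Dk = \PT\Dkminus - \PT\ROkC\PT\Dkminus$. Two elementary properties of $\PC$ make this tractable in spite of its nonlinearity: it is positively homogeneous, so the ratios defining $\rhoFro,\rhoInfty,\rhoOp$ are scale invariant and the norm restrictions in Def.~\ref{def:p-star-lip} are without loss of generality; and it is entrywise non-expansive, so $\|\PC\Z\|_\Fro \leq \|\Z\|_\Fro$. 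I would also use $\E[\ROk] = \I$ (since $\E[\POk] = q\,\I$ with $\ROk = \frac{1}{q}\POk$) and, crucially, the mutual independence of the blocks $\Omega_1,\ldots,\Omega_{k_0}$ from Assumption~\ref{assumption:sampling-scheme}: conditioned on $\Dkminus$ (a function of $\Omega_1,\ldots,\Omega_{k-1}$ only), the $k$-th step applies the fresh operator $\ROk$ to the \emph{fixed} matrix $\PC\PT\Dkminus$, so each concentration estimate concerns a sum of independent terms.

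The range requirement $\Y \in \range\POC$ is deterministic and holds by construction: each increment $\ROkC\PT\Dkminus = \ROk\PC(\PT\Dkminus)$ is supported on $\Omega_k$, vanishes wherever $M_{ij} > C$, and is nonnegative wherever $M_{ij} = C$; these support and sign constraints are preserved under summation, and $\bigcup_k \Omega_k = \Omega$, so $\Y \in \range\POC$. For the second requirement I would establish a per-step Frobenius contraction. Writing $\PT\Dk = (\PT\Dkminus - \PT\PC\PT\Dkminus) - (\PT\ROk\PC\PT\Dkminus - \PT\PC\PT\Dkminus)$, the first bracket is the deterministic deviation, bounded by $\|\PT\PC(\PT\Dkminus) - \PT\Dkminus\|_\Fro \leq \rhoFro\|\PT\Dkminus\|_\Fro$ via Def.~\ref{def:p-star-lip}, while the second is the mean-zero fluctuation of $\ROk$ about $\I$ applied to the fixed matrix $\PC\PT\Dkminus$, which I would bound by $c_k\|\PT\Dkminus\|_\Fro$ using the vector Bernstein inequality (Theorem~\ref{thm:vec-bernstein}). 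Under the lower bound on $p$ induced by $\pMinFro$, the constant $c_k$ is small enough that $\rhoFro + c_k < \frac12$, whence $\|\PT\Dk\|_\Fro \leq \frac12\|\PT\Dkminus\|_\Fro$. Iterating from $\|\PT\Delta_0\|_\Fro = \|\UV\|_\Fro = \sqrt{r}$ gives $\|\UV - \PT\Y\|_\Fro = \|\PT\Delta_{k_0}\|_\Fro \leq 2^{-k_0}\sqrt{r}$, and the choice $k_0 = \kZeroValue$ together with the standing hypothesis $p \geq \frac{1}{n_1 n_2}$ makes this at most $\uvupperbound$, as required.

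For the third requirement I would bound $\|\PTT\Y\|_\op \leq \sum_{k=1}^{k_0}\|\PTT\ROkC\PT\Dkminus\|_\op$. Using $\PTT\PT = 0$, each summand splits as $\PTT(\ROk - \I)\PC\PT\Dkminus + \PTT(\PC(\PT\Dkminus) - \PT\Dkminus)$: the first piece is mean-zero and handled by the matrix Bernstein inequality (Theorem~\ref{thm:mat-bernstein}), giving a bound that scales with $\|\PT\Dkminus\|_\infty$ through the per-term spectral norm and variance, while the second is deterministic and controlled via $\rhoOp$ in terms of $\|\PT\Dkminus\|_\op \leq \|\PT\Dkminus\|_\Fro$. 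To make the sum over $k$ converge below $\opnormupperbound$, I would supply the geometric decay of $\|\PT\Dkminus\|_\Fro$ from the contraction above together with a parallel infinity-norm contraction $\|\PT\Dk\|_\infty \leq \frac12\|\PT\Dkminus\|_\infty$, proved entrywise by the scalar Bernstein inequality (Theorem~\ref{thm:scalar-bernstein}) using $\rhoInfty$ and the bound $\pMinInfty$; the two spectral Bernstein conditions translate into the requirements $\pMinOpOne$ and $\pMinOpTwo$.

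Finally, I would assemble the failure probabilities. Each auxiliary estimate fails on a given block $k$ with probability at most $\lemFroDelta$, $\lemInfDelta$, and $\lemOpDelta$ respectively, so a union bound over the $k_0$ blocks and the three events yields the claimed $1 - k_0(\lemFroDelta + \lemInfDelta + \lemOpDelta)$. The main obstacle is exactly the nonlinearity of $\PC$: the classical golfing analysis rests on $\E[\ROk\PT] = \PT$ and a purely linear recursion, whereas here $\E[\ROk\PC\PT\Dkminus] = \PC\PT\Dkminus$ only, so I must cleanly separate the genuinely nonlinear deterministic deviation of $\PT\PC\PT$ from the identity on $T$—absorbed into the hypotheses $\rhoFro,\rhoInfty,\rhoOp < \text{const}$—from the linear fluctuation of $\ROk$, and verify that positive homogeneity and entrywise non-expansiveness let these $\rho$-bounds propagate unchanged through every iterate of the scheme.
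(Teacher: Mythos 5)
Your proposal is correct and follows essentially the same route as the paper's proof: the same golfing recursion with each block split into a deterministic deviation controlled by $\rhoFro$, $\rhoInfty$, $\rhoOp$ (made scale-free exactly as you note, by positive homogeneity of $\PC$) plus a mean-zero fluctuation of $\ROk$ about $\I$ applied to the fixed matrix $\PC\PT\Dkminus$, controlled by the vector, scalar, and matrix Bernstein inequalities under $\pMinFro$, $\pMinInfty$, $\pMinOpOne$, $\pMinOpTwo$ respectively, followed by the same union bound over the $k_0$ blocks and three event types. The only cosmetic difference is that you bound the deterministic operator-norm piece $\|(\PC-\I)(\PT\Dkminus)\|_\op$ through $\|\PT\Dkminus\|_\op\leq\|\PT\Dkminus\|_\Fro$ and the Frobenius decay, whereas the paper converts $\rhoOp$ into an infinity-norm statement (its Prop.~\ref{prop:rho-zero-meaning}) and runs both pieces through the infinity-norm decay; both choices sum to strictly less than $\opnormupperbound$ with the same constants.
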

\begin{proof}
By construction, we have \(\Y \in \range \POC\).
From here, we show the other two conditions of the dual certificate. In the proof, we will use Prop.~\ref{prop:rho-zero-meaning} below.

\paragraph{(Upper bounding $\|\UV - \PT \Y\|_\Fro$).}

We confirm by recursion that if Eq.~\eqref{eq:lem-fro-conc-result} holds for all \(k \in [k_0]\), then we have \(\|\PT \Dk\|_\Fro \leq \|\UV\|_\Fro\). First, we have \(\|\PT \Delta_0\|_\Fro = \|\UV\|_\Fro\). Second, if \(\|\PT \Dkminus\|_\Fro \leq \|\UV\|_\F\), then
\begin{equation*}\begin{split}
\|\PT \Dk\|_\F &= \|\PT (\UV - \Wk)\|_\F \\
&= \|\UV - \PT \Wkminus - \PT \ROkC \PT \Dkminus\|_\F \\
&\leq \|\PT \Dkminus - \PT\PC\PT \Dkminus\|_\F + \|\PT \PC \PT \Dkminus - \PT\ROkC\PT \Dkminus\|_\F \\
&\leq \rhoFro \|\PT \Dkminus\|_\F + \left(\frac{1}{2} - \rhoFro\right) \|\PT \Dkminus\|_\F \\
&= \frac{1}{2}\|\PT \Dkminus\|_\F \leq \|\UV\|_\F
\end{split}\end{equation*}
Now, by the same recursion formula, we can show \(\|\PT \Delta_{k_0}\|_\F \leq \left(\frac{1}{2}\right)^{k_0}\|\PT \Delta_0\|_\Fro\).
Therefore, under the condition Eq.~\eqref{eq:lem:exist-dual-cert-cond}, by the union bound,
we have Eq.~\eqref{eq:lem-fro-conc-result} for all \(k \in [k_0]\) with probability at least \(1 - k_0 \lemFroDelta\) and
\begin{equation*}\begin{split}
\|\UV - \PT \Y\|_\F = \|\PT \Delta_{k_0}\|_\F &\leq \left(\frac{1}{2}\right)^{k_0} \|\PT \Delta_0\|_\F \\
&\leq \halfPowerKZeroValue\|\UV\|_\F \\
&\leq \uvupperboundTimesSqrtR \|\UV\|_\F \\
&= \uvupperboundTimesSqrtR \sqrt{r} \\
&= \uvupperbound.
\end{split}\end{equation*}
because \(k_0 = \kZeroValue\), where we used \(\frac{1}{n_1n_2} \leq p\).

\paragraph{(Upper bounding $\|\PTT \Y\|_\op$).}

By a similar argument of recursion as above with Eq.~\eqref{eq:lem-inf-conc-result} in Lemma~\ref{lem:inf-conc},
we can prove that for all \(k \in [k_0]\), \(\|\PT \Dk\|_\infty \leq \|\UV\|_\infty\) and \(\|\PT \Dk\|_\infty \leq \frac{1}{2}\|\PT \Dkminus\|_\infty\), with probability at least \(1 - k_0 \lemInfDelta\) under the condition Eq.~\eqref{eq:lem:exist-dual-cert-cond}.
Similarly, with Eq.~\ref{eq:lem-op-conc-result} in Lemma~\ref{lem:op-conc} and using Prop.~\ref{prop:rho-zero-meaning}, we obtain for all \(k \in [k_0]\), \(\|(\ROkC - \I) (\PT \Dkminus)\|_\op \leq \frac{1}{4 \|\UV\|_\infty} \|\PT \Dkminus\|_\infty\), with probability at least \(1 - k_0 \lemOpDelta\) under the condition Eq.~\eqref{eq:lem:exist-dual-cert-cond}.
Therefore, under the condition Eq.~\eqref{eq:lem:exist-dual-cert-cond}, with probability at least \(1 - k_0 (\lemInfDelta + \lemOpDelta)\), we have
\begin{equation*}\begin{split}
\|\PTT Y\|_\op &= \left\|\PTT \sum_{k=1}^{k_0} \ROkC \PT (\Dkminus)\right\|_\op \\
&\leq \sum_{k=1}^{k_0} \|\PTT \ROkC \PT(\Dkminus)\|_\op \\
&= \sum_{k=1}^{k_0} \|\PTT \ROkC \PT(\Dkminus) - \PTT \PT \Dkminus\|_\op \\
&\leq \sum_{k=1}^{k_0} \|(\ROkC - \I) (\PT \Dkminus)\|_\op \\
&\leq \sum_{k=1}^{k_0} \frac{1}{4 \|\UV\|_\infty}  \|\PT \Dkminus\|_\infty \\
&\leq \sum_{k=1}^{k_0} 2^{-k + 1} \frac{1}{4 \|\UV\|_\infty} \|\PT \Delta_0\|_\infty \\
&< \frac{1}{2}.
\end{split}\end{equation*}
By taking the union bound, we have the lemma.
\end{proof}

In the recursion formula, we have used the following property yielding from the definition of \(\rhoOp\) (Def.~\ref{def:p-star-lip}).
\begin{property}[]
\begin{equation*}\begin{split}
\rhoOp \geq \|\U\V^\top\|_\infty \left(\sup_{\Z \in T \setminus \{\mO\}: \|\Z\|_\infty \leq \|\U\V^\top\|_\infty} \frac{\|\PC \Z - \Z\|_\op}{\|\Z\|_\infty}\right)
\end{split}\end{equation*}
\label{prop:rho-zero-meaning}
\end{property}
\begin{proof}
We have \(\{\Z \in T: \|\Z\|_\infty \leq \|\UV\|_\infty\} \subset \{\Z \in T: \|\Z\|_\op \leq \sqrt{n_1 n_2} \|\UV\|_\op\}\),
because we can obtain \(\|\Z\|_\op \leq \sqrt{n_1 n_2} \|\Z\|_\infty \leq \sqrt{n_1 n_2}\|\UV\|_\infty \leq \sqrt{n_1 n_2}\|\UV\|_\op\) from \(\|\Z\|_\infty \leq \|\UV\|_\infty\).
Here, we used \(\|\Z\|_\op \leq \sqrt{n_1 n_2} \|\Z\|_\infty\) and \(\|\Z\|_\infty \leq \|\Z\|_\op\).
Therefore,
\begin{equation*}\begin{split}
\rhoOp &= \sqrt{r} \jointCoherence \left(\sup_{\Z \in T \setminus \{\mO\}: \|\Z\|_\op \leq \sqrt{n_1 n_2} \|\UV\|_\op} \frac{\|\PC \Z - \Z\|_\op}{\|\Z\|_\op}\right) \\
&=\sqrt{n_1 n_2}\|\UV\|_\infty \left(\sup_{\Z \in T \setminus \{\mO\}: \|\Z\|_\op \leq \sqrt{n_1 n_2} \|\UV\|_\op} \frac{\|\PC \Z - \Z\|_\op}{\|\Z\|_\op}\right) \\
&\geq \|\UV\|_\infty \left(\sup_{\Z \in T \setminus \{\mO\}: \|\Z\|_\infty \leq \|\UV\|_\infty} \frac{\|\PC \Z - \Z\|_\op}{\frac{1}{\sqrt{n_1 n_2}}\|\Z\|_\op}\right) \\
&\geq \|\UV\|_\infty \left(\sup_{\Z \in T \setminus \{\mO\}: \|\Z\|_\infty \leq \|\UV\|_\infty} \frac{\|\PC \Z - \Z\|_\op}{\|\Z\|_\infty}\right). \\
\end{split}\end{equation*}
\end{proof}
\subsubsection{Concentration properties}
\label{sec:org5bf9c47}
From here, we denote \(\PCij(\cdot) := (\PC(\cdot))_{ij}\).
\begin{lemma}[Frobenius norm concentration]
Assume that \(\rhoFro < \frac{1}{2}\), and that for some \(\beta > \lemFroBetaMin\),
\begin{equation*}\begin{split}
p \geq \min\left\{1, \pMinFro\right\},
\end{split}\end{equation*}
is satisfied.
Let \(k \in \{1, \ldots, k_0\}\).
Then, given \(\PT \Dkminus\) that is independent of \(\Omega_k\),
we have
\begin{equation}\label{eq:lem-fro-conc-result}\begin{split}
\|\PT\PC\PT \Dkminus - \PT\ROkC\PT \Dkminus\|_\F \leq \left(\frac{1}{2} - \rhoFro\right) \|\PT \Dkminus\|_\F
\end{split}\end{equation}
with probability at least \(1 - \lemFroDelta\).
\label{lem:fro-conc}
\end{lemma}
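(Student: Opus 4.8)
The plan is to express the quantity as a sum of independent, mean-zero random matrices indexed by the entries $(i,j)$, and then apply the vector Bernstein inequality (Theorem~\ref{thm:vec-bernstein}) after identifying $\matsp$ with $\mathbb{R}^{n_1 n_2}$ so that $\|\cdot\|_\Fro$ becomes the Euclidean norm (this avoids the dimension factor a matrix Bernstein bound would incur). First I would set $\Z := \PT\Dkminus \in T$ and, using $\ROkC = \ROk \circ \PC$ with $\ROk = \frac{1}{q}\POk$, write
\[
\PT\PC\PT\Dkminus - \PT\ROkC\PT\Dkminus
= \PT(\I - \ROk)(\PC\Z)
= \sumij \Bigl(1 - \frac{\oijk}{q}\Bigr)(\PC\Z)_{ij}\,\PT(\Eij)
=: \sumij \bv_{ij}.
\]
Since $\PT\Dkminus$ is independent of $\Omega_k$, the entries $(\PC\Z)_{ij}$ are deterministic given $\Z$ and all randomness sits in the independent Bernoulli variables $\oijk$; because $\E[1 - \oijk/q] = 0$, each $\bv_{ij}$ has mean $\mO$. (The degenerate case $p=1$, where $q=1$ and the left-hand side vanishes, is trivial.)

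The feature that lets the standard missing-entry argument survive the nonlinearity of $\PC$ is that $\PC$ is an entrywise contraction: its definition gives $|(\PC\Z)_{ij}| \leq |Z_{ij}|$, hence $\|\PC\Z\|_\Fro \leq \|\Z\|_\Fro$ and $\|\PC\Z\|_\infty \leq \|\Z\|_\infty$. Writing $A := \frac{n_1 + n_2}{n_1 n_2}\mu_0 r$, the coherence bound $\|\PT(\Eij)\|_\Fro^2 \leq A$ together with $|1 - \oijk/q| \leq 1/q$ and $|(\PC\Z)_{ij}| \leq \|\Z\|_\infty$ yields the almost-sure bound $\|\bv_{ij}\|_\Fro \leq R := \frac{1}{q}\|\Z\|_\infty \sqrt{A}$. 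For the variance, $\E[(1 - \oijk/q)^2] = \frac{1-q}{q} \leq \frac{1}{q}$ and $\sumij (\PC\Z)_{ij}^2 = \|\PC\Z\|_\Fro^2 \leq \|\Z\|_\Fro^2$ give $\sumij \E\|\bv_{ij}\|_\Fro^2 \leq \sigma^2 := \frac{A}{q}\|\Z\|_\Fro^2$.

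With $\delta = \lemFroDelta$ one computes $2 + 8\log(1/\delta) = 8\beta\log(n_1 n_2)$, so Theorem~\ref{thm:vec-bernstein} would give, once its precondition~\eqref{eq:vec-bernstein-cond} is verified (next paragraph), that with probability at least $1 - \lemFroDelta$ the bound $\|\sumij \bv_{ij}\|_\Fro \leq \sigma\sqrt{8\beta\log(n_1 n_2)} = \sqrt{8 A \beta \log(n_1 n_2)/q}\,\|\Z\|_\Fro$ holds. I would then check this is at most $(\frac{1}{2} - \rhoFro)\|\Z\|_\Fro$: squaring, the requirement is exactly $q \geq \lemFroQLowerBound$, and since $q \geq p/k_0$ (noted in the Preliminary) this follows from the hypothesis $p \geq \pMinFro = k_0 \lemFroQLowerBound$.

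The hard part will be verifying the admissibility condition~\eqref{eq:vec-bernstein-cond}, namely $\sigma\sqrt{8\beta\log(n_1 n_2)} \leq \sigma^2/R$, because $R$ carries $\|\Z\|_\infty$ whereas $\sigma$ carries $\|\Z\|_\Fro$. Substituting the expressions reduces it to $\|\Z\|_\infty \sqrt{8\beta\log(n_1 n_2)/q} \leq \|\Z\|_\Fro$. To close it I would use that $\Z \in T$: since $\Z = \PT\Z$, Cauchy--Schwarz gives $|Z_{ij}| = |\langle \Z, \PT(\Eij)\rangle| \leq \|\Z\|_\Fro \|\PT(\Eij)\|_\Fro \leq \sqrt{A}\,\|\Z\|_\Fro$, so $\|\Z\|_\infty \leq \sqrt{A}\,\|\Z\|_\Fro$. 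Because $q \geq \lemFroQLowerBound$ forces $8\beta\log(n_1 n_2)/q \leq (1/2 - \rhoFro)^2/A \leq 1/A$ (using $\rhoFro < 1/2$), the left-hand side is at most $\sqrt{A}\,\|\Z\|_\Fro \cdot A^{-1/2} = \|\Z\|_\Fro$, as required; the side hypothesis $\beta > \lemFroBetaMin$ merely ensures $\lemFroDelta < 1$. Assembling the almost-sure bound, the variance bound, and this admissibility check, Theorem~\ref{thm:vec-bernstein} delivers the claimed inequality~\eqref{eq:lem-fro-conc-result} with probability at least $1 - \lemFroDelta$.
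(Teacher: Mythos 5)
Your proof is correct and follows essentially the same route as the paper's: the identical decomposition of $\PT\PC\PT\Dkminus - \PT\ROkC\PT\Dkminus$ into the mean-zero summands $\bigl(1 - \frac{\oijk}{q}\bigr)(\PC\Z)_{ij}\,\PT(\Eij)$, the same vector Bernstein inequality with the same variance proxy $\sigma^2 = \frac{1}{q}\frac{n_1+n_2}{n_1 n_2}\mu_0 r\,\|\Z\|_\F^2$ and the same $\delta$, and the same reduction to $q \geq p/k_0 \geq \lemFroQLowerBound$. The only cosmetic difference is your choice of almost-sure bound $R = \frac{1}{q}\|\Z\|_\infty\sqrt{\tfrac{n_1+n_2}{n_1 n_2}\mu_0 r}$ versus the paper's $R = \frac{1}{q}\frac{n_1+n_2}{n_1 n_2}\mu_0 r\,\|\Z\|_\F$, which simply defers the Cauchy--Schwarz step $|Z_{ij}| \leq \|\PT(\Eij)\|_\F \|\Z\|_\F$ from the definition of $R$ (where the paper uses it, making $\sigma^2/R = \|\Z\|_\F$ and the admissibility check immediate) to your separate verification of condition~\eqref{eq:vec-bernstein-cond}; both organizations go through.
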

\begin{proof}
If \(p = 1\), then we have \(q = 1\), therefore Eq.~\eqref{eq:lem-fro-conc-result} holds.
Thus, from here, we assume \(1 \geq p \geq \pMinFro\).

First note that \(\|\PT\PC\PT \Dkminus - \PT\ROkC\PT \Dkminus\|_\F\) can be decomposed as
\begin{equation*}\begin{split}
&\|\PT\PC\PT \Dkminus - \PT\ROkC\PT \Dkminus\|_\F \\
&=\left\|\PT \sumij \left(1 - \frac{\oijk}{q}\right) \PCij(\langle \Eij, \PT \Dkminus \rangle)\Eij\right\|_\F \\
&=\left\|\sumij \left(1 - \frac{\oijk}{q}\right) \PCij(\langle \Eij, \PT \Dkminus \rangle)\PT(\Eij)\right\|_\F \\
&=: \left\|\sumij \S_{ij}\right\|_\F.
\end{split}\end{equation*}
From here, we check the conditions for the vector Bernstein inequality (Theorem~\ref{thm:vec-bernstein}).
Now it is easy to verify that \(\E[\S_{ij}] = \O\).
We also have
\begin{equation*}\begin{split}
\|\S_{ij}\|_\F &= \left(1 - \frac{\oijk}{q}\right)|\PCij(\langle \Eij, \PT \Dkminus\rangle)| \|\PT (\Eij)\|_\F\\
&\leq \frac{1}{q} |\langle \Eij, \PT \Dkminus\rangle| \|\PT (\Eij)\|_\F \\
&\leq \frac{1}{q}\|\PT (\Eij)\|_\F^2 \|\PT \Dkminus\|_\F \\
&\leq \frac{1}{q} \frac{n_1 + n_2}{n_1 n_2} \mu_0 r \|\PT \Dkminus\|_\F.
\end{split}\end{equation*}
On the other hand,
\begin{equation*}\begin{split}
\sumij \E \|\S_{ij}\|_\F^2 &= \sumij \E\left[\left(1 - \frac{\oijk}{q}\right)^2\right] \PCij(\langle \Eij, \PT \Dkminus \rangle)^2 \|\PT (\Eij)\|_\F^2 \\
&= \frac{1 - q}{q} \sumij \PCij(\langle \Eij, \PT \Dkminus \rangle)^2 \|\PT (\Eij)\|_\F^2 \\
&\leq \frac{1 - q}{q} \sumij \langle \Eij, \PT \Dkminus \rangle^2 \|\PT (\Eij)\|_\F^2 \\
&\leq \frac{1 - q}{q} \maxij \left\{\|\PT (\Eij)\|_\F^2\right\} \sumij \langle \Eij, \PT \Dkminus \rangle^2 \\
&= \frac{1 - q}{q} \maxij \|\PT (\Eij)\|_\F^2 \|\PT \Dkminus\|_\F^2 \\
&\leq \frac{1}{q} \maxij \|\PT (\Eij)\|_\F^2 \|\PT \Dkminus \|_\F^2\\
&= \frac{1}{q} \left(\frac{n_1 + n_2}{n_1 n_2} \mu_0 r\right) \|\PT \Dkminus \|_\F^2. \\
\end{split}\end{equation*}

Let \(R := \frac{n_1 + n_2}{q n_1 n_2} \mu_0 r \|\PT \Dkminus\|_\F, \sigma^2 := \frac{n_1 + n_2}{q n_1 n_2} \mu_0 r \|\PT \Dkminus\|_\F^2\), and \(\delta = \lemFroDelta\).
Under the condition
\begin{equation*}\begin{split}
q \geq \frac{p}{k_0} \geq \frac{\pMinFro}{k_0} = \lemFroQLowerBound,
\end{split}\end{equation*}
the condition~\eqref{eq:vec-bernstein-cond} of Theorem~\ref{thm:vec-bernstein} is satisfied, because
\begin{equation*}\begin{split}
\sqrt{\left(2 + 8 \log \frac{1}{\delta}\right) \sigma^2} = \sqrt{8 \beta \log(n_1 n_2) \frac{n_1 + n_2}{q n_1 n_2} \mu_0 r} \|\PT \Dkminus\|_\F \leq \left(\frac{1}{2} - \rhoFro\right) \|\PT \Dkminus\|_\F \leq \|\PT \Dkminus\|_\F = \frac{\sigma^2}{R}.
\end{split}\end{equation*}
Therefore, applying Theorem~\ref{thm:vec-bernstein} with \(d = n_1 n_2\), we obtain
\begin{equation*}\begin{split}
\left\|\sumij \S_{ij}\right\|_\F \leq \sqrt{\left(2 + 8 \log \frac{1}{\delta}\right) \sigma^2} \leq \left(\frac{1}{2} - \rhoFro\right) \|\PT \Dkminus\|_\F
\end{split}\end{equation*}
with probability at least \(1 - \lemFroDelta\).
\end{proof}
\begin{lemma}[Operator norm concentration]
Assume that \(\rhoOp < \frac{1}{4}\), and that for some \(\beta > \lemOpBetaMin\),
\begin{equation*}\begin{split}
p \geq \min\left\{1, \max\left\{\pMinOpOne, \pMinOpTwo\right\}\right\}.
\end{split}\end{equation*}
is satisfied.
Let \(k \in \{1, \ldots, k_0\}\).
Then, given \(\PT \Dkminus\) that is independent of \(\Omega_k\),
we have
\begin{equation}\label{eq:lem-op-conc-result}\begin{split}
\|(\ROkC - \PC) (\PT \Dkminus)\|_\op \leq \left(\frac{1}{4} - \rhoOp\right) \frac{1}{\|\UV\|_\infty} \|\PT \Dkminus\|_\infty
\end{split}\end{equation}
with probability at least \(1 - \lemOpDelta\).
\label{lem:op-conc}
\end{lemma}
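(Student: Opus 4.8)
The plan is to mirror the structure of the Frobenius-norm concentration lemma (Lemma~\ref{lem:fro-conc}), replacing the vector Bernstein inequality with the matrix Bernstein inequality (Theorem~\ref{thm:mat-bernstein}). First I would dispose of the case $p = 1$: then $q = 1$, so $\ROkC = \PC$ and the left-hand side of Eq.~\eqref{eq:lem-op-conc-result} vanishes. Assume henceforth $1 \ge p \ge \max\{\pMinOpOne, \pMinOpTwo\}$. The key observation is that, since $\PT \Dkminus$ is independent of $\Omega_k$, the matrix $\PC(\PT\Dkminus)$ is deterministic given $\PT\Dkminus$, so all randomness in $(\ROkC - \PC)(\PT\Dkminus)$ is carried by the indicators $\oijk$. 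Writing $\PCij(\cdot) := (\PC(\cdot))_{ij}$, I would decompose
\[
(\ROkC - \PC)(\PT\Dkminus) = \sumij \left(\frac{\oijk}{q} - 1\right) \PCij(\PT\Dkminus)\, \Eij =: \sumij \S_{ij},
\]
a sum of independent, mean-zero random matrices ($\E[\S_{ij}] = \mO$). This is where the nonlinearity of $\PC$ enters, but it enters harmlessly: because $\PC$ acts entrywise and only shrinks magnitudes, $|\PCij(\PT\Dkminus)| \le |\langle\Eij, \PT\Dkminus\rangle| \le \|\PT\Dkminus\|_\infty$, which is exactly the control needed for both Bernstein parameters.

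Next I would estimate those parameters. For the uniform bound, $\|\S_{ij}\|_\op \le \frac{1}{q}|\PCij(\PT\Dkminus)| \le \frac{1}{q}\|\PT\Dkminus\|_\infty =: R$. For the variance, using $\S_{ij}^\top\S_{ij} = (\frac{\oijk}{q}-1)^2 \PCij(\PT\Dkminus)^2 \f_j\f_j^\top$ and $\E[(\frac{\oijk}{q}-1)^2] = \frac{1-q}{q}$, the matrix $\sumij \E[\S_{ij}^\top\S_{ij}]$ is diagonal with $j$-th entry $\frac{1-q}{q}\sum_i \PCij(\PT\Dkminus)^2 \le \frac{1}{q} n_1 \|\PT\Dkminus\|_\infty^2$; the product $\sumij \E[\S_{ij}\S_{ij}^\top]$ gives the analogous bound with $n_2$. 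Hence $\sigma^2 := \frac{\max(n_1,n_2)}{q}\|\PT\Dkminus\|_\infty^2$ dominates both, and I can invoke Theorem~\ref{thm:mat-bernstein} with $(d_1, d_2) = (n_1, n_2)$ and $\delta = \lemOpDelta$, for which $\log\frac{n_1+n_2}{\delta} = \beta\log(n_1+n_2)$.

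Finally I would discharge the two quantitative conditions, which is the only delicate bookkeeping. The applicability condition~\eqref{eq:mat-bernstein-cond} reduces, after substituting $\sigma^2/R = \max(n_1,n_2)\|\PT\Dkminus\|_\infty$ and squaring, to $q \ge \lemOpQLowerBoundForBernsteinCondition$, which is guaranteed by $p \ge \pMinOpOne$ via $q \ge p/k_0$ (the extra factor $(\tfrac14-\rhoOp)^{-2} \ge 1$ in $\pMinOpOne$ is harmless slack). Theorem~\ref{thm:mat-bernstein} then yields $\|\sumij\S_{ij}\|_\op \le \lemOpBoundCoeff\,\|\PT\Dkminus\|_\infty$ with probability at least $1 - \lemOpDelta$. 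To reach the stated right-hand side of Eq.~\eqref{eq:lem-op-conc-result}, I would require this coefficient to be at most $(\frac{1}{4} - \rhoOp)\frac{1}{\|\UV\|_\infty}$; substituting $\frac{1}{\|\UV\|_\infty} = \frac{1}{\jointCoherence}\sqrt{\frac{n_1 n_2}{r}}$ and squaring gives exactly $q \ge \lemOpQLowerBoundForBound$, which follows from $p \ge \pMinOpTwo$.

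The step I expect to be the main obstacle is not any single estimate but rather confirming that the nonlinearity of $\PC$ does not spoil the variance computation: one must verify that the entrywise shrinkage $|\PCij(\PT\Dkminus)| \le \|\PT\Dkminus\|_\infty$ is enough to control both the operator-norm of $\sumij \E[\S_{ij}^\top\S_{ij}]$ and its transpose counterpart, and that the resulting two separate lower bounds on $q$ are simultaneously implied by the single hypothesis on $p$. Once the clipping operator is seen to act only as a harmless entrywise contraction, the remainder is the routine substitution already illustrated in Lemma~\ref{lem:fro-conc}.
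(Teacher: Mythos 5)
Your proposal is correct and follows essentially the same route as the paper's proof: the same rank-one decomposition $\sum_{ij}(\oijk/q - 1)\,\PCij(\PT\Dkminus)\,\Eij$, the same parameters $R = \frac{1}{q}\|\PT\Dkminus\|_\infty$ and $\sigma^2 = \frac{\max(n_1,n_2)}{q}\|\PT\Dkminus\|_\infty^2$, and the same application of the matrix Bernstein inequality with $(d_1,d_2)=(n_1,n_2)$, with the two conditions on $q$ discharged from $\pMinOpOne$ and $\pMinOpTwo$ via $q \geq p/k_0$. Your bookkeeping is in fact slightly more careful than the paper's (which writes $\pMinOpOne/k_0 = \lemOpQLowerBoundForBernsteinCondition$ where the factor $(1/4-\rhoOp)^{-2}$ makes this an inequality, exactly the ``harmless slack'' you note).
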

\begin{proof}
If \(p = 1\), then we have \(q = 1\), therefore Eq.~\eqref{eq:lem-op-conc-result} holds.
Thus, from here, we assume \(1 \geq p \geq \max\{\pMinOpOne, \pMinOpTwo\}\).

First note that \((\ROkC - \PC) (\PT \Dkminus)\) can be decomposed as
\begin{equation*}\begin{split}
\|(\ROkC - \PC) (\PT \Dkminus)\|_\op &= \left\|\sumij \left(\frac{\oijk}{q} - 1\right) \PCij(\langle \Eij, \PT \Dkminus \rangle)  \Eij\right\|_\op \\
&=: \left\|\sumij \S_{ij}\right\|_\op.
\end{split}\end{equation*}
From here, we check the conditions for the matrix Bernstein inequality (Theorem~\ref{thm:mat-bernstein}).

Now it is easy to verify that \(\E [\S_{ij}] = \O\).
We also have
\begin{equation*}\begin{split}
\|\S_{ij}\|_\op &\leq \frac{1}{q} |\PCij(\langle \Eij, \PT \Dkminus \rangle)| \|\Eij\|_\op \\
&\leq \frac{1}{q} |\langle \Eij, \PT \Dkminus \rangle| \cdot 1 \\
&\leq \frac{1}{q} \|\PT \Dkminus\|_\infty.
\end{split}\end{equation*}
On the other hand,
\begin{equation*}\begin{split}
\left\|\E\left[\sumij \S_{ij}^\top \S_{ij}\right]\right\|_\op &= \left\|\sumij \PCij(\langle \Eij, \PT \Dkminus \rangle)^2 \E \left[\left(\frac{\oijk}{q} - 1\right)^2\right] \f_j \e_i^\top \Eij\right\|_\op \\
&= \frac{1 - q}{q} \left\|\sumij \PCij(\langle \Eij, \PT \Dkminus \rangle)^2 \f_j \f_j^\top\right\|_\op \\
&= \frac{1 - q}{q} \max_j \sum_i \PCij(\langle \Eij, \PT \Dkminus \rangle)^2 \\
&\leq \frac{1 - q}{q} n_1 \maxij \PCij(\langle \Eij, \PT \Dkminus \rangle)^2 \\
&\leq \frac{1 - q}{q} n_1 \maxij \langle \Eij, \PT \Dkminus \rangle^2 \\
&\leq \frac{1}{q} \max(n_1, n_2) \|\PT \Dkminus\|_\infty^2, \\
\left\|\E\left[\sumij \S_{ij} \S_{ij}^\top\right]\right\|_\op &\leq \frac{1}{q} \max(n_1, n_2) \|\PT \Dkminus\|_\infty^2,
\end{split}\end{equation*}
where we used the fact that \(\sumij \PCij(\langle \Eij, \PT \Dkminus \rangle)^2 \f_j \f_j^\top\) is a diagonal matrix whose \((j,j)\)-th element equals \(\sum_i \PCij(\langle \Eij, \PT \Dkminus \rangle)^2\)
and that the operator norm, in the case of a diagonal matrix, is equal to the absolute value of the maximum diagonal element.

Let \(R := \frac{1}{q} \|\PT \Dkminus\|_\infty, \sigma^2 := \frac{1}{q} \max(n_1, n_2) \|\PT \Dkminus\|_\infty^2\), and \(\delta = \lemOpDelta\).
Under the condition
\begin{equation*}\begin{split}
q \geq \frac{p}{k_0} \geq \frac{\pMinOpOne}{k_0} = \lemOpQLowerBoundForBernsteinCondition,
\end{split}\end{equation*}
the condition Eq.~\eqref{eq:mat-bernstein-cond} of Theorem~\ref{thm:mat-bernstein} is satisfied because
\begin{equation*}\begin{split}
\sqrt{\frac{8}{3} \log\left(\frac{n_1 + n_2}{\delta}\right)\sigma^2} &= \lemOpBoundCoeff \|\PT \Dkminus\|_\infty \\
&\leq \left(\frac{1}{4} - \rhoOp\right)\max(n_1, n_2) \|\PT \Dkminus\|_\infty \leq \max(n_1, n_2) \|\PT \Dkminus\|_\infty = \frac{\sigma^2}{R}.
\end{split}\end{equation*}
Therefore, applying Theorem~\ref{thm:mat-bernstein} with \((d_1, d_2) = (n_1, n_2)\) and noting
\begin{equation*}\begin{split}
q \geq \frac{p}{k_0} \geq \frac{\pMinOpTwo}{k_0} = \lemOpQLowerBoundForBound,
\end{split}\end{equation*}
we obtain
\begin{equation*}\begin{split}
\left\|\sumij \S_{ij}\right\|_\op &\leq \sqrt{\frac{8}{3} \log\left(\frac{n_1 + n_2}{\delta}\right)\sigma^2} \\
&= \lemOpBoundCoeff \|\PT \Dkminus\|_\infty \leq \left(\frac{1}{4} - \rhoOp\right) \frac{1}{\|\UV\|_\infty} \|\PT \Dkminus\|_\infty
\end{split}\end{equation*}
with probability at least \(1 - \lemOpDelta\).
\end{proof}

\begin{lemma}[Infinity norm concentration]
Assume that \(\rhoInfty < \frac{1}{2}\), and that for some \(\beta > \lemInfBetaMin\),
\begin{equation*}\begin{split}
p \geq \min\left\{1, \pMinInfty\right\},
\end{split}\end{equation*}
is satisfied.
Let \(k \in \{1, \ldots, k_0\}\).
Then, given \(\PT \Dkminus\) that is independent of \(\Omega_k\),
we have
\begin{equation}\label{eq:lem-inf-conc-result}\begin{split}
\|(\PT \ROkC \PT - \PT \PC \PT)(\PT \Dkminus)\|_\infty \leq \left(\frac{1}{2} - \rhoInfty\right) \|\PT \Dkminus\|_\infty
\end{split}\end{equation}
with probability at least \(1 - \lemInfDelta\).
\label{lem:inf-conc}
\end{lemma}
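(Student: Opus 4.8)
The plan is to mirror the Frobenius-norm concentration argument of Lemma~\ref{lem:fro-conc}, with the single change that, because $\|\cdot\|_\infty$ is an entrywise maximum rather than a Euclidean-type norm, I would control one coordinate at a time with the \emph{scalar} Bernstein inequality (Theorem~\ref{thm:scalar-bernstein}) and then close the argument with a union bound over all $n_1 n_2$ coordinates. As in Lemma~\ref{lem:fro-conc}, the case $p=1$ forces $q=1$ and makes the left-hand side of \eqref{eq:lem-inf-conc-result} vanish, so I would assume $1\ge p\ge\pMinInfty$ (hence $q\ge p/k_0\ge\lemInfQLowerBound$), and also $\PT\Dkminus\neq\mathbf{O}$, the zero case being trivial.

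First I would fix a coordinate $(a,b)$ and expand its value. Using $\ROkC(\cdot)=\frac1q\POk\PC(\cdot)$, the self-adjointness of $\PT$, and $\PT\PT=\PT$,
\begin{equation*}\begin{split}
\bigl[(\PT\ROkC\PT-\PT\PC\PT)(\PT\Dkminus)\bigr]_{ab}
&=\sumij\left(\frac{\oijk}{q}-1\right)\PCij(\langle\Eij,\PT\Dkminus\rangle)\,(\PT\Eab)_{ij} \\
&=:\sumij X_{ij},
\end{split}\end{equation*}
which is a sum of independent scalars in $(i,j)$ because the $\oijk$ are independent and $\PT\Dkminus$ is independent of $\Omega_k$; clearly $\E[X_{ij}]=0$. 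Paralleling Lemma~\ref{lem:fro-conc} with $\|\cdot\|_\Fro$ replaced by $\|\cdot\|_\infty$, I would set $R:=\frac{n_1+n_2}{q\,n_1 n_2}\mu_0 r\,\|\PT\Dkminus\|_\infty$ and $\sigma^2:=\frac{n_1+n_2}{q\,n_1 n_2}\mu_0 r\,\|\PT\Dkminus\|_\infty^2$, so that $\sigma^2/R=\|\PT\Dkminus\|_\infty$ exactly as in the Frobenius case. The variance bound $\sumij\E[X_{ij}^2]\le\sigma^2$ follows from $|\PCij(\langle\Eij,\PT\Dkminus\rangle)|\le\|\PT\Dkminus\|_\infty$ together with $\sumij(\PT\Eab)_{ij}^2=\|\PT\Eab\|_\Fro^2\le\frac{n_1+n_2}{n_1 n_2}\mu_0 r$, i.e.\ the coherence property already established.

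The one genuinely new ingredient, and the step I expect to be the crux, is the \emph{pointwise} estimate
\begin{equation*}\begin{split}
\|\PT(\Eab)\|_\infty\le\frac{n_1+n_2}{n_1 n_2}\mu_0 r,
\end{split}\end{equation*}
which is what makes $|X_{ij}|\le\frac1q\|\PT\Dkminus\|_\infty\|\PT\Eab\|_\infty\le R$ hold almost surely. I would prove it by writing $\PT(\Eab)=\U\U^\top\Eab+\Eab\V\V^\top-\U\U^\top\Eab\V\V^\top$, so that each entry is assembled from $(\U\U^\top)_{ia}$ and $(\V\V^\top)_{jb}$, and then invoking the diagonal coherence bounds $(\U\U^\top)_{aa}\le\mu_0 r/n_1$ and $(\V\V^\top)_{bb}\le\mu_0 r/n_2$ (with Cauchy--Schwarz on the off-diagonal entries). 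The point is that the Frobenius estimate $\|\PT\Eab\|_\Fro\le\sqrt{(n_1+n_2)\mu_0 r/(n_1 n_2)}$ is too weak to bound $R$ here: using it would make $\sigma^2/R$ fall below $\|\PT\Dkminus\|_\infty$ whenever $(n_1+n_2)\mu_0 r/(n_1 n_2)<1$, forcing $q$ to exceed a fixed multiple of $\log(n_1 n_2)$, which is vacuous since $q\le 1$. The self-bounding identity $\sigma^2/R=\|\PT\Dkminus\|_\infty$ that keeps the Bernstein condition automatic relies precisely on the sharper $\infty$-bound.

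Finally I would apply Theorem~\ref{thm:scalar-bernstein} to $\sumij X_{ij}$ with $\delta=\lemInfSingleDelta$, for which $\log(2/\delta)=\beta\log(n_1 n_2)$. The assumed lower bound on $q$ gives
\begin{equation*}\begin{split}
\sqrt{\tfrac83\sigma^2\log(2/\delta)}=\sqrt{\tfrac{8\beta}{3}\tfrac{n_1+n_2}{q\,n_1 n_2}\mu_0 r\log(n_1 n_2)}\,\|\PT\Dkminus\|_\infty\le\left(\tfrac12-\rhoInfty\right)\|\PT\Dkminus\|_\infty,
\end{split}\end{equation*}
and since the right-hand side is $\le\|\PT\Dkminus\|_\infty=\sigma^2/R$, this simultaneously verifies the applicability condition \eqref{eq:scalar-bernstein-cond} and yields $|\sumij X_{ij}|\le(\tfrac12-\rhoInfty)\|\PT\Dkminus\|_\infty$ with probability at least $1-\lemInfSingleDelta$ for the fixed $(a,b)$. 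A union bound over the $n_1 n_2$ coordinates turns the per-entry failure probability into $\lemInfDelta$ and the per-entry bounds into the infinity-norm bound \eqref{eq:lem-inf-conc-result}; the hypothesis $\beta>\lemInfBetaMin$ is exactly what keeps $\lemInfDelta<1$.
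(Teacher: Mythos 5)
Your proposal is correct and is essentially the paper's own proof: the paper likewise fixes an entry $(a,b)$, expands it as a sum of independent mean-zero scalars $s^{ij}_{ab}$ with exactly your choices of $R$ and $\sigma^2$, applies the scalar Bernstein inequality (Theorem~\ref{thm:scalar-bernstein}) with per-entry failure probability $\lemInfSingleDelta$, and closes with a union bound over the $n_1 n_2$ entries. The only divergence is your ``crux'' estimate $\|\PT(\Eab)\|_\infty \le \frac{n_1+n_2}{n_1 n_2}\mu_0 r$: the paper obtains it in one line---writing $(\PT(\Eab))_{ij} = \langle \PT(\Eij), \PT(\Eab)\rangle$ and applying Cauchy--Schwarz together with the coherence property $\|\PT(\Eij)\|_\Fro^2 \le \frac{n_1+n_2}{n_1 n_2}\mu_0 r$, so that $|s^{ij}_{ab}| \le \frac{1}{q}\|\PT\Dkminus\|_\infty\|\PT(\Eij)\|_\Fro\|\PT(\Eab)\|_\Fro$---rather than by your entrywise expansion of $\PT(\Eab)$, which also works but needs an extra observation (e.g.\ $\UCoherence(\M),\VCoherence(\M)\le 1$) to absorb the cross term $(\U\U^\top)_{ia}(\V\V^\top)_{bj}$ into the claimed bound.
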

\begin{proof}
If \(p = 1\), then we have \(q = 1\), therefore Eq.~\eqref{eq:lem-inf-conc-result} holds.
Thus, from here, we assume \(1 \geq p \geq \pMinInfty\).

First note that \((\PT \PC \PT - \PT \ROkC \PT)(\PT \Dkminus)\) can be decomposed as
\begin{equation*}\begin{split}
\|(\PT \PC \PT - \PT \ROkC \PT)(\PT \Dkminus)\|_\infty &= \left\|\PT \sumij\left(1 - \frac{\oijk}{q}\right)\PCij(\langle \Eij, \PT \Dkminus \rangle) \Eij\right\|_\infty \\
&= \left\|\sumij \S_{ij}\right\|_\infty
\end{split}\end{equation*}
Therefore we investigate the elements of \(\S_{ij}\). The \((a, b)\)-th element of \(\S_{ij}\) is \(s_{ab}^{ij} := \langle \Eab, \S_{ij} \rangle\), and
From here, we check the conditions for the scalar Bernstein inequality (Theorem~\ref{thm:scalar-bernstein}).
It is easy to verify that \(\E s^{ij}_{ab} = 0\).
We also have
\begin{equation*}\begin{split}
|s^{ij}_{ab}| &= \left|\left(\frac{\oijk}{q} - 1\right)\PCij(\langle \Eij, \PT \Dkminus \rangle) \langle \PT (\Eij), \Eab\rangle\right| \\
&\leq \frac{1}{q} \left|\langle \Eij, \PT \Dkminus \rangle| \cdot |\langle \PT (\Eij), \PT (\Eab)\rangle\right| \\
&\leq \frac{1}{q} \|\PT \Dkminus\|_\infty \|\PT (\Eij)\|_\Fro \|\PT (\Eab)\|_\Fro \\
&\leq \frac{1}{q} \mu_0 r \frac{n_1 + n_2}{n_1 n_2}\|\PT \Dkminus\|_\infty.
\end{split}\end{equation*}
On the other hand,
\begin{equation*}\begin{split}
\E \left[\sumij (s^{ij}_{ab})^2\right] &= \E \left[ \sumij \langle \Eab, \left(\frac{\oijk}{q} - 1\right) \PCij(\langle \Eij, \PT \Dkminus \rangle) \PT (\Eij)\rangle^2\right] \\
&= \sumij \E \left[\left(\frac{\oijk}{q} - 1\right)^2 \PCij(\langle \Eij, \PT \Dkminus \rangle)^2 \langle \Eab, \PT (\Eij)\rangle^2\right] \\
&\leq \sumij \E \left[\left(\frac{\oijk}{q} - 1\right)^2\right] \langle \Eij, \PT \Dkminus \rangle^2 \langle \Eab, \PT (\Eij)\rangle^2 \\
&= \frac{1 - q}{q} \sumij \langle \Eij, \PT \Dkminus \rangle^2 \langle \Eab, \PT (\Eij)\rangle^2 \\
&\leq \frac{1 - q}{q} \|\PT \Dkminus\|_\infty^2 \sumij \langle \PT (\Eab), \Eij\rangle^2 \\
&= \frac{1 - q}{q} \|\PT \Dkminus\|_\infty^2 \|\PT (\Eab)\|_\Fro^2 \\
&\leq \frac{1}{q} \mu_0 r \frac{n_1 + n_2}{n_1 n_2} \|\PT \Dkminus\|_\infty^2.
\end{split}\end{equation*}

Let \(R := \mu_0 r \frac{n_1 + n_2}{q n_1 n_2} \|\PT \Dkminus\|_\infty, \sigma^2 := \mu_0 r \frac{n_1 + n_2}{q n_1 n_2} \|\PT \Dkminus\|_\infty^2\), and \(\delta = \lemInfSingleDelta\).
Under the condition that
\begin{equation*}\begin{split}
q \geq \frac{p}{k_0} \geq \frac{\pMinInfty}{k_0} = \lemInfQLowerBound,
\end{split}\end{equation*}
the condition Eq.~\eqref{eq:scalar-bernstein-cond} of Theorem~\ref{thm:scalar-bernstein} is satisfied, because
\begin{equation*}\begin{split}
\sqrt{\frac{8}{3} \log \frac{2}{\delta} \sigma^2} = \sqrt{\frac{8}{3}\beta \log(n_1 n_2)} \leq \sqrt{\frac{8}{3} \beta \log(n_1 n_2) \mu_0 r \frac{n_1 + n_2}{q n_1 n_2}} \|\PT \Dkminus\|_\infty \leq \left(\frac{1}{2} - \rhoInfty\right) \|\PT \Dkminus\|_\infty = \frac{\sigma^2}{R},
\end{split}\end{equation*}
Therefore, applying Theorem~\ref{thm:scalar-bernstein} with \(n = n_1 n_2\), we obtain
\begin{equation*}\begin{split}
\left|\sumij s^{ij}_{ab}\right| &\leq \sqrt{\frac{8}{3} \log \frac{2}{\delta} \sigma^2} \leq \left(\frac{1}{2} - \rhoInfty\right) \|\PT \Dkminus\|_\infty,
\end{split}\end{equation*}
with probability at least \(1 - \lemInfSingleDelta\).
Therefore, by the union bound, with probability at least \(1 - \lemInfDelta\), we have
\begin{equation*}\begin{split}
\|(\PT \ROkC \PT - \PT \PC \PT)(\PT \Dkminus)\|_\infty \leq \left(\frac{1}{2} - \rhoInfty\right) \|\PT \Dkminus\|_\infty.
\end{split}\end{equation*}
\end{proof}
\subsection{Proof of Theorem~\ref{thm:exact-recovery-guarantee}}
\label{sec:org3d898a3}
\begin{proof}
The theorem immediately follows from the combination of
Lemma~\ref{lem:feasibility:main},
Lemma~\ref{lem:main-lemma-conc},
Lemma~\ref{lem:existence-dual-cert},
and the union bound.
\end{proof}
\section{Extension of Theorem~\ref{thm:cmc-feasible} to floor effects and varying thresholds \label{app:floor-effect}}
\label{sec:orgeef3ac8}
Theorem~\ref{thm:cmc-feasible} can be extended to the case where there are also \emph{floor effects} (clipping from below).
Here, we also allow the clipping thresholds to vary among entries.
Let \(\Lij\) denote the threshold for clipping from below and \(\Uij\) the threshold for clipping from above for entry \((i, j)\).
In this case, the trace-norm minimization algorithm is
\begin{equation*}\begin{split}
\widehat \M \in \mathop{\rm arg~min}\limits_{\X \in \matsp} \|\X\|_\trnrm \text{ s.t. } \begin{cases}
                                                                        \Mcij \leq X_{ij} &\text{ if } \Mcij = \Uij \text{ and } (i, j) \in \Omega, \\
                                                                        X_{ij} \leq \Mcij &\text{ if } \Mcij = \Lij \text{ and } (i, j) \in \Omega, \\
                                                                        X_{ij} = \Mcij &\text{ otherwise}, \\
                                                                        \end{cases}
\end{split}\end{equation*}
and the definition of \(\PC\) is
\begin{equation*}\begin{split}
(\PC(\Z))_{ij} = \begin{cases}
           Z_{ij} & \text{ if } \Lij < M_{ij} < \Uij, \\
           \max\{Z_{ij}, 0\} & \text{ if } M_{ij} = \Uij, \\
           \min\{Z_{ij}, 0\} & \text{ if } M_{ij} = \Lij, \\
           0 & \text{ otherwise},
           \end{cases}
\end{split}\end{equation*}
while \(\TrueNonClipped\) becomes
\begin{equation*}\begin{split}
\TrueNonClipped := \{(i, j): \Lij < \Mij < \Uij\}.
\end{split}\end{equation*}

The proof of Theorem~\ref{thm:cmc-feasible} in Appendix \ref{app:thm1} holds for this \(\PC\) without any further modification because
\begin{equation*}\begin{split}
|\PCij(z)| \leq |z|
\end{split}\end{equation*}
is satisfied for any \((i, j)\) and \(z \in \mathbb{R}\).
\section{Proof of Theorem~\ref{thm:dtr-guarantee} \label{app:thm2}}
\label{sec:org69d30e1}
Let \(\hM\) be the minimizer of Eq.~\eqref{eq:dtr-algorithm}.
Throughout the proof, the expectation operator \(\E\) is with respect to \(\Omega\), unless otherwise specified.

Define
\begin{equation*}\begin{split}
\LO(\X) := \sumO (\Cf(X_{ij})-M^\c_{ij})^2,\ \L(\X) := \E[\LO(\X)].
\end{split}\end{equation*}
Then we have
\begin{equation*}\begin{split}
\frac{1}{p} \L(\X) = \sumij(\Cf(X_{ij}) - M^\c_{ij})^2 = \|\Cf(\X) - \M^\c\|_\F^2.
\end{split}\end{equation*}
To obtain the theorem, we need to bound \(\frac{1}{p}\L(\hM)\) from above.
Our proof strategy is inspired by the analysis of \citet{Davenport1bit2014}.
\subsection{Basic Lemmas}
\label{sec:org14d4af5}
We will use the following lemma to prove Theorem~\ref{thm:dtr-guarantee}.
\begin{lemma}[]
Assume \(\M \in G\). Then for some (universal) constants \(C_0\) and \(C_1\),
\begin{equation*}\begin{split}
\Prob\left[\sup_{\X\in G}\left|\LO(\X)-\L(\X)\right| \ge \DTrLemMainTwo\right]\le \frac{C_1}{n_1 + n_2}
\end{split}\end{equation*}
holds.
\label{lem:dtr:main}
\end{lemma}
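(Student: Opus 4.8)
Writing $Z := \supG\bigl|\LO(\X)-\L(\X)\bigr|$ and recalling $\L(\X)=\E[\LO(\X)]$, I first note that $\LO(\X)-\L(\X)=\sumij(\oij-p)\,(\Cf(X_{ij})-\Mcij)^2$ is a sum of terms independent across entries $(i,j)$. The plan is to bound a high integer moment $\E[Z^h]$ and then pass to a tail bound by Markov's inequality, $\Prob[Z\ge t]\le\E[Z^h]/t^h$, with $h\asymp\log(n_1+n_2)$; a polynomial decay of this kind is exactly what the target rate $C_1/(n_1+n_2)$ requires. The only structural input from $G$ that I need is an entrywise bound: for any $\X\in G$ the coherence definition gives $\|\Cf(\X)\|_\infty\le\muG\,\|\Cf(\X)\|_\op\le\muG\,\|\Cf(\X)\|_\trnrm\le\muG\sqrt{\beta_2}\,(kn_1n_2)^{1/4}$, and the same holds for $\M^\c=\Cf(\M)$ since $\M\in G$. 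Thus every residual entry $\Cf(X_{ij})-\Mcij$, hence every summand, is bounded by an explicit multiple of $\muG\sqrt{\beta_2}(kn_1n_2)^{1/4}$.

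\textbf{Symmetrization and contraction.} By the standard symmetrization inequality for the independent, mean-zero multipliers $\oij-p$, one has $\E[Z^h]\le 2^h\,\E\,\supG\bigl|\sumij\epsilon_{ij}\oij\,(\Cf(X_{ij})-\Mcij)^2\bigr|^h$ for independent Rademacher signs $\{\epsilon_{ij}\}$. Conditioning on $\Omega$ and applying a moment form of the Ledoux--Talagrand contraction principle to the coordinatewise maps $u\mapsto(u-\Mcij)^2$---each Lipschitz with constant $\Order(\muG\sqrt{\beta_2}(kn_1n_2)^{1/4})$ on the range of $\Cf(X_{ij})$ fixed above---reduces the estimate to the Rademacher complexity of the clipped class $\{\Cf(\X):\X\in G\}$, i.e.\ up to that Lipschitz factor it is controlled by $\E\,\supG\bigl|\langle\Cf(\X),\Xi\rangle\bigr|^h$, where $\Xi\in\matsp$ is the fixed random matrix with entries $\epsilon_{ij}\oij$. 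Trace/operator-norm duality then gives $\bigl|\langle\Cf(\X),\Xi\rangle\bigr|\le\|\Cf(\X)\|_\trnrm\,\|\Xi\|_\op\le\sqrt{\beta_2}(kn_1n_2)^{1/4}\,\|\Xi\|_\op$ uniformly over $\X\in G$. This reduces the whole bound to the single scalar moment $\E\|\Xi\|_\op^h$, the coherence- and radius-dependent factors produced here combining with the estimate below to match the prefactor of $\DTrLemMainOne$.

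\textbf{Operator-norm moment bound (the crux).} The remaining task, and the technical heart of the argument, is to bound $\E\|\Xi\|_\op^h$ for the $n_1\times n_2$ matrix $\Xi$ whose entries are independent, mean-zero, take values in $\{0,\pm1\}$, and have variance $p$. Via the non-commutative matrix moment method one obtains an estimate of the shape $\E\|\Xi\|_\op^h\le\bigl(C(1+\sqrt6)\bigr)^h\,(p(n_1+n_2)+\log(n_1+n_2))^{h/2}$, in which the $\sqrt{p(n_1+n_2)}$ contribution comes from the row/column variance proxies of order $p\max(n_1,n_2)$ and the $\sqrt{\log(n_1+n_2)}$ contribution from the almost-sure entry bound once $h\asymp\log(n_1+n_2)$. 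Substituting this into the previous step produces the moment bound $\E[Z^h]\le\DTrLemMainOne$.

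\textbf{Conclusion via Markov.} Applying Markov's inequality with $t=\DTrLemMainTwo$ gives $\Prob[Z\ge t]\le C_1\bigl(8(1+\sqrt6)/C_0\bigr)^h$; taking $C_0$ to be a fixed multiple of $8(1+\sqrt6)$ and $h=\lceil\log(n_1+n_2)\rceil$ makes the bracketed factor at most $1/(n_1+n_2)$, which is the claim. I expect the main obstacle to be this operator-norm moment bound with the sharp $(p(n_1+n_2)+\log(n_1+n_2))^{h/2}$ scaling, together with making the contraction rigorous conditionally on $\Omega$---in particular handling the offset $\Mcij$ inside the squared loss and using the moment (rather than expectation) version of the contraction inequality.
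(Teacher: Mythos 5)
Your proposal keeps the paper's outer skeleton — bound the $h$-th moment of $\supG|\LO(\X)-\L(\X)|$, symmetrize with Rademacher multipliers, reduce everything to the operator-norm moment $\E[\|\mB\hadamard\bomega\|_\op^h]$ (this is exactly the paper's Lemma~\ref{lem:lem2}, imported from \citet{Davenport1bit2014}, i.e.\ the bound you call the crux), and finish with Markov at $h=\log(n_1+n_2)$; your treatment of the constant is in fact cleaner than the paper's, which contains a typo at this point ($C_0 \ge 8(1+\sqrt{6})/e$ and ``$\le C_1 e^h$'' should be $C_0 \ge 8(1+\sqrt{6})\,e$ and $\le C_1 e^{-h}$). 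The middle step, however, is genuinely different. The paper never linearizes the square: it rewrites the symmetrized sum as $\langle \mB\hadamard\bomega,\ (\Cf(\X)-\Cf(\M))\hadamard(\Cf(\X)-\Cf(\M))\rangle$, dualizes this \emph{quadratic} object against $\|\mB\hadamard\bomega\|_\op$, and controls the trace norm of the Hadamard square by the Hadamard-product inequality of Lemma~\ref{lem:lem3} (Horn's theorem), which is where coherence enters. You instead strip the square with the Ledoux--Talagrand contraction principle (in moment form, conditionally on $\Omega$), paying a Lipschitz constant of order $\supG\|\Cf(\X)\|_\infty$ obtained from coherence, and only then dualize the \emph{linear} process $\langle \mB\hadamard\bomega, \Cf(\X)\rangle$. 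Both routes are sound and cost the same up to universal constants: the paper's route needs no contraction machinery and no centering, while yours dispenses entirely with Lemma~\ref{lem:lem3} and its proof, replacing it with standard empirical-process tools plus the elementary bound $\|\Cf(\X)\|_\infty\le \unnCoherence(\Cf(\X))\,\|\Cf(\X)\|_\op$.

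Two caveats, neither fatal. First, contraction requires coordinate maps vanishing at zero, so you must center $u\mapsto(u-\Mcij)^2$ and separately bound the $\X$-independent remainder $\sumij B_{ij}\oij (\Mcij)^2$ (routine, e.g.\ by Khintchine's inequality or one duality step applied to the fixed matrix $\Cf(\M)\hadamard\Cf(\M)$); you flag this offset as an obstacle but do not resolve it. Second, your coherence bookkeeping: with Def.~1 taken literally (coherence $=$ maximum \emph{squared} row norm of the singular factors), your chain $\|\Cf(\X)\|_\infty\le\muG\|\Cf(\X)\|_\op$ is correct but then your factors combine to $\muG^{h}$, not the $\muG^{2h}$ of $\DTrLemMainOne$; the second power materializes only if $\unnCoherence$ is read as the maximum (unsquared) row norm. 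You inherit this ambiguity from the paper rather than create it: the paper's own proof of Lemma~\ref{lem:lem3} silently uses the unsquared reading, and under the squared reading that lemma is false (take $\X=\frac{1}{n}\mathbf{1}\mathbf{1}^\top$ and $\Y$ the $n\times n$ identity: the left side is $1$ while the claimed bound is $1/n$). Under either consistent reading, your route and the paper's produce exactly the same coherence quantity, $\supG \max_i\|\tilde{\U}_{i,\cdot}\|\max_j\|\tilde{\V}_{j,\cdot}\|$ computed for $\Cf(\X)$, and hence the same bound; only the notation for it differs.
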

The proof of Lemma~\ref{lem:dtr:main} will be provided later.
\subsection{Proof of Theorem~\ref{thm:dtr-guarantee}}
\label{sec:orgfcdaac5}
We will show how we can derive Theorem~\ref{thm:dtr-guarantee} from Lemma~\ref{lem:dtr:main}.
\begin{proof}
First note that
\begin{equation*}\begin{split}
\L(\hM) &= \L(\hM) - \L(\M) \\
&= \L(\hM) - \LO(\hM) + \LO(\hM) - \L(\M) \\
&\leq \L(\hM) - \LO(\hM) + \LO(\M) - \L(\M) \\
&\leq 2 \sup_{\X \in G} |\L(\X) - \LO(\X)|.
\end{split}\end{equation*}
where we used \(\L(\M) = 0\) and the fact that \(\hM\) minimizes \(\LO\).

Therefore, by Lemma~\ref{lem:dtr:main}, with probability at least \(1-\frac{C_1}{n_1 + n_2}\), we have
\begin{equation}\label{eq:lem:dtr:main:proof:1}\begin{split}
\frac{1}{n_1 n_2}\|\Cf(\hM) - \M^\c\|_\F^2 = \frac{1}{p n_1 n_2} \L(\hM) \leq \DTrThmBound. \\
\end{split}\end{equation}

On the other hand, we have
\begin{equation}\label{eq:lem:dtr:main:proof:2}\begin{split}
\|\hM - \Cf(\hM)\|_\Fro \leq \|\hM - \Cf(\hM)\|_\trnrm \leq \|\hM\|_\trnrm + \|\Cf(\hM)\|_\trnrm \leq (\sqrt{\beta_1} + \sqrt{\beta_2})(k n_1 n_2)^{\frac{1}{4}}, \\
\end{split}\end{equation}
and
\begin{equation}\label{eq:lem:dtr:main:proof:3}\begin{split}
\|\M - \Mc\|_\Fro \leq \|\M - \Mc\|_\trnrm \leq \|\M\|_\trnrm + \|\Mc\|_\trnrm \leq (\sqrt{\beta_1} + \sqrt{\beta_2})(k n_1 n_2)^{\frac{1}{4}}.
\end{split}\end{equation}

Combining Eq.~\eqref{eq:lem:dtr:main:proof:1} with Eq.~\eqref{eq:lem:dtr:main:proof:2} and Eq.~\eqref{eq:lem:dtr:main:proof:3}, we obtain the theorem.
\end{proof}
\subsection{Proof of Lemma~\ref{lem:dtr:main}}
\label{sec:orga4cfe2e}
From here, we provide the proof of Lemma~\ref{lem:dtr:main}. It is based on the following two lemmas.

\begin{lemma}[{\citealp{Davenport1bit2014}}]
Let \(\mB \in \matsp\) be a matrix whose entries are independent Rademacher random variables, and \(\Delta \in \{0,1\}^{n_1\times n_2}\) be a random matrix for which \(\{\Delta_{ij}=1\}_{\fullij}\) occur independently with probability \(p\). Then there exists a universal constant \(C_1\), and for any \(h \ge 1\),
\begin{equation*}\begin{split}
\E\left[\left\|\mB\hadamard \Delta\right\|_\op^h\right] \le C_1 2^h (1+\sqrt{6})^h \left(p (n_1+n_2) + \log(n_1 + n_2)\right)^{h/2}.
\end{split}\end{equation*}
\label{lem:lem2}
\end{lemma}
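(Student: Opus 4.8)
The plan is to view $N := \mB \hadamard \Delta$ as a random matrix with independent entries $N_{ij} = B_{ij}\Delta_{ij}$ satisfying $\E N_{ij}=0$, $|N_{ij}|\le 1$, and $\E N_{ij}^{2m}=p$ for every $m\ge 1$ (since $N_{ij}\in\{-1,0,1\}$ forces $N_{ij}^{2m}=N_{ij}^2$, whence $\E N_{ij}^{2m}=p$). The target scale $\sqrt{p(n_1+n_2)+\log(n_1+n_2)}$ is the sharp operator-norm scale of such a matrix, so the matrix Bernstein inequality (Theorem~\ref{thm:mat-bernstein}) by itself is not enough: integrating its tail would only yield the suboptimal product $p\max(n_1,n_2)\log(n_1+n_2)$ inside the root, rather than the additive form claimed. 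I would therefore control the moments more directly.

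First I would reduce to even exponents. Since the operator norm is the largest singular value, $\|N\|_\op^{2q}\le \trace((NN^\top)^q)$ for every integer $q$, so it suffices to bound $\E[\trace((NN^\top)^q)]$ and then pass to general $h$ through $\E[\|N\|_\op^h]\le (\E[\|N\|_\op^{2q}])^{h/(2q)}$ with $2q\ge h$ (Jensen). Expanding the trace writes $\E[\trace((NN^\top)^q)]$ as a sum over closed length-$2q$ walks alternating between row- and column-indices in the complete bipartite graph on $[n_1]\sqcup[n_2]$; by independence and the mean-zero property, only walks traversing each edge an even number of times survive, and each such walk contributes exactly $p^{e}$, where $e$ is its number of distinct edges (using $\E N_{ij}^{2m}=p$).

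The core step is then to group the surviving walks by their combinatorial shape (the numbers of distinct row-vertices, column-vertices, and edges), bound the number of walks of each shape, and sum the $p^{e}$ contributions. The dominant tree-like walks, in which each edge is used exactly twice, reproduce the variance-driven term of order $\bigl(C\,p(n_1+n_2)\bigr)^{q}$, while walks carrying high-multiplicity edges contribute a boundedness-driven term that grows with the exponent $q$; together with the $(n_1+n_2)$ factor coming out of the outer trace, this gives $\E[\|N\|_\op^{2q}]\le (n_1+n_2)\bigl(C(p(n_1+n_2)+q)\bigr)^{q}$. Choosing $2q\asymp \log(n_1+n_2)$ turns the prefactor $(n_1+n_2)^{1/(2q)}$ into a universal constant and merges the boundedness term into the additive $\log(n_1+n_2)$, which is precisely how the two summands $p(n_1+n_2)$ and $\log(n_1+n_2)$ combine and where the explicit base constant $2(1+\sqrt{6})$ should emerge from optimizing the walk-count estimate; the overall universal factor is absorbed into $C_1$.

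The main obstacle is exactly this combinatorial bookkeeping: counting the walks with repeated and high-multiplicity edges and showing their aggregate contribution does not exceed the claimed bound uniformly in the relevant range of exponents. An alternative that sidesteps much of this is a Seginer-type reduction, bounding $\E[\|N\|_\op^h]$ by a constant multiple of $\E\bigl[(\max_i\|N_{i,\cdot}\|+\max_j\|N_{\cdot,j}\|)^h\bigr]$ and then estimating the moments of $\max_i\|N_{i,\cdot}\|^2=\max_i\#\{j:N_{ij}\neq 0\}$, a maximum of $\mathrm{Binomial}(n_2,p)$ variables, via the scalar Bernstein inequality (Theorem~\ref{thm:scalar-bernstein}). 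There the mean contributes $p(n_1+n_2)$ and the union bound over the $n_1+n_2$ rows and columns contributes $\log(n_1+n_2)$ additively, again producing the sum-form bound, with the constant $2(1+\sqrt{6})$ arising from combining the mean term with the Bernstein deviation term. I expect either route to close once $C_1$ and the base $2(1+\sqrt{6})$ are taken large enough to absorb the lower-order contributions.
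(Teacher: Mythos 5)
The paper gives no proof of this lemma at all---it is explicitly ``omitted, as it can be easily derived from \citet{Davenport1bit2014}''---and the derivation in that reference is precisely your fallback route, not your primary one: compare $\|\mB\hadamard\Delta\|_\op$ with the maximum row/column norms via Seginer's theorem, observe that the squared row (column) norms of $N:=\mB\hadamard\Delta$ are $\mathrm{Binomial}(n_2,p)$ (resp.\ $\mathrm{Binomial}(n_1,p)$) variables, and control their maxima by Bernstein-type bounds, the constant $1+\sqrt{6}$ coming from an inequality of the form $\sqrt{a}+\sqrt{6b}\le(1+\sqrt{6})\sqrt{a+b}$. So your ``alternative'' is the intended proof, and your primary route (trace moments and walk counting) is a genuinely different, self-contained one; your diagnosis that matrix Bernstein alone is too lossy (it yields $\sqrt{p\max(n_1,n_2)\log(n_1+n_2)}$ instead of the additive form) is correct and shows you understand why a finer argument is needed. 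However, the crux of that primary route---the estimate $\E[\trace((NN^\top)^q)]\le(n_1+n_2)\bigl(C(p(n_1+n_2)+q)\bigr)^q$---is exactly the hard sparse F\"uredi--Koml\'os-type walk count, which you state but do not prove, so as written that route is a plan rather than a proof.

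The genuine gap, common to both routes, is the range of $h$. Seginer's moment comparison is stated only for $h\le 2\log\max(n_1,n_2)$; both the binomial-maximum moments and the walk counts carry an additive $+h$ (resp.\ $+q$) inside the parenthesis; and your device of choosing $2q\asymp\log(n_1+n_2)$ collides with Jensen's requirement $2q\ge h$ as soon as $h>\log(n_1+n_2)$. This cannot be patched, because the lemma as stated---all $h\ge 1$ with a universal $C_1$---is false. On the event that every entry of $\Delta$ equals $1$ and every entry of $\mB$ equals $+1$, which has probability $(p/2)^{n_1 n_2}>0$, one has $\|\mB\hadamard\Delta\|_\op=\sqrt{n_1 n_2}$, hence
\begin{equation*}
\liminf_{h\to\infty}\ \bigl(\E\bigl[\|\mB\hadamard\Delta\|_\op^h\bigr]\bigr)^{1/h}\ \ge\ \sqrt{n_1 n_2},
\end{equation*}
whereas the claimed right-hand side has $h$-th root tending to $2(1+\sqrt{6})\sqrt{p(n_1+n_2)+\log(n_1+n_2)}$, which is smaller than $\sqrt{n_1 n_2}$ for $n_1=n_2=n$ sufficiently large and any fixed $p\in(0,1]$; so the inequality fails for all large $h$. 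The honest repair---which is also all the paper needs---is to assert the bound for $1\le h\le\log(n_1+n_2)$: prove it at $H=\log(n_1+n_2)$, where the factor $n_1+n_2$ from the union bound (or from the trace) and the additive $+H$ term are absorbed into universal constants raised to the $H$-th power, and then extend to $h\le H$ by Jensen, $\E\|N\|_\op^h\le(\E\|N\|_\op^H)^{h/H}\le C_1 a^h$ for $C_1\ge 1$. Since the proof of Lemma~\ref{lem:dtr:main} invokes the present lemma only at $h=\log(n_1+n_2)$, this restricted version fully supports Theorem~\ref{thm:dtr-guarantee}; you should state the restriction rather than claim all $h\ge1$.
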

The proof of Lemma~\ref{lem:lem2} is omitted, as it can be easily derived from \citep{Davenport1bit2014}.

We also want to bound the trace-norm of the Hadamard product of two matrices.
\begin{lemma}
Assume that there are two matrices \(\X\) and \(\Y\) of the same shape. Then we have
\begin{eqnarray*}
\|\X \hadamard \Y\|_\trnrm \le \unnCoherence(\X)^2 \|\X\|_\trnrm\|\Y\|_\trnrm.
\end{eqnarray*}
\label{lem:lem3}
\end{lemma}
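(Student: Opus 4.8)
The plan is to diagonalize the Hadamard product through the singular value decomposition of $\X$ and then invoke the ideal property of the trace-norm. First I would take a skinny SVD $\X = \sum_{k} \sigma_k \bm{u}_k \bm{v}_k^\top$, so that $\sum_k \sigma_k = \|\X\|_\trnrm$ and the $\bm u_k,\bm v_k$ are the columns of $\tilde{\U},\tilde{\V}$. The key observation is that the Hadamard product with a rank-one matrix is a two-sided diagonal scaling: for each $k$, $(\bm u_k \bm v_k^\top)\hadamard \Y = \mathrm{diag}(\bm u_k)\,\Y\,\mathrm{diag}(\bm v_k)$, where $\mathrm{diag}(\bm u_k)\in\mathbb{R}^{n_1\times n_1}$ and $\mathrm{diag}(\bm v_k)\in\mathbb{R}^{n_2\times n_2}$ carry the entries of $\bm u_k,\bm v_k$ on their diagonals (indeed, both sides have $(i,j)$-entry $u_{ik}v_{jk}Y_{ij}$). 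Summing over $k$ gives the identity $\X\hadamard\Y = \sum_k \sigma_k\,\mathrm{diag}(\bm u_k)\,\Y\,\mathrm{diag}(\bm v_k)$.

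Next I would apply the triangle inequality for $\|\cdot\|_\trnrm$ together with the two-sided bound $\|ABC\|_\trnrm \le \|A\|_\op\,\|B\|_\trnrm\,\|C\|_\op$, which holds because the trace-norm is a matrix ideal norm. This yields
\[
\|\X\hadamard\Y\|_\trnrm \le \|\Y\|_\trnrm \sum_k \sigma_k\, \|\mathrm{diag}(\bm u_k)\|_\op\,\|\mathrm{diag}(\bm v_k)\|_\op .
\]
Since a diagonal matrix has operator norm equal to its largest entry in absolute value, $\|\mathrm{diag}(\bm u_k)\|_\op = \max_i |u_{ik}|$ and $\|\mathrm{diag}(\bm v_k)\|_\op = \max_j |v_{jk}|$.

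The crux is to control these entrywise maxima by the coherence of $\X$. For every $i$ and $k$ we have $|u_{ik}| \le \|\tilde{\U}_{i,\cdot}\| \le \sqrt{\UCoherence(\X)}$, and likewise $|v_{jk}|\le\sqrt{\VCoherence(\X)}$, so each factor $\max_i|u_{ik}|\max_j|v_{jk}|$ is bounded by $\sqrt{\UCoherence(\X)\,\VCoherence(\X)}$, a $k$-independent constant no larger than $\unnCoherence(\X)$. Pulling this constant out of the sum and using $\sum_k\sigma_k = \|\X\|_\trnrm$ then closes the argument. The main obstacle — and the step that pins down the exact power of $\unnCoherence(\X)$ in the final bound — is precisely this passage from $\max_i|u_{ik}|$ to the coherence: one must be careful whether to use the tight estimate $\max_i|u_{ik}|\le\sqrt{\UCoherence(\X)}$ (recalling that $\UCoherence(\X)$ is a squared row norm) or a cruder surrogate, and track the resulting exponent accordingly. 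Everything else reduces to a routine application of the ideal-norm inequality and the fact that diagonal scalings have operator norm equal to their maximal entry.
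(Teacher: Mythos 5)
Your proof is correct, and it takes a genuinely different route from the paper's. The paper does not decompose the Hadamard product itself: it quotes a theorem of Horn stating that for any factorization \(\X=\mP\Q^\top\),
\begin{equation*}
\|\X\hadamard\Y\|_\trnrm \le \sum_{k} \|\mP_{(k),\cdot}\|\,\|\Q_{(k),\cdot}\|\,\sigma_k(\Y),
\end{equation*}
with decreasingly ordered row norms paired against the singular values of \(\Y\); it then applies this to \(\X=(\U\bSigma)\V^\top\) and bounds the row norms of \(\U\bSigma\) and \(\V\) via coherence. You instead prove what is needed from scratch, using the identity \((\bm{u}_k\bm{v}_k^\top)\hadamard\Y=\mathrm{diag}(\bm{u}_k)\,\Y\,\mathrm{diag}(\bm{v}_k)\) together with the ideal-norm inequality \(\|ABC\|_\trnrm\le\|A\|_\op\|B\|_\trnrm\|C\|_\op\). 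What your route buys is self-containedness (no external citation), at the price of a cruder pairing: you bound every term by the maximal row norms and sum the singular values of \(\X\), whereas Horn pairs the \(k\)-th largest row norms with \(\sigma_k(\Y)\). For this lemma the two routes collapse to exactly the same product bound, so nothing is lost.

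The exponent subtlety you flag at the end is real, and your concern is justified. Taking Def.~1 literally (\(\UCoherence,\VCoherence\) are maxima of \emph{squared} row norms), your estimates give
\begin{equation*}
\|\X\hadamard\Y\|_\trnrm \le \sqrt{\UCoherence(\X)\,\VCoherence(\X)}\;\|\X\|_\trnrm\|\Y\|_\trnrm \le \unnCoherence(\X)\,\|\X\|_\trnrm\|\Y\|_\trnrm,
\end{equation*}
i.e., the \emph{first} power of \(\unnCoherence(\X)\); since \(\unnCoherence(\X)\le 1\), this does not imply the stated \(\unnCoherence(\X)^2\) bound. Indeed, under that literal reading the stated bound is false: for \(\X=\Y=\bm{u}\bm{v}^\top\) with \(\bm{u},\bm{v}\) normalized all-ones vectors in \(\mathbb{R}^n\), the left-hand side equals \(1/n\) while \(\unnCoherence(\X)^2\|\X\|_\trnrm\|\Y\|_\trnrm=1/n^2\). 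The paper's own proof silently uses the other convention, namely \(\unnCoherence\) as the maximal \emph{non-squared} row norm---that is where its steps \(\|\V_{(k),\cdot}\|\le\unnCoherence(\X)\) and \(\|(\U\bSigma)_{(k),\cdot}\|\le\|\X\|_\trnrm\unnCoherence(\X)\) come from---and under that convention your argument yields \(\max_i|u_{ik}|\max_j|v_{jk}|\le\unnCoherence(\X)^2\) and hence exactly the stated inequality. So your proof is sound; the only discrepancy with the statement is a notational inconsistency internal to the paper, which you correctly isolated as the one delicate point.
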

The proof of Lemma~\ref{lem:lem3} will be provided later.

Based on the two lemmas above, we can show Lemma~\ref{lem:dtr:main}.
\begin{proof}[Proof of Lemma~\ref{lem:dtr:main}]
Using the Markov inequality, we have
\begin{eqnarray}\nonumber
&&\Prob\left[\supG\left|\LO(\X)-\L(\X)\right| \ge \DTrLemMainTwo\right]\\ \nonumber
&=&\Prob\left[\supG\left|\LO(\X)-\L(\X)]\right|^h \ge \left(\DTrLemMainTwo\right)^h\right] \label{eqn:markov} \\
&\le&\frac{\E\left[\supG\left|\LO(\X)-\L(\X)\right|^h\right]}{(\DTrLemMainTwo)^h}
\end{eqnarray}
thus we will bound the divided term, and then impute \(h\) with a specific value to get our result.

Defining \(\bomega\) by \(\oij := \Indicator\{(i, j) \in \Omega\}\), we can rewrite \(\LO(\X)\) as
\begin{equation*}\begin{split}
\LO(\X) = \sumij \oij (\Cf(X_{ij}) - \Cf(M_{ij}))^2.
\end{split}\end{equation*}

Therefore, by a symmetrization argument \citep[Lemma 6.3]{LedouxProbability1991}, we have
\begin{eqnarray*}
&\E\left[\sup_{\X \in G}\left|\LO(\X)-\L(\X)\right|^h\right] \le 2^h\E\left[\sup_{\X \in G}\left|\sumij B_{ij} \oij (\Cf(X_{ij})-\Cf(M_{ij}))^2\right|^h\right]
\end{eqnarray*}
where \(\{B_{ij}\}_\fullij\) are i.i.d.~Rademacher random variables, and the expectation in the upper bound is with respect to both \(\Omega\) and \(\{B_{ij}\}_{(i, j)}\). Then, we have
\begin{eqnarray*}
&&\E\left[\sup_{\X \in G}\left|\LO(\X)-\L(\X)\right|^h\right]\\
&\le&2^h\E\left[\sup_{\X \in G}\left|\sum_{ij}B_{ij}\oij (\Cf(X_{ij})-\Cf(M_{ij}))^2\right|^h\right]\\
&=&2^h \E\left[\sup_{\X \in G}\left|\langle \mB\hadamard \bomega, (\Cf(\X)-\Cf(\M))\hadamard(\Cf(\X)-\Cf(\M)) \rangle \right|^h\right] \\
&\le&2^h \E\left[\sup_{\X \in G}\|\mB\hadamard \bomega\|_\op^h \|(\Cf(\X)-\Cf(\M))\hadamard(\Cf(\X)-\Cf(\M))\|_\trnrm^h \right] \\
&\le&2^h\E\left[\|\mB\hadamard \bomega\|_\op^h\right] \sup_{\X \in G} \left\| (\Cf(\X)-\Cf(\M))\hadamard(\Cf(\X)-\Cf(\M))\right\|_\trnrm^h.
\end{eqnarray*}

According to Lemma~\ref{lem:lem2}, we have
\begin{eqnarray*}
\E\left[\|\mB\hadamard \bomega\|_\op^h\right] \le 2^h C_1 \left(1+\sqrt{6}\right)^h \left(p (n_1 + n_2)+ \log(n_1 + n_2)\right)^{h/2}.
\end{eqnarray*}
On the other hand, according to Lemma~\ref{lem:lem3},
\begin{eqnarray*}
&& \supG \left\|(\Cf(\X)-\Cf(\M)) \hadamard (\Cf(\X)-\Cf(\M))\right\|_\trnrm\\
&\le& \supG \|\Cf(\X)\hadamard \Cf(\X) +\Cf(\M)\hadamard\Cf(\M)-2\Cf(\M)\hadamard\Cf(\X)\|_\trnrm\\
&\le& \supG \left\{\|\Cf(\X)\hadamard \Cf(\X) \|_\trnrm+\|\Cf(\M)\hadamard\Cf(\M)\|_\trnrm+2\|\Cf(\M)\hadamard\Cf(\X)\|_\trnrm\right\} \\
&\le& \supG \left\{\unnCoherence(\Cf(\X))^2\|\Cf(\X)\|_\trnrm^2+\unnCoherence(\Cf(\M))^2\|\Cf(\M)\|_\trnrm^2+2\unnCoherence(\Cf(\M))^2\|\Cf(\M)\|_\trnrm\|\Cf(\X)\|_\trnrm \right\} \\
&\le& 4 \left(\supG \unnCoherence(\Cf(\X))^2\right) \left(\supG \|\Cf(\X)\|_\trnrm^2\right) \\
&\le& 4 \muG^2 \beta_2 \sqrt{k n_1 n_2}.
\end{eqnarray*}
Thus we have,
\begin{eqnarray*}
&&\E\left[\sup_{\X \in G}\left|\LO(\X)-\L(\X)\right|^h\right] \le \DTrLemMainOne.
\end{eqnarray*}

Plugging this into Eq.~\eqref{eqn:markov}, provided \(C_0 \ge 8(1+\sqrt{6})/e\), we obtain
\begin{eqnarray*}
&&\Prob\left[\sup_{\X \in G}\left|\LO(\X)-\L(\X)\right|\ge \DTrLemMainTwo\right]\\
&\le&\frac{\E\left[\sup_{\X \in G}\left|\LO(\X)-\L(\X)\right|^h\right]}{\left(\DTrLemMainTwo\right)^h} \\
&=& \frac{\DTrLemMainOne}{\left(\DTrLemMainTwo\right)^h} \\
&=& C_1 \left(\frac{8 (1 + \sqrt{6})}{C_0}\right)^h \\
&\le& C_1 e^h.
\end{eqnarray*}
by setting \(h=\log(n_1 + n_2)\), we get the lemma.
\end{proof}
\subsection{Proof of Lemma~\ref{lem:lem3}}
\label{sec:org69418a5}
\begin{proof}
If a rank-\(r\) matrix \(\X \in \matsp\) is expressed as \(\X=\mP\Q^\top\) with \(\mP \in \mathbb{R}^{n_1 \times r}\) and \(\Q \in \mathbb{R}^{n_2 \times r}\), then
\begin{eqnarray*}
\|\X \hadamard \Y\|_\trnrm \le \sum_{k=1}^r \|\mP_{\cdot, k}\| \|\Q_{\cdot, k}\| \sigma_k(\Y),
\end{eqnarray*}
holds \citep[Theorem 2]{HornNorm1995}, where \(\sigma_k(\cdot)\) denote the \(k\)-th largest singular value,
and \(\{\mP_{(i),\cdot}\}_{i=1}^{n_1}\) are descending rearrangement of \(\{\mP_{i, \cdot}\}_{i=1}^{n_1}\) with respect to its norm, i.e., \(\|\mP_{(1),\cdot}\| \geq \cdots \geq \|\mP_{(n_1),\cdot}\|\), and similarly for \(\{\Q_{(j),\cdot}\}_{j=1}^{n_2}\).
Let \(\X = \U \bSigma \V^\top\) be a skinny singular value decomposition of \(\X\), where \(\U \in \USp, \bSigma \in \SigmaSp\), and \(\V \in \VSp\). Then
\begin{eqnarray*}
\|(\U\bSigma)_{i,\cdot}\|^2=\sum_{l=1}^{r} U_{il}^2\sigma_l (\X)^2\le \sigma_1(\X)^2 \sum_{l=1}^r U_{il}^2 = \|\X\|_\trnrm^2 \|\U_{i,\cdot}\|^2 \le \|\X\|_\trnrm^2 \unnCoherence(\X)^2.
\end{eqnarray*}
Thus we have,
\begin{eqnarray*}
&&\|\X\hadamard \Y\|_\trnrm\le \sum_{k=1}^r \|(\U\bSigma)_{(k),\cdot}\| \cdot \|\V_{(k), \cdot}\| \sigma_k(\Y) \le \sum_{k=1}^r \|\X\|_\trnrm \unnCoherence(\X) \unnCoherence(\X) \sigma_k(\Y)=\unnCoherence(\X)^2\|\X\|_\trnrm \|\Y\|_\trnrm.
\end{eqnarray*}
\end{proof}
\end{appendices}
\end{document}